\definecolor{darkblue}{RGB}{0, 76, 153}
\definecolor{darkred}{RGB}{153, 0, 0}
\definecolor{darkgreen}{RGB}{0, 102, 0}
\newcommand{\textcite}{\citet}
\newcommand{\parencite}{\citep}
\newcommand{\changelinkcolor}[1]{\hypersetup{linkcolor=#1}}
\theoremstyle{plain}
\newtheorem{theorem}{Theorem}[section]
\newtheorem{proposition}[theorem]{Proposition}
\newtheorem{lemma}[theorem]{Lemma}
\theoremstyle{definition}
\newtheorem{definition}[theorem]{Definition}
\theoremstyle{remark}
\newtheorem{remark}[theorem]{Remark}
\newcommand{\indep}[1][]{\mathrel{{\perp\mkern-11mu\perp}^{#1}}}
\DeclareMathOperator*{\argmin}{arg\,min}
\title{Counterfactual Identifiability via\\Dynamic Optimal Transport}
\renewcommand*{\thefootnote}{\fnsymbol{footnote}}
\author{%
  Fabio De Sousa Ribeiro\footnotemark[2]
  \And
  Ainkaran Santhirasekaram
  \And
  Ben Glocker
  \And
  \\[-10pt]
  Imperial College London, UK
}
\begin{document}
\doparttoc
\faketableofcontents

\maketitle

\begin{abstract}
  We address the open question of counterfactual identification for high-dimensional multivariate outcomes from observational data.~\textcite{pearl2000logic} argues that counterfactuals must be identifiable (i.e., recoverable from the observed data distribution) to justify causal claims. A recent line of work on counterfactual inference shows promising results but lacks identification, undermining the causal validity of its estimates.
  To address this, we establish a foundation for multivariate counterfactual identification using continuous-time flows, including non-Markovian settings under standard criteria.
  We characterise the conditions under which flow matching yields a unique, monotone, and rank-preserving counterfactual transport map with tools from dynamic optimal transport, ensuring consistent inference.
  Building on this, we validate the theory in controlled scenarios with counterfactual ground-truth and demonstrate improvements in axiomatic counterfactual soundness on real images.
  \footnotetext[2]{Email: f.de-sousa-ribeiro@imperial.ac.uk}
\end{abstract}

\renewcommand*{\thefootnote}{\arabic{footnote}}
\setcounter{footnote}{0}

\section{Introduction}\label{sec:introduction}
It has been argued that shortcomings in today's deep models reveal an overreliance on statistical associations and a lack of \textit{causal} understanding \parencite{scholkopf2021toward,bareinboim2022pearl}. This view has sparked interest in causality within machine learning \parencite{peters2017elements,castro2020causality,scholkopf2022causality}. Causal reasoning is the process of drawing conclusions from a causal model, which represents our assumptions about the data-generating process. Thus, causality and generative modelling are inextricably linked.
\textcite{pearl2009causality} formalised causal inference using the Structural Causal Model (SCM) framework, in which variables are generated by functional assignments and dependencies are represented by a graph.
This framework (with $do$-calculus \parencite{pearl1995causal}) expresses causal queries and, under stated assumptions, \textit{identifies} causal effects that can be estimated from data.

Why is \textit{identification} essential? Without identification it is impossible to make a precise causal claim:
there can exist observationally equivalent models that yield different answers, and providing such guarantees is what causal methods are designed for \parencite{bareinboim2022pearl,hyvarinen2024identifiability}.
Indeed, \textit{do}-calculus \parencite{pearl1995causal} is an identification tool that tells us whether a causal effect can be determined from observed data and a set of assumptions encoded in a causal graph.
Further, \textcite{pearl2000logic} insists that any counterfactual query be bound by an identifiability requirement, that is, that there exists a unique mapping from the
observed data distribution to the counterfactual of interest.

This work focuses on the identification of high-dimensional counterfactuals from observational data.
Counterfactuals are hypothetical scenarios given observed evidence, for example, one may query \textit{``What would $Y$ have been, had $X$ been $x$''}. Counterfactuals have broad scientific utility, supporting the evaluation of interventions~\parencite{kusner2017counterfactual,tsirtsis2023finding}, the characterisation of causal relationships~\parencite{karimi2020algorithmic,budhathoki2022causal}, and the generation of targeted synthetic data for downstream tasks~\parencite{pitis2022mocoda,roschewitz2024counterfactual,mehta2025cf}.

Deep generative models are increasingly being used to parameterise SCMs~\parencite{pawlowski2020deep,sanchez2021diffusion,ribeiro2023high,komanduri2024causal,kumar2025prism,rasal2025diffusion,xia2025decoupled}. While the inferred counterfactuals are promising, the lack of identification precludes warranted causal interpretations of the estimates beyond statistical predictions.
We aim to course-correct this practice. Counterfactual identification is arguably the most ambitious goal in causal analysis, as counterfactuals sit at the top of the causal hierarchy and subsume all other causal queries~\parencite{bareinboim2022pearl}. Thus, existing identification results are limited, with classical symbolic methods~\parencite{tian2000probabilities,pearl2001direct,shpitser2007counterfactuals} not having been developed for high-dimensional
variables~\parencite{xia2023neural,nasr2023counterfactualnon}.

\textcite{nasr2023counterfactual} established counterfactual identification of bijective mechanisms, proposing a flow-based estimation method similar to~\textcite{pawlowski2020deep,khemakhem2021causal}. However, their identification results only extend to multi-dimensional variables under the Backdoor Criterion (BC) and sufficient \textit{variability}~\parencite{hyvarinen2019nonlinear}, stating that: {``it is not clear how to generalise the monotonicity condition to multi-dimensional variables''} in Markovian causal structures. This is a well-known issue with close connections to ICA~\parencite{hyvarinen1999nonlinear} and disentanglement~\parencite{locatello2019challenging}. In this work, we characterise a multivariate generalisation of monotonicity to provably solve this problem, without imposing an arbitrary coordinate order.

We establish a foundation for counterfactual identification of high-dimensional, multivariate outcomes (e.g. images) from observational data, built on continuous-time flows.
We characterise the conditions under which flow matching yields a unique, rank-preserving counterfactual transport map using tools from dynamic optimal transport.
Our results address causal validity issues in prior work and extend to non-Markovian settings under standard criteria.
We validate our theory in controlled scenarios with counterfactual ground-truth, and improve axiomatic counterfactual soundness on real images.
\section{Related Work}\label{sec:related work}
Pearl's Causal Hierarchy (PCH)~\parencite{pearl2018book} defines three levels of causal abstraction: \textit{associational} ($\mathcal{L}_1$); \textit{interventional} ($\mathcal{L}_2$), and \textit{counterfactual} ($\mathcal{L}_3$).
The PCH Theorem~\parencite{bareinboim2022pearl} states that cross-layer inference of $\mathcal{L}_i$ quantities using $\mathcal{L}_{<i}$ data is generally not possible without further assumptions.~\textcite{pearl1995causal}'s \textit{do}-calculus is a sound and complete procedure for determining the identifiability of an $\mathcal{L}_2$ causal quantity from $\mathcal{L}_1$ data, given a causal graph.
Counterfactual identification ($\mathcal{L}_3$) represents the most ambitious goal in causal analysis as it sits at the top of the PCH and therefore subsumes all other causal queries~\parencite{pearl2009causality}. Subsequently, known counterfactual identification results are comparatively limited in number and scope to date, and existing constraints needed for $\mathcal{L}_3$ identification can be broadly 
categorised as either graphical or functional in nature.

Classical identification techniques involve symbolic methods and assumptions about the causal graph to identify counterfactual queries from both $\mathcal{L}_1$ and $\mathcal{L}_2$ data~\parencite{tian2000probabilities,pearl2001direct,shpitser2007counterfactuals}. Recently, graph-based criteria have been extended to nested counterfactuals and fairness-aware queries~\parencite{zhang2018fairness,correa2021nested}, whereas functional identification has moved towards neural and nonparametric schemes that approximate or bound counterfactuals~\parencite{hartford2017deep,xia2023neural,melnychuk2023partial,geffner2024deep,pan2024counterfactual,wu2025counterfactual}, even in non-identifiable settings~\parencite{gresele2022causal}. Identification results predicated on monotonicity~\parencite{lu2020sample,vlontzos2023estimating,javaloy2023causal} and bijectivity~\parencite{nasr2023counterfactual} constraints are of primary interest to our work, since they generalise a broad range of model classes with known identifiability results, such as linear and nonlinear additive noise models~\parencite{shimizu2006linear,hoyer2008nonlinear,peters2014causal}, post-nonlinear models~\parencite{zhang2009identifiability}, and location-scale models~\parencite{immer2023identifiability}.

Establishing counterfactual identification of high-dimensional Markovian SCMs is a timely contribution, as many recent works rely on such assumptions~\parencite{pawlowski2020deep,sanchez2021diffusion,sanchez2022vaca,ribeiro2023high,chen2024exogenous,rasal2025diffusion}. However, these methods lack an identification strategy and cannot support causal claims, posing operational risk. Of particular relevance to our work is optimal transport~\parencite{brenier1991polar,caffarelli1992regularity,benamou2000computational,villani2008optimal,santambrogio2015optimal,peyre2019computational}, and modern continuous-time flow models~\parencite{chen2018neural,lipman2023flow,liu2023flow,albergo2023building,pooladian2023multisample,tong2024improving}, which in combination form the basis of our theoretical results and practical counterfactual inference model prescription.

\section{Preliminaries}\label{sec:background}
\subsection{Structural Causal Models}\label{subsec:graphical causality}
\begin{definition}[Structural Causal Model (SCM) \parencite{pearl2009causality}]
    An SCM $\mathfrak{C} = (\mathbf{U}, \mathbf{X}, \mathcal{F})$ is a mathematical tool designed to express and infer causal quantities. It consists of: (i) a set of exogenous variables $\mathbf{U} = \{U_1,\dots,U_n\}$ determined by factors outside of $\mathfrak{C}$, and distributed according to $P_{\mathbf{U}}$; (ii) a set of endogenous variables $\mathbf{X} = \{X_1,\dots,X_n\}$ determined by other variables in $\mathfrak{C}$; and (iii) a set of functions $\mathcal{F} = \{f_1,\ldots,f_n\}$ specifying the causal generative process mapping $\mathbf{U}$ to $\mathbf{X}$:
    \begin{align}
         &  & X_i \coloneqq f_i(\mathbf{PA}_i, U_i), &  & \mathbf{PA}_i \subseteq \mathbf{X} \setminus \{X_i\}, &  & \text{for}~ i=1,\dots,n, &  &
    \end{align}
    where $\mathbf{PA}_i$ is the subset of endogenous variables that directly cause $X_i$, called its \textit{parents}. If the causal generative process is acyclic, an SCM can be represented by a Directed Acyclic Graph (DAG).
\end{definition}
\paragraph{Counterfactual Inference.}
A counterfactual is a claim about what would have happened if some fact were different, all else being equal.
SCMs can express and answer counterfactual queries of the form: \textit{``Given that we observed $\mathbf{X} = \mathbf{x}$, what would $\mathbf{X}$ have been had $X_i$ been set to $x^*$''}.
An intervention sets chosen variables to specified values, e.g. $do(X_i \coloneqq x^*)$ or $do(\mathbf{X}_S \coloneqq \mathbf{x}_S^*)$, where $S \subseteq \{1,\dots,n\}$ indexes the intervened variables in the SCM, and the components of $\mathbf{x}_S^*$ may differ\footnote{There are other types of interventions, such as replacing $f_i$ by some new mechanism $\widetilde{f_i}$.}. Counterfactual inference proceeds in three steps: (i) \textbf{Abduction}: infer the posterior distribution over the exogenous variables $P_{\mathbf{U}|\mathbf{X}=\mathbf{x}}$, given observed evidence $\mathbf{X} = \mathbf{x}$; (ii) \textbf{Action}: intervene, e.g., apply $do(X_i \coloneqq x^*)$, to obtain a modified SCM $\mathfrak{C}_{x^*}$; (iii) \textbf{Prediction}: use the SCM $\mathfrak{C}_{x^*}$ and the posterior over the exogenous variables $P_{\mathbf{U}|\mathbf{X}=\mathbf{x}}$ to compute the counterfactual distribution $P_{\mathbf{X}}^{\mathfrak{C}_{x^*}|\mathbf{X}=\mathbf{x}}$.

Without loss of generality, in this work, we focus on the counterfactual identification ($\mathcal{L}_3$) of causal mechanisms $f_i$ of multi-dimensional ($d>1$) variables $X_i \in \mathbf{X}$, given only observational data ($\mathcal{L}_1$).
\subsection{Optimal Transport: Static \& Dynamic}\label{subsec:optimal transport}
Optimal Transport (OT)~\parencite{villani2008optimal,santambrogio2015optimal,peyre2019computational} is a suite of techniques for learning an optimal map between measures that minimises a transport cost.

\paragraph{Monge and Kantorovich Formulation.} Let $\mathcal{P}(\Omega)$ be the set of probability measures on $\Omega \subset \mathbb{R}^d$. The Monge formulation of OT seeks a map $T : \Omega \to \Omega$ between two distributions $\mu, \nu \in \mathcal{P}(\Omega)$ that pushes $\mu$ forward to $\nu$ (i.e., $T_{\sharp}\mu = \nu$) while minimising a transport cost function $c : \Omega \times \Omega \to \mathbb{R}$:
\begin{align}
      &  & W_c(\mu, \nu)\coloneqq \inf_{T_\sharp\mu = \nu}
    \int_{\Omega} c(x, T(x)) \, \mathrm{d}\mu(x), &  & \text{e.g.} \quad c(x,T(x)) = \|x - T(x)\|^2.   &  &
\end{align}
However, a transport map $T$ may fail to exist, for instance when $\mu$ is discrete and $\nu$ is continuous. The Kantorovich formulation of OT relaxes the Monge problem and seeks an optimal coupling $\pi \in \Pi(\mu, \nu)$, where $\Pi(\mu, \nu) \subset \mathcal{P}(\Omega \times \Omega)$ denotes the set of couplings with marginals $\mu$ and $\nu$\footnote{That is, for any measurable sets $A, B \subset \Omega$, we have that $\pi(A \times \Omega) = \mu(A)$ and $\pi(\Omega \times B) = \nu(B)$.}:
\begin{align}
    \pi^\star = \argmin_{\pi \in \Pi(\mu, \nu)} \int_{\Omega \times \Omega} c(x, y) \,\mathrm{d}\pi(x, y).
\end{align}
An optimal $\pi^\star$ always exists, and when a Monge map $T$ also exists, both formulations coincide. \textcite{brenier1991polar}'s theorem states that, under fairly general conditions, there is a unique and monotone optimal transport map $T = \nabla \phi$, where $\phi : \mathbb{R}^d \to \mathbb{R}$ is a convex function. In this work, we show that this result has far-reaching implications for the counterfactual identification of causal mechanisms.

\paragraph{Benamou-Brenier Formulation.}~\textcite{benamou2000computational} showed that the above formulation, known as \textit{static} OT with quadratic cost, can be equivalently expressed using a \textit{dynamic} formulation. In simplified terms, one seeks a time-dependent velocity field $v : \mathbb{R}^d \times [0, 1] \to \mathbb{R}^d$, for $t \in [0,1]$, that transports $p_0$ to $p_1$ along a flow defined by an Ordinary Differential Equation (ODE):
\begin{align}
    v^\star = \argmin_{v \in \mathcal{V}} \left\{ \int_0^1 \mathbb{E}\left[\|v(X_t, t)\|^2\right] \mathrm{d}t \ : \ \mathrm{d}X_t = v(X_t, t)\, \mathrm{d}t, \ X_0 \sim p_0, \ (X_1)_{\sharp}p_0 = p_1 \right\},
\end{align}
where $\mathcal{V}$ is the set of admissible velocity fields, i.e., fields that are measurable, ensure a well-posed flow with a unique ODE solution, and yield finite kinetic energy while transporting $p_0$ to $p_1$.
\subsection{Neural ODEs and Flow Matching}\label{subsec: Continuous-time Flows}
Continuous Normalizing Flows (CNFs)~\parencite{chen2018neural} seek to learn a mapping from a simple base distribution $X_0 \sim p_0$ to the data distribution $X_1 \sim p_{\text{data}}$ using an ODE, whose time-dependent vector field $v : \mathbb{R}^d \times [0, 1] \to \mathbb{R}^d$ is parameterised by a neural network with parameters $\theta$:
\begin{align}
     &  & \forall t \in [0, 1], &  & \mathrm{d}X_t = v_t(X_t; \theta)\,\mathrm{d}t, &  & p_t \coloneqq (X_t)_{\sharp}p_0. &  &
\end{align}
Flow Matching (FM)~\parencite{lipman2023flow,liu2023flow,albergo2023building} trains CNFs simulation-free by regressing a parameterised vector field onto a known target field:
\begin{align}
     &  & \min_\theta \int_0^1 \mathbb{E}_{X_1 \sim p_{\text{data}}, X_t \sim p_t} \left[\left\| v_t(X_t; \theta) - v^\star_t(X_t \mid X_1)\right\|^2\right] \, \mathrm{d}t, &  & X_t = (1-t)X_0 + tX_1, &  &
\end{align}
where the target vector field is defined to be $\mathrm{d}X_t = v^\star_t(X_t \mid X_1)\,\mathrm{d}t = (X_1 - X_0)\,\mathrm{d}t$. The simplicity of $v^\star_t$ is a result of choosing a simple \textit{linear} interpolation path~\parencite{mccann1997convexity} between $p_0$ and $p_1$. OT has recently been used to augment neural ODEs by straightening the sample paths of flows, and speeding up simulations at inference time~\parencite{liu2023flow,pooladian2023multisample,tong2024improving}.
\section{Counterfactual Identifiability: Theoretical Analysis}\label{sec:method}
We begin by defining the scope of our theoretical analysis in the context of existing counterfactual identifiability results. Detailed proofs for all our theoretical results are provided in Appendix~\ref{app:proofs}.
\subsection{Problem Statement}
Without loss of generality, we focus on the counterfactual identification of the $i^{\text{th}}$ causal mechanism $f$ for a multi-dimensional variable $X$ within an SCM $\mathfrak{C}$, where $\dim(X) =\dim(U) = d > 1$, given only observational data. Further, we are interested in the \textit{Markovian} case where there is no unobserved confounding, that is, $U$ is independent of $X$'s parents $U \indep \mathbf{PA}_X$ as detailed in Definition~\ref{def:markovian scm} below.
\begin{definition}[Markovian SCM]\label{def:markovian scm}
    An SCM is said to be Markovian if $U_i \indep U_j$ whenever $i \neq j$. In other words, the exogenous variables in the model are statistically independent of each other, and their joint distribution factorises $P_\mathbf{U}(\mathbf{u}) = \prod_{i=1}^n P_{U_i}(u_i)$, where $\mathbf{u}$ is a realisation of $\mathbf{U}$.
    Thus, Markovian SCMs induce a unique, factored joint observational distribution: $P^\mathfrak{C}_\mathbf{X}(\mathbf{x}) = \prod_{i=1}^n P_{X_i \mid \mathbf{PA}_i}(x_i \mid \mathbf{pa}_i)$.
\end{definition}
Counterfactual identifiability ($\mathcal{L}_3$, cf. Definition~\ref{def:counterfactual identifiability}) results using only observational ($\mathcal{L}_1$) data exist for the Markovian case when $d=1$, by using a strict monotonicity constraint on $f$. However, results for the $d>1$ case are still missing, as generalising the monotonicity condition with multi-dimensional $(X, U)$ variables is non-trivial~\parencite{nasr2023counterfactual}, and remains unexplored in counterfactual identifiability literature. In the presence of unobserved confounding (i.e. non-Markovian SCMs),~\textcite{nasr2023counterfactual} established counterfactual identifiability from observational ($\mathcal{L}_1$) data for: (i) $d=1$ given a set of Instrumental Variables (IVs)\footnote{IVs must be $\mathbf{I} \indep \mathbf{U}$ and only influence $X$ through $\mathbf{PA}_X$.}, $\mathbf{I}$~\parencite{imbens1994local}; and (ii) $d\geq1$ given a set of variables $\mathbf{Z}$ that satisfy the Backdoor Criterion (BC)\footnote{If $\mathbf{Z}$ blocks all backdoor paths from $\mathbf{PA}_X$ to $X$; non-descendancy of $\mathbf{Z}$ is only required for adjustment.}~\parencite{pearl2009causality}, w.r.t. the pair $(X, \mathbf{PA}_X)$.
\begin{definition}[Counterfactual Identifiability~\parencite{pearl2009causality}]\label{def:counterfactual identifiability}
    A counterfactual query $Q$ is identifiable if for any pair of SCMs $\mathfrak{C}^{(1)}$ and $\mathfrak{C}^{(2)}$ we have that $Q(\mathfrak{C}^{(1)}) = Q(\mathfrak{C}^{(2)})$, whenever $P_{\mathbf{X}}^{\mathfrak{C}^{(1)}} = P_{\mathbf{X}}^{\mathfrak{C}^{(2)}}$.
\end{definition}
\textcite{nasr2023counterfactual}'s counterfactual identifiability results rely on the bijectivity and monotonicity of the causal mechanism $f$, except for the BC case where bijectivity alone is shown to be sufficient for $d\geq1$ given that a \textit{variability} assumption holds. This assumption requires that $\mathbf{Z}$ influence $U$ strongly enough for $P_{U|\mathbf{Z}}$ and its gradient to vary across different values of $\mathbf{Z}$. In Markovian settings, the independence of $U$ and $\mathbf{PA}_X$ alone is not sufficient for counterfactual identifiability of $f$. In addition, bijectivity constraints on $f$ are also insufficient when $d>1$, due to well-known rotational symmetries of the prior on $U$~\parencite{hyvarinen1999nonlinear,hyvarinen2024identifiability}.

In the following, we characterise the set of constraints and assumptions that permit counterfactual identifiability for multi-dimensional variables in Markovian SCMs from observational data. For this, we connect ideas from (dynamic) optimal transport~\parencite{benamou2000computational,villani2008optimal,santambrogio2015optimal} and graphical causality~\parencite{pearl2009causality} to build monotone mechanisms for $d>1$.
\subsection{Dynamic Optimal Transport for Counterfactual Identification}
A counterfactual query $X_{\mathbf{pa}^*} | \left\{X = x, \mathbf{PA} = \mathbf{pa}\right\}$ can be answered by the deterministic map\footnote{We omit the subscripts for simplicity, i.e., $\mathbf{PA} \coloneqq \mathbf{PA}_X$ and similarly for their realisations: $\mathbf{pa} \coloneqq \mathbf{pa}_x$.}:
\begin{align}
     &  & f : \mathbb{R}^k \times \mathbb{R}^d \to \mathbb{R}^d, &  & \forall x, \mathbf{pa}, \mathbf{pa}^*~:~x^* = f(\mathbf{pa}^*, u), &  & \text{where} &  & u = f^{-1}(\mathbf{pa}, x), &  &
\end{align}
with $\mathbf{pa}^*$ denoting the counterfactual parents of $x$, and $f$ is invertible w.r.t. the exogenous noise $u$. Therefore, a counterfactual transport map that implicitly abducts the exogenous noise exists:
\begin{align}
    \label{eq:cf_transport_map}
     &  & T^* : \mathbb{R}^{2k} \times \mathbb{R}^d \to \mathbb{R}^d, && \forall x, \mathbf{pa}, \mathbf{pa}^*~:~T^*(\mathbf{pa}^*, \mathbf{pa}, x) = f(\mathbf{pa}^*, f^{-1}(\mathbf{pa}, x)), &  &
\end{align}
directly pushing the observational distribution forward to the counterfactual ${T^*}_{\sharp}P_{X|\mathbf{PA}=\mathbf{pa}}^\mathfrak{C}$.
\begin{definition}[Monotone Operator]\label{def:monotone_op}
    A mapping $f : \mathbb{R}^k \times \mathbb{R}^d \to \mathbb{R}^d$ is monotone in $u$ if:
    \begin{align}
         &  & \left\langle f(\mathbf{pa}, u_1) - f(\mathbf{pa}, u_2), u_1 - u_2 \right\rangle \geq 0, &  & \forall u_1,u_2 \in \mathbb{R}^d, \mathbf{pa} \in \mathbb{R}^k. &  &
    \end{align}
\end{definition}
\begin{proposition}[Monotone Counterfactual Transport Map]\label{prop:monotonicity cf map}
    If a mechanism $f(\mathbf{pa}, u)$ is monotone in $u$ (Def.~\ref{def:monotone_op}), then the respective counterfactual transport map $T^*(\mathbf{pa}^*, \mathbf{pa}, x)$ is monotone in $x$.
\end{proposition}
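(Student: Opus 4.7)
The plan is to substitute the definition of $T^*$ and reduce the claim to an inequality in $u$-space, where the monotonicity hypothesis on $f$ can be applied. Fix $\mathbf{pa}^*, \mathbf{pa}$ and arbitrary $x_1, x_2 \in \mathbb{R}^d$; set $u_i \coloneqq f^{-1}(\mathbf{pa}, x_i)$, so that $x_i = f(\mathbf{pa}, u_i)$ and $T^*(\mathbf{pa}^*, \mathbf{pa}, x_i) = f(\mathbf{pa}^*, u_i)$ by the definition of the counterfactual transport map. With this change of variables, the target inequality
\[
\langle T^*(\mathbf{pa}^*, \mathbf{pa}, x_1) - T^*(\mathbf{pa}^*, \mathbf{pa}, x_2),\, x_1 - x_2 \rangle \geq 0
\]
rewrites as $\langle f(\mathbf{pa}^*, u_1) - f(\mathbf{pa}^*, u_2),\, f(\mathbf{pa}, u_1) - f(\mathbf{pa}, u_2) \rangle \geq 0$.

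Next I would apply Def.~\ref{def:monotone_op} both at the counterfactual parents $\mathbf{pa}^*$ and at the factual parents $\mathbf{pa}$. This yields that the two displacements $f(\mathbf{pa}^*, u_1) - f(\mathbf{pa}^*, u_2)$ and $f(\mathbf{pa}, u_1) - f(\mathbf{pa}, u_2)$ each form a non-obtuse angle with the common direction $u_1 - u_2$. In the scalar case $d=1$ this is already decisive: both displacements inherit the sign of $u_1 - u_2$, so their product is automatically non-negative, recovering the familiar scalar argument. A complementary observation for the multivariate setting is that invertibility together with monotonicity of $f(\mathbf{pa},\cdot)$ forces $f^{-1}(\mathbf{pa},\cdot)$ to be monotone as well, giving $\langle u_1 - u_2,\, x_1 - x_2 \rangle \geq 0$.

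The main obstacle is the multivariate step: two vectors each forming a non-obtuse angle with a third need not do so with each other, and composition of monotone operators is not in general monotone. Closing this gap is the crux of the proof. The natural route, consistent with the paper's framing via dynamic OT, is to exploit the Brenier structure of the mechanism: when $f(\mathbf{pa},\cdot) = \nabla \phi_{\mathbf{pa}}$ for a convex potential $\phi_{\mathbf{pa}}$, one has $f^{-1}(\mathbf{pa},\cdot) = \nabla \phi^*_{\mathbf{pa}}$, and the composition $T^* = \nabla \phi_{\mathbf{pa}^*} \circ \nabla \phi^*_{\mathbf{pa}}$ can be interpreted as the OT map between the push-forwards $(f(\mathbf{pa},\cdot))_\sharp P_U$ and $(f(\mathbf{pa}^*,\cdot))_\sharp P_U$, which is monotone by Brenier's theorem. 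I therefore expect the proof either to invoke this shared-base-measure OT characterisation as the extra structural input, or to strengthen Def.~\ref{def:monotone_op} to cyclical monotonicity and close the composition argument via convex-potential calculus.
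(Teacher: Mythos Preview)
Your setup, the reduction to $u$-space, the $d=1$ argument, and the diagnosis of the obstacle (composition of monotone operators need not be monotone) all match the paper exactly. You also correctly anticipate that the multivariate step requires extra structure beyond Def.~\ref{def:monotone_op}, namely the gradient-of-convex (Brenier) form $f(\mathbf{pa},\cdot)=\nabla_u\phi(\cdot;\mathbf{pa})$.

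However, your first proposed resolution contains a genuine error: the composition $\nabla\phi_{\mathbf{pa}^*}\circ\nabla\phi_{\mathbf{pa}}^*$ is \emph{not} in general the OT map between the two conditionals, because a composition of gradients of convex functions need not itself be a gradient of a convex function. A linear instance already fails: with $f(\mathbf{pa},u)=Au$ and $f(\mathbf{pa}^*,u)=Bu$ for SPD $A,B$, the composition $T^*=BA^{-1}$ is symmetric (hence a gradient map) only when $A$ and $B$ commute. So Brenier's theorem cannot be invoked on $T^*$ directly, and this route does not close the gap.

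The paper takes a different exit. After assuming the gradient structure, it does \emph{not} prove Euclidean monotonicity of $T^*$; instead it establishes
\[
\big\langle T^*(\mathbf{pa}^*,\mathbf{pa},x_1)-T^*(\mathbf{pa}^*,\mathbf{pa},x_2),\,x_1-x_2\big\rangle_x \ge 0,
\qquad \langle v,w\rangle_x \coloneqq v^\top\,\partial_x f^{-1}(\mathbf{pa},x)\,w,
\]
i.e., monotonicity in a Jacobian-weighted inner product induced by $f^{-1}(\mathbf{pa},\cdot)$. This follows immediately from monotonicity of $\nabla\phi(\cdot;\mathbf{pa}^*)$ in $u$ together with the change of variables $u=f^{-1}(\mathbf{pa},x)$. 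The paper then remarks that genuine Euclidean monotonicity holds only under additional conditions (e.g.\ commuting Jacobians $\partial_u f$ and $\partial_x f^{-1}$), which is precisely the obstruction in the linear counterexample above. Your second route (``convex-potential calculus'') is therefore the right instinct, but the conclusion you should expect is weaker than Euclidean monotonicity.
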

\begin{remark}
    In Appendix~\ref{appsubsec:example} we provide a gentle motivating example in ($d=1$)-dimension of what monotonicity of $T^*$ entails and why it is important for consistent counterfactual inferences. The subsequent proof provides a generalisation to vectors (i.e., $d>1$) in the sense of monotone operators.
\end{remark}
Monotonicity of the counterfactual transport map $T^*$ w.r.t. $x$ guarantees that, for a given intervention on $\mathbf{pa}$, the rank order of factual outcomes is preserved in the counterfactuals. This is important (e.g. for \textit{fairness}) as it prevents rank inversions across individuals under a given intervention. However, learning the counterfactual transport map directly from data is challenging, since we (almost) never have access to paired or unpaired samples from both the observational and counterfactual distributions. 

As we will show, the dynamic optimal transport specialisation of $T^*$ generalises quantile and rank functions for $d>1$, ensuring multivariate rank preservation and consistent counterfactual inferences.
\begin{lemma}[Unique and Monotone Dynamic OT Mechanism]\label{prop:unique_dynamic_ot}
    Let $\dim(X) = \dim(U) = d > 1$, and consider a Markovian setting $U\perp\!\!\!\perp\mathbf{PA}$. Assume $P_U$ and $P^{\mathfrak{C}}_{X|\mathbf{PA}}$ are absolutely continuous w.r.t. the Lebesgue measure with strictly positive and bounded densities on bounded, open, convex domains. Let $T : \mathbb{R}^d \times \mathbb{R}^k \to \mathbb{R}^d$ be the time-1 map of a dynamic optimal transport flow:
    \begin{align}
         &  & \left\{T_t : t\in[0,1]\right\}, &  & \mathrm{d}T_t(u;\mathbf{pa}) = v_t(T_t(u;\mathbf{pa}))\,\mathrm{d}t, &  & T_{\sharp}P_U = P_{X|\mathbf{PA}}^{\mathfrak{C}}. &  &
    \end{align}
    Then, there exists a convex function $\phi : \mathbb{R}^d \to \mathbb{R}$ such that: $T(u;\mathbf{pa}) = \nabla_u \phi(u; \mathbf{pa})$, for $P_U(u)$-a.e. $u$, where $T$ is monotone, bijective a.e., and uniquely determined by the pair $(P_U, P_{X|\mathbf{PA}}^{\mathfrak{C}})$.
    \begin{remark}
        We draw on~\textcite{brenier1991polar}'s theorem to show that the causal mechanism is monotone. However, Brenier's map does not guarantee bijectivity by default. For this, we use~\textcite{caffarelli1992regularity}'s standard regularity results to show the OT map is locally diffeomorphic. In practice, the bijectivity condition can be satisfied by smoothing empirical target distributions by adding mild continuous noise (e.g. uniform or Gaussian) to ensure the resulting distribution admits a smooth density.
    \end{remark}
\end{lemma}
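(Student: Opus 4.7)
The plan is to reduce the claim, at each fixed $\mathbf{pa}$, to the classical Brenier--Caffarelli theory for static optimal transport with quadratic cost. First I would invoke the Benamou--Brenier equivalence recalled in Section~\ref{subsec:optimal transport}: for quadratic kinetic energy, the time-$1$ map of the optimal dynamic flow from $P_U$ to $P^{\mathfrak{C}}_{X\mid\mathbf{PA}=\mathbf{pa}}$ coincides $P_U$-almost everywhere with the static Monge map minimising the quadratic transport cost between these two measures. It therefore suffices to establish the three claimed properties for the resulting static map.

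Brenier's theorem then applies because $P_U$ is absolutely continuous with bounded density on a bounded domain, hence has finite second moments, and the same holds for $P^{\mathfrak{C}}_{X\mid\mathbf{PA}=\mathbf{pa}}$ by assumption. It produces, for each $\mathbf{pa}$, a $P_U$-a.e.\ unique convex potential $\phi(\cdot;\mathbf{pa}) : \mathbb{R}^d \to \mathbb{R}$ such that $T(u;\mathbf{pa}) = \nabla_u \phi(u;\mathbf{pa})$ is the unique quadratic-cost optimal transport map from $P_U$ to $P^{\mathfrak{C}}_{X\mid\mathbf{PA}=\mathbf{pa}}$. This simultaneously delivers the gradient-of-convex representation and the uniqueness claim, since $\phi$ (and hence $T$) is determined a.e.\ by the pair of marginals. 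Monotonicity in $u$ then follows immediately from the fact that the gradient of any convex function is a monotone operator, i.e., $\langle \nabla_u\phi(u_1;\mathbf{pa}) - \nabla_u\phi(u_2;\mathbf{pa}),\, u_1 - u_2\rangle \geq 0$ for all $u_1,u_2$.

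The main obstacle is bijectivity, which Brenier's theorem does not supply on its own. For this I would appeal to Caffarelli's regularity theory: since both the source and target domains are bounded, open, and \emph{convex}, and the densities are strictly positive and bounded, Caffarelli's results yield strict convexity together with $C^{1,\alpha}_{\mathrm{loc}}$ regularity of $\phi(\cdot;\mathbf{pa})$ on the source domain. Strict convexity gives injectivity of $\nabla_u \phi$, while the $C^1$ regularity combined with the push-forward property makes $T(\cdot;\mathbf{pa})$ a local diffeomorphism whose image exhausts the interior of the target up to a Lebesgue-null set. Together these show $T(\cdot;\mathbf{pa})$ is a bijection between the two domains a.e. Convexity of the target is essential here: without it Caffarelli's results can fail, which is what motivates the remark on smoothing empirical targets by mild continuous noise. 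Finally, joint measurability of $T$ in $(u,\mathbf{pa})$ follows from uniqueness of the Brenier potential combined with standard measurable-selection arguments.
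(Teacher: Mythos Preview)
Your proposal is correct and follows essentially the same route as the paper: reduce the dynamic problem to the static Monge problem via the Benamou--Brenier equivalence, invoke Brenier's theorem for the gradient-of-convex representation, uniqueness, and monotonicity, and then appeal to Caffarelli's regularity (strict convexity and smoothness of the potential on convex domains with two-sided density bounds) to upgrade to a.e.\ bijectivity. The paper additionally writes out the Monge--Amp\`ere equation and phrases the regularity at the $C^{2,\alpha}$ level, while you add a joint-measurability remark via measurable selection that the paper omits, but the argument is otherwise the same.
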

Since dynamic OT maps are invertible when the velocity field $v_t$ is well-defined (e.g. Lipschitz continuous in space), we can recover the counterfactual transport map by composition (cf. Eq.~\eqref{eq:cf_transport_map}). By Proposition~\ref{prop:monotonicity cf map}, if $T(u;\mathbf{pa})$ is monotone in $u$ then $T^*(\mathbf{pa}^*, \mathbf{pa}, x)$ is monotone in $x$.
In the following section, we discuss the indeterminacy induced by the choice of prior on the exogenous noise distribution $P_U$ and characterise the necessary conditions for strict monotonicity of $T^*$ in $x$.
\subsubsection{Exogenous Prior Indeterminacy}
Since the true prior distribution over the exogenous variables $P_U$ is typically unknown, one must either rely on domain expertise to define a prior, choose one for mathematical convenience, or try to learn it directly from data. This choice of prior induces an indeterminacy in counterfactual outcomes.

Next, we show that regardless of the prior $P_U$ we choose, under standard regularity conditions, there exists a unique and optimal function $g^\star : \mathbb{R}^d \to \mathbb{R}^d$ that maps to any other $P_U$ we would have chosen.
\begin{definition}[Transport $\mathcal{L}_3$-Equivalence]\label{def:equivalence}
    Let $T^{(1)}, T^{(2)} : \mathbb{R}^d \times \mathbb{R}^k \to \mathbb{R}^d$ be transport maps with exogenous priors $P_U^{(1)}, P_U^{(2)}$ on $\mathbb{R}^d$, and parent domain $\mathbb{R}^k$. We say they are counterfactually equivalent $\sim_{\mathcal{L}_3}$, 
    if and only if there exists a bijection $g : \mathbb{R}^d \to \mathbb{R}^d$ such that $g_\sharp P_U^{(2)}=P_U^{(1)}$, that is:  
    \begin{align}
         &  & T^{(1)}\sim_{\mathcal{L}_3} T^{(2)} \iff \exists g~:~T^{(1)}(u;\mathbf{pa}) = T^{(2)}( g^{-1}(u);\mathbf{pa})\quad\text{for}~P_U^{(1)}\text{-a.e.}~u,~\forall \mathbf{pa}
         . &  &
    \end{align}
\end{definition}
\begin{proposition}[\textcite{nasr2023counterfactual}]\label{prop:same_cfs}
    Transport maps $T^{(1)}, T^{(2)} : \mathbb{R}^{d} \times \mathbb{R}^k \to \mathbb{R}^d$ produce the same counterfactuals if and only if they are ${\mathcal{L}_3}$-equivalent in the sense of Definition~\ref{def:equivalence}:
    \begin{align}
        \forall x, \mathbf{pa}, \mathbf{pa}^*~:~x^*=T^{*(1)}(\mathbf{pa}^*, \mathbf{pa}, x) = T^{*(2)}(\mathbf{pa}^*, \mathbf{pa}, x) \iff T^{(1)}\sim_{\mathcal{L}_3} T^{(2)}.
    \end{align}
\end{proposition}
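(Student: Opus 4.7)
The plan is to prove the two directions separately. The key device for the forward implication is a canonical bijection $g$ constructed by composing the two transport maps at a single fixed reference value of the parents.

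For the easy ($\Leftarrow$) direction, I assume $T^{(1)} \sim_{\mathcal{L}_3} T^{(2)}$ via a bijection $g$ satisfying Definition~\ref{def:equivalence}. Unfolding $T^{*(1)}(\mathbf{pa}^*, \mathbf{pa}, x) = T^{(1)}((T^{(1)})^{-1}(x; \mathbf{pa}); \mathbf{pa}^*)$ and applying the identity $T^{(1)}(u; \mathbf{pa}) = T^{(2)}(g^{-1}(u); \mathbf{pa})$ pointwise at $u = (T^{(1)})^{-1}(x; \mathbf{pa})$ yields $(T^{(1)})^{-1}(x; \mathbf{pa}) = g((T^{(2)})^{-1}(x; \mathbf{pa}))$. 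Substituting back and cancelling $g^{-1}\circ g$ collapses $T^{*(1)}$ into $T^{*(2)}$, as required.

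The substantive ($\Rightarrow$) direction requires constructing $g$ from scratch. Fix an arbitrary reference parent value $\mathbf{pa}_0 \in \mathbb{R}^k$ and define $g \coloneqq (T^{(1)})^{-1}(\cdot; \mathbf{pa}_0) \circ T^{(2)}(\cdot; \mathbf{pa}_0)$. Three things then need verification. First, $g$ is a.e.\ bijective, inherited directly from the a.e.\ bijectivity of the transport maps granted by Lemma~\ref{prop:unique_dynamic_ot}. Second, $g_{\sharp}P_U^{(2)} = P_U^{(1)}$: since both SCMs share the same observational distribution (the premise of Definition~\ref{def:counterfactual identifiability}), both $T^{(i)}(\cdot; \mathbf{pa}_0)$ push their respective priors $P_U^{(i)}$ onto the common conditional $P_{X|\mathbf{PA}=\mathbf{pa}_0}^{\mathfrak{C}}$, and chaining pushforwards delivers the claim. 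Third, for every $\mathbf{pa}$ the identity $T^{(1)}(u; \mathbf{pa}) = T^{(2)}(g^{-1}(u); \mathbf{pa})$ follows by specialising the hypothesis $T^{*(1)}(\mathbf{pa}, \mathbf{pa}_0, x) = T^{*(2)}(\mathbf{pa}, \mathbf{pa}_0, x)$ to $x = T^{(1)}(u; \mathbf{pa}_0)$ and rearranging; this is essentially the reverse of the calculation used in the backward direction.

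The main obstacle is the bookkeeping around measure-zero sets. Because Lemma~\ref{prop:unique_dynamic_ot} only supplies a.e.\ bijectivity, the compositions defining $g$ and $g^{-1}$ are well-defined only on sets of full measure under the relevant priors, and every equality above must be read modulo null sets. Promoting $g$ to a genuine bijection of $\mathbb{R}^d$ (e.g., by extending by the identity on the exceptional set) is routine but worth handling explicitly. A secondary subtlety is that $g$ depends a priori on the choice of $\mathbf{pa}_0$; independence of this choice is not required for the statement and in fact drops out of the third step once the identity is shown to hold for every $\mathbf{pa}$.
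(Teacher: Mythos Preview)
Your proof is correct. The backward ($\Leftarrow$) direction matches the paper's argument exactly. For the forward ($\Rightarrow$) direction, the paper instead argues by contradiction: it posits a parent-dependent link $g(\cdot;\mathbf{pa})$ satisfying $T^{(1)}(u;\mathbf{pa})=T^{(2)}(g^{-1}(u;\mathbf{pa});\mathbf{pa})$, then uses the counterfactual-equality hypothesis together with injectivity of $T^{(2)}(\cdot;\mathbf{pa}^*)$ to force $g^{-1}(u;\mathbf{pa})=g^{-1}(u;\mathbf{pa}^*)$ for all $\mathbf{pa},\mathbf{pa}^*$, whence $g$ cannot depend on $\mathbf{pa}$. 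Your constructive route---fix a reference $\mathbf{pa}_0$, set $g=(T^{(1)})^{-1}(\cdot;\mathbf{pa}_0)\circ T^{(2)}(\cdot;\mathbf{pa}_0)$, and verify the identity at every $\mathbf{pa}$ by specialising the hypothesis---is equivalent in content and arguably cleaner: it names the candidate $g$ from the outset and avoids the contradiction scaffolding, while the paper's version makes the role of injectivity of $T^{(2)}$ more visible. You are also more careful than the paper about two points the paper's proof leaves implicit: the pushforward requirement $g_\sharp P_U^{(2)}=P_U^{(1)}$ from Definition~\ref{def:equivalence}, and the null-set bookkeeping. One minor quibble: invoking Lemma~\ref{prop:unique_dynamic_ot} for a.e.\ bijectivity ties your argument to the dynamic-OT setting, whereas the proposition is stated for generic transport maps; bijectivity is better treated as a standing hypothesis here rather than a consequence of that lemma.
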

\begin{remark}
Given only observational data ($\mathcal{L}_1$), we seek a transport map $T$ that is counterfactually equivalent ($\sim_{\mathcal{L}_3}$) to the true transport map $T^\star$ underlying the data-generating process.
\end{remark}
\begin{lemma}[Existence of the Prior Transition Map]\label{prop:existence}
    Let $P^{(1)}_U, P^{(2)}_U$ be probability measures on $\mathbb{R}^{d}$\newline with finite second moments, both absolutely continuous w.r.t. the Lebesgue measure. Then, there exists a transport map $g : \mathbb{R}^d \to \mathbb{R}^d$ that is unique $P_U^{(1)}$-a.e., monotone, and a.e. bijective.
\end{lemma}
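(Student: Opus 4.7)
The plan is to apply Brenier's theorem, in its McCann formulation, to the pair $(P_U^{(1)}, P_U^{(2)})$ with quadratic transport cost. Since both measures have finite second moments and $P_U^{(1)}$ is absolutely continuous with respect to the Lebesgue measure, Brenier's theorem yields a convex, lower semicontinuous potential $\phi : \mathbb{R}^d \to \mathbb{R} \cup \{+\infty\}$ whose gradient $g \coloneqq \nabla\phi$ is defined $P_U^{(1)}$-a.e., satisfies $g_\sharp P_U^{(1)} = P_U^{(2)}$, and is the unique such map up to equality $P_U^{(1)}$-a.e. This already supplies both existence and the claimed $P_U^{(1)}$-a.e. uniqueness.

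Monotonicity is then immediate from the convexity of $\phi$. By the standard monotone-gradient inequality, for all $u_1, u_2$ at which $\nabla\phi$ exists we have
$$\langle g(u_1) - g(u_2),\, u_1 - u_2 \rangle = \langle \nabla\phi(u_1) - \nabla\phi(u_2),\, u_1 - u_2 \rangle \geq 0,$$
which is exactly the condition in Definition~\ref{def:monotone_op} (with the parent argument absent, since $g$ here does not depend on $\mathbf{pa}$).

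For a.e. bijectivity, I would invoke Brenier's theorem a second time in the reverse direction: since $P_U^{(2)}$ is also absolutely continuous with respect to Lebesgue measure, there exists a Brenier map $g^\dagger \coloneqq \nabla\phi^*$, where $\phi^*$ is the Legendre-Fenchel conjugate of $\phi$, satisfying $g^\dagger_\sharp P_U^{(2)} = P_U^{(1)}$. Standard convex duality (Fenchel-Young equality at points of differentiability) then gives $(\nabla\phi^*) \circ (\nabla\phi)(u) = u$ for $P_U^{(1)}$-a.e.~$u$, together with the symmetric identity $P_U^{(2)}$-a.e., which is precisely the claimed a.e. bijectivity with a.e.~inverse $g^\dagger$.

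The main subtlety is the scope of the bijectivity claim. Without stronger regularity hypotheses (the bounded, open, convex supports and densities bounded and strictly positive used in Lemma~\ref{prop:unique_dynamic_ot} to apply Caffarelli's theorem), $g$ need not be a global homeomorphism or even continuous, so I would not attempt to upgrade the conclusion to local diffeomorphism. The existence of the dual Brenier map is the key ingredient that prevents the bijectivity step from requiring any extra regularity, and it is precisely what matches the lemma's conclusion of \emph{a.e.} bijectivity rather than the stronger pointwise version asserted in Lemma~\ref{prop:unique_dynamic_ot}.
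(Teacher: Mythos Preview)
Your proposal is correct and matches the paper's core argument: both invoke Brenier's theorem to obtain $g=\nabla\phi$ (giving existence, $P_U^{(1)}$-a.e.\ uniqueness, and monotonicity), and both use absolute continuity of $P_U^{(2)}$ to get a.e.\ bijectivity. Your treatment of the bijectivity step via the Legendre--Fenchel dual map $\nabla\phi^*$ is in fact more explicit than the paper's, which simply asserts ``Since $P_U^{(2)}\ll\mathcal{L}^d$, the map is a.e.\ bijective'' and then remarks that Caffarelli regularity would upgrade this further.

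The one substantive difference is that the paper's proof continues past the Brenier argument to develop the Rosenblatt (multivariate probability integral) transform, constructing triangular maps $T^{(1)},T^{(2)}:[0,1]^d\to\mathbb{R}^d$ with $T^{(i)}_\sharp\,\mathcal{U}([0,1]^d)=P_U^{(i)}$ and exhibiting $g=(T^{(2)})^{-1}\circ T^{(1)}$ explicitly. This is not needed to establish the lemma as stated---the paper's own remark concedes the Rosenblatt map is \emph{not} the Brenier map and depends on a coordinate ordering---but it serves to fix $\mathcal{U}([0,1]^d)$ as a canonical common reference prior, which is then assumed in Theorem~\ref{prop:cf_identify}. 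Your argument proves the lemma cleanly; the paper's extra construction is scaffolding for what follows rather than an alternative proof route.
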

We are now equipped to prove our main counterfactual identifiability result in the sense of Def.~\ref{def:equivalence}.
\begin{theorem}[Counterfactual Identifiability in Markovian SCMs]\label{prop:cf_identify}
    Let $\dim(X)=\dim(U)=d>1$, and $U\perp\!\!\!\perp\mathbf{PA}$. Assume $P_{U}$ is the continuous uniform
    measure on $[0,1]^{d}$. Let $T : \mathbb{R}^d \times \mathbb{R}^k \to \mathbb{R}^d$ be the time-1 dynamic OT map described in Lemma~\ref{prop:unique_dynamic_ot}, which pushes $P_U$ forward to $P_{X\mid\mathbf{PA}}^{\mathfrak C}$. Then, the induced time-1 counterfactual dynamic OT map $T^* : \mathbb{R}^{2k} \times\mathbb{R}^d \to \mathbb{R}^d$
    is strictly monotone in $x$:
    \begin{align}
        &&\left\langle T^*(\mathbf{pa}^*, \mathbf{pa}, x_1) - T^*(\mathbf{pa}^*, \mathbf{pa}, x_2), x_1 - x_2\right\rangle >0, &&\forall x_1, x_2 \in \mathbb{R}^d, \mathbf{pa}, \mathbf{pa}^* \in \mathbb{R}^k.&&
    \end{align}
\end{theorem}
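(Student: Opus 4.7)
The plan is to reduce the strict monotonicity of $T^*$ in $x$ to a combination of Lemma~\ref{prop:unique_dynamic_ot}, Proposition~\ref{prop:monotonicity cf map}, and Caffarelli's regularity. First, I would verify the hypotheses of Lemma~\ref{prop:unique_dynamic_ot} for every conditioning $\mathbf{pa}$: the uniform $P_U$ on the bounded open convex cube $[0,1]^d$ has strictly positive constant density, and each $P_{X\mid\mathbf{PA}=\mathbf{pa}}^{\mathfrak{C}}$ is taken to be absolutely continuous with strictly positive bounded density on a bounded open convex support. Lemma~\ref{prop:unique_dynamic_ot} then supplies a convex potential $\phi(\cdot;\mathbf{pa})$ with $T(\cdot;\mathbf{pa}) = \nabla_u \phi(\cdot;\mathbf{pa})$ the unique, monotone, a.e.\ bijective Brenier map from $P_U$ to $P_{X\mid\mathbf{PA}=\mathbf{pa}}^{\mathfrak{C}}$. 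Caffarelli's regularity theorem, invoked in the remark under Lemma~\ref{prop:unique_dynamic_ot}, upgrades $\phi(\cdot;\mathbf{pa})$ to $C^{2,\alpha}$ and uniformly (strictly) convex on the interior, so $T(\cdot;\mathbf{pa})$ is a $C^{1,\alpha}$-diffeomorphism with pointwise positive-definite Hessian; the same conclusions hold with $\mathbf{pa}^*$ in place of $\mathbf{pa}$.

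Given this diffeomorphism, the counterfactual transport map is the well-defined composition $T^*(\mathbf{pa}^*,\mathbf{pa},x) = T\bigl(T^{-1}(x;\mathbf{pa});\mathbf{pa}^*\bigr) = \nabla_u\phi\bigl(\nabla_x\phi^*(x;\mathbf{pa});\mathbf{pa}^*\bigr)$, with $\phi^*(\cdot;\mathbf{pa})$ the (strictly convex) Legendre conjugate. Since $T(u;\mathbf{pa})$ is monotone in $u$ in the sense of Definition~\ref{def:monotone_op}, Proposition~\ref{prop:monotonicity cf map} immediately yields the non-strict inequality $\langle T^*(\mathbf{pa}^*,\mathbf{pa},x_1) - T^*(\mathbf{pa}^*,\mathbf{pa},x_2),\, x_1-x_2\rangle \geq 0$ for all $x_1,x_2$.

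To upgrade $\geq$ to $>$, I would argue by contradiction: if equality held at some $x_1 \neq x_2$, then setting $u_i \coloneqq T^{-1}(x_i;\mathbf{pa})$, global bijectivity forces $u_1 \neq u_2$, and uniform strict convexity of $\phi(\cdot;\mathbf{pa}^*)$ forces $T^*(\mathbf{pa}^*,\mathbf{pa},x_1) \neq T^*(\mathbf{pa}^*,\mathbf{pa},x_2)$, so the two vectors inside the inner product would be nonzero yet orthogonal. I would then parametrise $x(t) = (1-t)x_2 + t x_1$ and integrate $\tfrac{\mathrm{d}}{\mathrm{d}t}\langle T^*(\mathbf{pa}^*,\mathbf{pa},x(t)), x_1-x_2\rangle$ along $[0,1]$, rewriting the integrand via the chain rule as $(x_1-x_2)^\top \nabla^2\phi(u(t);\mathbf{pa}^*)\bigl[\nabla^2\phi(u(t);\mathbf{pa})\bigr]^{-1}(x_1-x_2)$ with $u(t) = T^{-1}(x(t);\mathbf{pa})$, and exploit uniform positive-definiteness from Caffarelli to conclude. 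The main obstacle will be precisely this last step: the symmetric part of a product of two positive-definite symmetric matrices need not itself be positive-definite in general, so strict positivity has to be extracted from the joint OT structure — in particular, from the fact that the uniform $P_U$ identifies $T^{-1}(\cdot;\mathbf{pa})$ with the multivariate rank and $T(\cdot;\mathbf{pa}^*)$ with the multivariate quantile — rather than from pointwise positive-definiteness of each individual Hessian alone.
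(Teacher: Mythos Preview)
Your approach is essentially the paper's approach: set up $T(\cdot;\mathbf{pa})=\nabla_u\phi(\cdot;\mathbf{pa})$ via Lemma~\ref{prop:unique_dynamic_ot}, use Caffarelli to make $\nabla_u^2\phi$ symmetric positive-definite, write $\partial_x T^* = \partial_u T(\cdot;\mathbf{pa}^*)\,\partial_x T^{-1}(\cdot;\mathbf{pa})$, and integrate along the segment $x_t=(1-t)x_2+tx_1$. You also correctly isolate the genuine obstruction: the quadratic form $z^\top A B^{-1} z$ with $A,B$ SPD need not be positive, so the Euclidean inner product does not directly yield strict positivity of the integrand.

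The paper does \emph{not} overcome this obstruction; it sidesteps it by changing the metric. In the proof the inner product is taken in the form $\langle v,w\rangle_x \coloneqq v^\top \partial_x T^{-1}(x;\mathbf{pa})\,w$, which is a bona fide inner product because $\partial_x T^{-1}=\bigl(\nabla_u^2\phi\bigr)^{-1}$ is SPD. With this weighting the integrand becomes
\[
z^\top\bigl(\partial_{x_t}T^{-1}\bigr)\bigl(\partial_u T\bigr)\bigl(\partial_{x_t}T^{-1}\bigr)z
=\bigl(\partial_{x_t}T^{-1}z\bigr)^\top\,\partial_u T\,\bigl(\partial_{x_t}T^{-1}z\bigr)>0,
\]
since $\partial_u T=\nabla_u^2\phi(\cdot;\mathbf{pa}^*)$ is SPD and $\partial_{x_t}T^{-1}z\neq 0$. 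So the ``strict monotonicity'' actually established is with respect to this Jacobian-weighted inner product, not the Euclidean one; the paper's remarks (after Proposition~\ref{prop:monotonicity cf map} and after the theorem) acknowledge that Euclidean monotonicity of $T^*$ would require extra structure, e.g.\ commuting Hessians. Your instinct that ``joint OT structure'' is needed is therefore right in spirit, but the concrete device you are missing is simply to replace $\langle\cdot,\cdot\rangle$ by $\langle\cdot,\cdot\rangle_x$ and interpret the theorem's inequality in that sense.
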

\begin{remark}
    This shows a dynamic OT flow mechanism (Lemma~\ref{prop:unique_dynamic_ot}) yields a counterfactual optimal transport map $T^*$ that is strictly monotone in $x$ (i.e., monotonic in the vector sense), thereby extending the classical monotone-quantile notion to multivariate ($d>1$) settings and, under stated~\textcite{caffarelli1992regularity} regularity conditions, guaranteeing $\sim_{\mathcal{L}_3}$ identifiability from observational data $(\mathcal{L}_1)$.
\end{remark}
\subsection{Non-Markovian Counterfactual Identifiability}
\vspace{-6pt}
\begin{figure}[!h]
    \begin{tikzpicture}
    [
    scale=1, every node/.style={scale=1}]
        \node (a) {$A$};
        \node[right=20pt of a] (x) {$X$};
        \node[right=20pt of x] (u) {$U$};
        \edge[-latex]{a}{x}
        \edge[-latex]{u}{x} 
    \end{tikzpicture}
    \hfill
    \begin{tikzpicture}
    [
    scale=1, every node/.style={scale=1}]
        \node (i) {$I$};
        \node[right=20pt of i] (a) {$A$};
        \node[right=20pt of a] (x) {$X$};
        \edge[-latex]{i}{a}
        \edge[-latex]{a}{x} 
        \draw [densely dashed, latex-latex, darkred] (a) to [out=45,in=135] (x);
    \end{tikzpicture}
    \hfill
    \begin{tikzpicture}
    [
    scale=1, every node/.style={scale=1}]
        \node (i) {$Z$};
        \node[right=20pt of i] (a) {$A$};
        \node[right=20pt of a] (x) {$X$};
        \edge[-latex]{i}{a}
        \edge[-latex]{a}{x} 
        \draw [-latex] (i) to [out=45,in=135] (x);
    \end{tikzpicture}
    \hfill
    \begin{tikzpicture}
    [
    scale=1, every node/.style={scale=1}]
        \node (u) {$A$};
        \node[right=20pt of u] (y) {$Z$};
        \node[right=20pt of y] (x) {$X$};
        \edge[-latex]{u}{y}
        \edge[-latex]{y}{x}
        \draw [densely dashed, latex-latex, darkred] (x) to [out=135,in=45] (u);
    \end{tikzpicture}
    \caption{\textbf{Four canonical causal graphs}. From left to right: (i) \textit{Markovian}, (ii) \textit{Instrumental Variable}, (iii) \textit{Backdoor}, and (iv) \textit{Frontdoor}. Dashed bidirected arcs denote unobserved confounding.    
    }
    \label{fig:placeholder}
\end{figure}
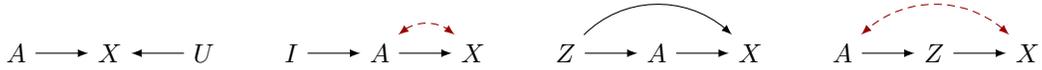
To ensure wide applicability, we extend counterfactual identification from observational data ($\mathcal{L}_1$) to \textit{non-Markovian} multivariate ($d>1$) settings under the following common criteria (Figure~\ref{fig:placeholder}): (i) Instrumental Variable (IV), (ii) Backdoor Criterion (BC), and (iii) Frontdoor Criterion (FC). Appendix~\ref{appsubsec:non-markovian} contains the proofs.~\textcite{nasr2023counterfactual} provided results for $d>1$ only when BC applies. We expand the IV results to $d>1$ thanks to the mechanism proposed in Lemma~\ref{prop:unique_dynamic_ot} being monotone, bijective and uniquely determinable under regularity conditions~\parencite{caffarelli1992regularity}. We also prove $\sim_{\mathcal{L}_3}$ identifiability for FC under similar conditions to BC~\parencite{nasr2023counterfactual}.
\section{Counterfactual Transport Maps via Flow Matching}\label{sec:cf_maps_prescription}
Although bijectivity and \textit{variability} are sufficient for counterfactual identification under BC settings (see Appendix~\ref{appsubsec:non-markovian}), Markovian/IV settings require a monotone counterfactual OT map to obtain equivalent guarantees (Theorem~\ref{prop:cf_identify}). For the former, we prescribe standard continuous-time flows trained via flow matching, and for the latter, we approximate dynamic OT with these flows.

\paragraph{Counterfactual Inference.} Computing counterfactuals with a continuous-time flow model is simple. Using \textit{abduction-action-prediction}~\parencite{pearl2009causality}, we first solve the associated ODE backwards in time for \textit{abduction} $u=T^{-1}(x; \mathbf{pa})$, then forwards in time for \textit{prediction} $x^* = T(u; \mathbf{pa}^*)$, as follows:
\begin{align}
     &  & u = x - \int_{0}^{1} v_t(x_t;\mathbf{pa}, \theta) \, \mathrm{d}t, &  & \Rightarrow &  & x^* = u + \int_{0}^{1} v_t(x_t;\mathbf{pa}^*, \theta) \, \mathrm{d}t, &  &
\end{align}
where $\mathbf{pa}^*$ is obtained from an upstream intervention (i.e. \textit{action}) on one or more nodes in the associated SCM. The Picard-Lindelöf theorem states that both initial value problems admit unique solutions if $v_t(x_t;\mathbf{pa}, \theta)$ is Lipschitz continuous, a condition that is readily satisfiable in practice.

\paragraph{Markovian OT Coupling.} Although our theory for Markovian and IV settings is not tied to any particular OT approximation, it remains nontrivial to operationalise at scale with current techniques.
To validate our theoretical claims, we offer a flow matching approach with a \textit{bespoke} Markovian Batch-OT coupling that resolves counterfactual consistency issues in the standard formulation.
Batch-OT flow matching~\parencite{pooladian2023multisample,tong2024improving} solves the OT problem on batches of source and target samples to form better pairs, using GPU-enabled solvers~\parencite{flamary2021pot}.
This asymptotically recovers the global OT map, but requires large batches in practice, especially in high dimensions~\parencite{klein2025fitting}. Notably, \textcite{mousavi2025flow} amortise OT pairing via semidiscrete couplings, reporting improved sample quality with substantially lower pairing cost.

As prescribed by Lemma~\ref{prop:unique_dynamic_ot}, the correct way to formulate OT for Markovian structures must ensure that the marginal $P_U$ remains independent of
$\mathbf{PA}$ upon coupling $P^{\mathfrak{C}}_{X\mid\mathbf{PA}}$. A naive application of Batch-OT flow matching to a Markovian setting implicitly entangles the parent variables $\mathbf{PA}$ and the exogenous noise $U$, which violates the independence requirement $U \indep \mathbf{PA}$. This happens because upon sampling a random batch of exogenous noise $\{u^{(i)}\}_{i=1}^m\sim P_U$, and observations $\{(x^{(j)}, \mathbf{pa}^{(j)})\}_{j=1}^m\sim P^{\mathfrak{C}}_{X,\mathbf{PA}}$, solving the OT problem then reassigns each $u^{(i)}$ to a pair $(x^{(j)}, \mathbf{pa}^{(j)})$, but implicitly couples $u^{(i)}$ with $\mathbf{pa}^{(j)}$ by association with $x^{(j)}$. This invalidates the abduction process in Markovian settings as $U$ is no longer strictly exogenous, and the counterfactuals will be incorrect.

To fix this issue, we can instead sample batched data from the \textit{conditional} distribution:
\begin{align}
    \label{eq:cond_sampling}
                                                          &  & \{u^{(i)}\}_{i=1}^m
    \sim P_U,
                                                          &  & \mathbf{pa} \sim P^{\mathfrak{C}}_{\mathbf{PA}}, &  &
    \{x^{(j)}\}_{j=1}^m
    \sim P^{\mathfrak{C}}_{X\mid\mathbf{PA}=\mathbf{pa}}, &  &
\end{align}
then solve the Batch-OT problem for each fixed value of $\mathbf{pa}$ in turn. Simply put, the parent values must match across each element in the batch to satisfy the Markovian constraint that $U \indep \mathbf{PA}$.

The objective we optimise can be understood as solving a \textit{family} of OT flow couplings:
\begin{align}
    \min_\theta \int_{\mathbb{R}^k} \,\mathbb{E}_{t \sim \mathcal{U}(0,1), (u, x)\sim \pi^{(m)}(\cdot \mid \mathbf{pa})} \left[\left\| v_t(x_t;\mathbf{pa}, \theta) - v^\star_t(x_t \mid x, \mathbf{pa})\right\|^2\right]\,\mathrm{d}P^{\mathfrak{C}}_{\mathbf{PA}}(\mathbf{pa}),
\end{align}
where the flow is $x_t = (1-t)u + tx$. Here $\pi^{(m)}$ denotes the implicit conditional joint distribution:
\begin{align}
    \pi^{(m)}(u, x \mid \mathbf{pa}) = \frac{1}{m}\sum_{i=1}^m\sum_{j=1}^m \pi_{\mathbf{pa}}^{\star}(i, j) \delta_{u^{(i)}}(u)\delta_{x^{(j)}}(x),
\end{align}
induced by the OT coupling $\pi_{\mathbf{pa}}^{\star} \in \mathbb{R}^m\times\mathbb{R}^m$, for $m$ i.i.d. samples drawn according to Eq.~\eqref{eq:cond_sampling}.
\section{Experiments}\label{sec:esperiments}
We conduct two sets of experiments: (i) a constructed scenario where the ground-truth counterfactuals are known and we can, in principle, verify our identifiability claims; (ii) a real-world medical imaging dataset widely used for counterfactual inference. When counterfactual ground truth is not available, we follow~\textcite{monteiro2023measuring,ribeiro2023high} and measure the counterfactual soundness axioms of \textit{composition}, \textit{effectiveness} and \textit{reversibility}. While useful, these metrics alone do not imply identification and should not be construed as evidence of causal validity. For details, see Appendix~\ref{app:axioms}.
\subsection{Counterfactual Ellipse Generation}\label{subsec:ellipse}
For this study, we adapt the counterfactual ellipse generation setup by \textcite{nasr2023counterfactual}. Let $U \in \mathbb{R}^2$ be the semi-major and -minor parameters of an ellipse, $\operatorname{PA} \in (0, 2\pi)$ be an angle specifying a single point on the ellipse, and $X \in \mathbb{R}^2$ be its cartesian coordinates. The data-generating process is: $z \sim P_Z$, $u \sim P_{U \mid Z=z}$, $\operatorname{pa} \sim P_{\operatorname{PA}\mid Z=z}$, and $x \coloneqq f(\operatorname{pa}, u)$. By construction, $U \indep \operatorname{PA} \mid Z$, and $Z$ satisfies the backdoor criterion (BC) w.r.t. $ \operatorname{PA}\to X$. To induce a Markovian setting, we randomise $\operatorname{PA}$, yielding $U \indep \operatorname{PA}$. The goal is to learn a map $T: (0,2\pi) \times \mathbb{R}^2 \to \mathbb{R}^2$ that can infer the set of counterfactual points $\mathcal{X}^*\coloneqq\{x^* = T(\operatorname{pa}^*, T^{-1}(\operatorname{pa}, x)) \mid \operatorname{pa}^* \in (0, 2\pi) \}$, which draws the entire ellipse each observed $(\operatorname{pa}, x)$ belongs to. Importantly, since we always know the ground truth $\mathcal{X}^*$, we can evaluate our counterfactual estimates exactly using the mean average percentage error ($\mu_{\text{APE}}$).

\newpage
We build four flow variants: (i) an energy-based model flow (\textsc{EBM}), which is curl-free by design; (ii) a continuous-time flow~\parencite{lipman2023flow} (\textsc{Flow})\footnote{Equivalent to~\textcite{sanchez2021diffusion} if the source distribution is Gaussian~\parencite{gao2025diffusion}.}; (iii) an \textsc{EBM} using the family of OT couplings described in Section~\ref{sec:cf_maps_prescription} (\textsc{OT-EBM}); and (iv) same as (iii) but using a \textsc{Flow} (denoted \textsc{OT-Flow}). To show our OT coupling is necessary for correct counterfactual inferences, we compare with the `naive' (\textsc{Naive}) version of a Batch-OT flow, which violates Markovianity (cf. Section~\ref{sec:cf_maps_prescription}).
For all other details regarding architectures, datasets and experimental setup please refer to Appendix~\ref{app:extra_results_ellipse}.

\textcite{nasr2023counterfactual} reports that their spline flow-based model \textit{failed} in the Markovian case, with $\mu_{\text{APE}}{=}607\%$. It only succeeded using the $\textsc{BC}_{Z}$ scheme (Back-door) with a $\mu_{\text{APE}}$ of 1\% (reproduced at $.98\%$). Learning $P(X \mid \operatorname{PA})$ or $P(X \mid \operatorname{PA}, Z)$ also failed when $Z$ is a confounder (cf. $\textsc{Cond}_{\operatorname{PA}}$, $\textsc{Cond}_{\operatorname{PA},Z}$). Table~\ref{tab:ellipse} shows our flows produce near-exact ground-truth counterfactuals, and using just two function evaluations with OT. Counterfactual \textit{reversibility}~\parencite{monteiro2023measuring} is also improved by straighter paths (Figure~\ref{fig:ellipse}). Further, we experiment with a non-Markovian setting with non-linear \textit{unobserved} confounding where the front-door criterion applies. The results validate our theory in that bijectivity alone is sufficient for consistent counterfactual inference in this case.
%
\begin{figure}[t!]
  \centering
  \includegraphics[trim={0 5 0 5}, clip,width=.99\textwidth]{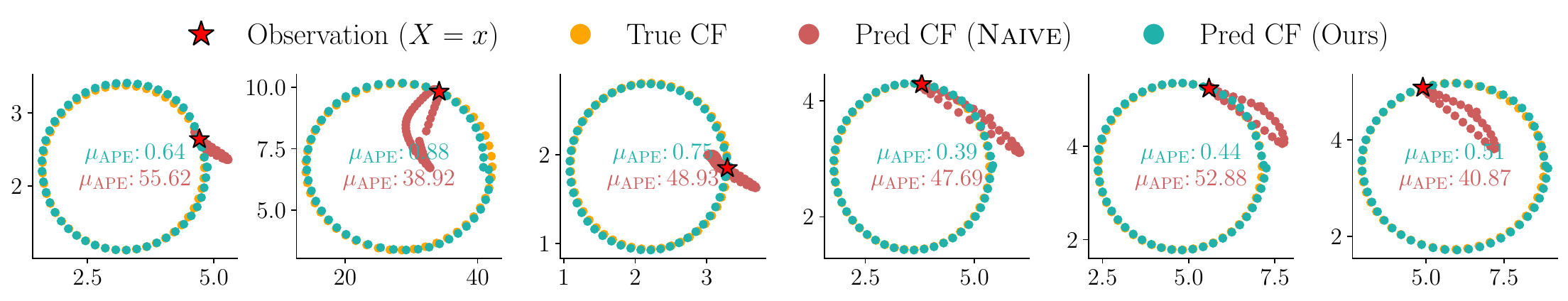}
  \\[5pt]
  \includegraphics[width=\textwidth]{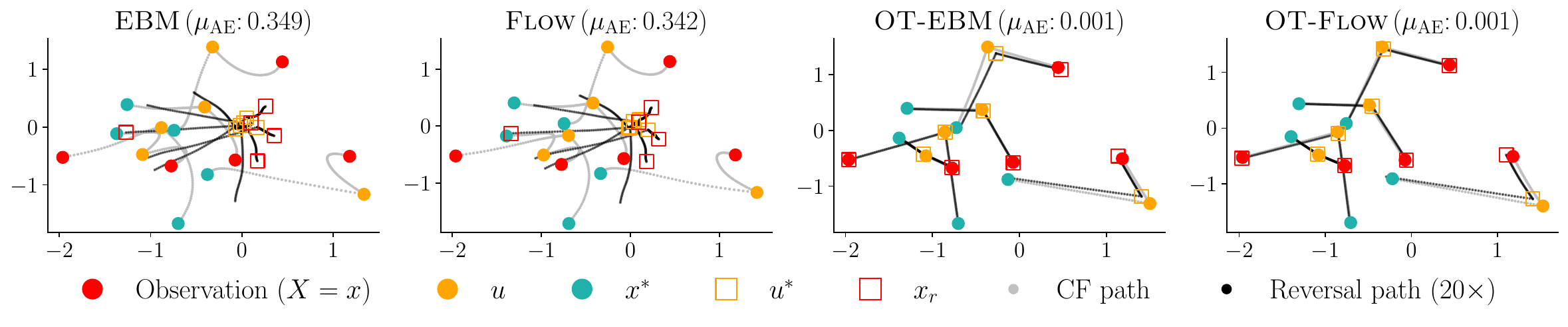}
  \caption{\textbf{Counterfactual ellipse generation}. (Top) Comparing our OT coupling flow to the naive approach (Section~\ref{sec:cf_maps_prescription}). (Bottom) OT maps exhibit greater counterfactual \textit{reversibility}. A counterfactual $x^* = T_{\operatorname{pa}^*} \circ T^{-1}_{\operatorname{pa}} (x)$, is reversed by $x_r = T_{\operatorname{pa}} \circ T^{-1}_{\operatorname{pa}^*} (x^*)$, and a perfect reversal squares the circle.
  }
  \label{fig:ellipse}
\end{figure}
\begin{figure}[!t]
  \centering
  \hfill
  \begin{minipage}{0.34\textwidth}
    \includegraphics[trim={0 17 27 12},clip,width=.965\textwidth]{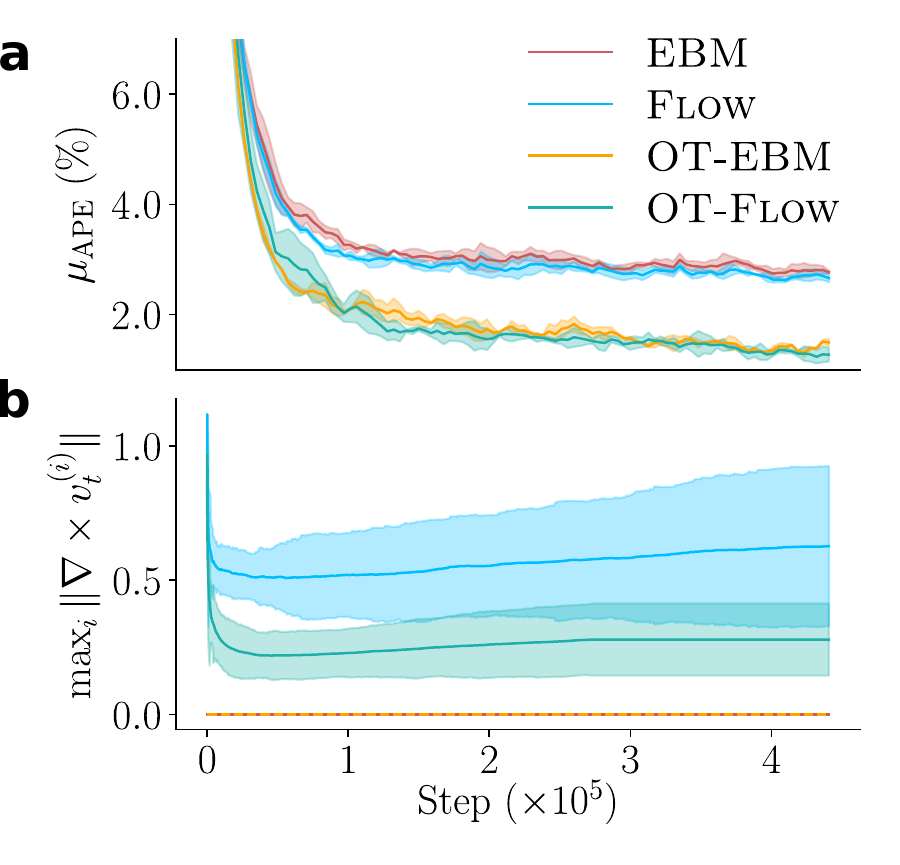}
    \caption{\textbf{(a)} CF error; \textbf{(b)} \textit{curl} of the vector field during training.}
  \end{minipage}
  \hfill
  \begin{minipage}{0.625\textwidth}
    \vspace{-5pt}
    \scriptsize
    \centering
    \captionof{table}{Counterfactual ellipse evaluation. As predicted by our theory, monotone flows are consistent in the Markovian setting, and bijectivity alone is sufficient under the front-door criterion.}
    \vspace{-1pt}
    \label{tab:ellipse}
    \begin{tabular}{cccccc}
      \toprule
                                           &     & \multicolumn{4}{c}{Baselines \parencite{nasr2023counterfactual}}                                                                                                                                                        \\[2pt]
      \textsc{Scheme}                      &     & \textsc{Cond}$_{\operatorname{PA}}$                              & \textsc{Cond}$_{\operatorname{PA},Z}$ & \text{Markovian}                                 & \textsc{BC}$_{Z}$
      \\
      \midrule
      $\mu_{\text{APE}}$ (\%) $\downarrow$ &     & 6607                                                             & 6582                                  & 607                                              & 1.0                                                       \\
      \midrule
      (\textbf{Ours})                      & NFE & \textsc{EBM}                                                     & \textsc{Flow}                         & \textsc{OT-EBM}                                  & \textsc{OT-Flow}                                          \\
      \midrule
      \multirow{3}{*}{Front-door}          & 2   & 19.7{\scriptsize$\pm.03$}                                        & 19.9{\scriptsize$\pm.30$}             & \cellcolor{darkblue!10}1.64{\scriptsize$\pm.19$} &
      \cellcolor{darkblue!10}1.60{\scriptsize$\pm.01$}                                                                                                                                                                                                                     \\
                                           & 10  & 5.79{\scriptsize$\pm.11$}                                        & 5.80{\scriptsize$\pm.08$}             & \cellcolor{darkblue!10}1.42{\scriptsize$\pm.23$} & \cellcolor{darkblue!10}1.37{\scriptsize$\pm.14$}          \\
                                           & 50  & 1.79{\scriptsize$\pm.08$}                                        & 1.67{\scriptsize$\pm.10$}             & \cellcolor{darkblue!10}1.42{\scriptsize$\pm.22$} & \cellcolor{darkblue!10}1.35{\scriptsize$\pm.18$}          \\
      \midrule
      \multirow{3}{*}{Markovian}           & 2   & 34.0{\scriptsize$\pm.90$}                                        & 33.6{\scriptsize$\pm.23$}             & \cellcolor{darkblue!10}1.21{\scriptsize$\pm.04$} &
      1.06{\scriptsize$\pm.02$}\cellcolor{darkblue!10}                                                                                                                                                                                                                     \\
                                           & 10  & 9.41{\scriptsize$\pm.24$}                                        & 9.20{\scriptsize$\pm.16$}             & \cellcolor{darkblue!10}0.95{\scriptsize$\pm.02$} & 0.78{\scriptsize$\pm.01$}\cellcolor{darkblue!10}          \\
                                           & 50  & 2.32{\scriptsize$\pm.01$}                                        & 2.30{\scriptsize$\pm.02$}             & 0.93{\scriptsize$\pm.02$}\cellcolor{darkblue!10} & \cellcolor{darkblue!10}\textbf{0.76}{\scriptsize$\pm.01$} \\
      \bottomrule
    \end{tabular}
  \end{minipage}
  \hfill
\end{figure}
\subsection{Case Study: Chest X-ray Imaging Counterfactuals}\label{subsec:xray}
To extend our study to high-dimensional settings, we conduct experiments on MIMIC-CXR~\parencite{johnson2019mimic}, a widely used dataset for counterfactual inference. Our assumed causal graph follows the baselines~\parencite{ribeiro2023high,xia2024mitigating}, and includes \textsc{Sex} ($S$), \textsc{Race} ($R$), \textsc{Age} ($A$) and \textsc{Disease} ($D$) variables, where $A \to D$, and $\mathbf{PA}_X = \{S, R, A, D\}$ cause the X-ray image $X$. To parameterise our flow models, we use a streamlined version of~\textcite{dhariwal2021diffusion}'s UNet architecture (see Appendix~\ref{app:extra_results2} for details). Table~\ref{tab:chest} and Figure~\ref{fig:mimic_quali} report our main results. We observe substantial improvements over baselines~\parencite{ribeiro2023high,xia2024mitigating} using our flows, across all three counterfactual soundness axioms, and without requiring any costly counterfactual fine-tuning or classifier(-free) guidance. That said, this alone does not imply causal validity.
In Appendix~\ref{app:extra_results2}, we report additional comparisons and ablations. We observe performance trade-offs: for instance, \textsc{OT-Flow} (which assumes Markovianity) outperforms on race interventions but underperforms \textsc{Flow} on disease interventions, suggesting non-Markovian interaction effects or a subpar OT approximation. Notably, our Markovian OT coupling substantially improves over the \textsc{Naive} OT flow baseline.
%
\begin{table}[!t]
  \centering
  \footnotesize
  \caption{Counterfactual \textit{effectiveness} on MIMIC Chest X-ray ($192{\times}192$). $|\Delta_{\text{AUC}}|$ denotes the absolute difference in ROCAUC of counterfactuals relative to the observed data baseline. For each variable, our results (blue shade) appear on the right, and baseline results \parencite{ribeiro2023high} are on the left. Note the large improvements for e.g. \textsc{Race}. For more comparisons and ablations, see Appendix~\ref{app:extra_results2}.}
  \label{tab:chest}
  \vspace{4pt}
  \begin{tabular}{lcc|cc|cc|cc}
    \toprule
                          & \multicolumn{2}{c}{\textsc{Sex} $(S)$}                        & \multicolumn{2}{c}{\textsc{Race} $(R)$}                       & \multicolumn{2}{c}{\textsc{Age} $(A)$}                      & \multicolumn{2}{c}{\textsc{Disease} $(D)$}
    \\[2pt]
    \textsc{Baseline}     & \multicolumn{2}{c}{AUC (\%) $\uparrow$}                       & \multicolumn{2}{c}{AUC (\%) $\uparrow$}                       & \multicolumn{2}{c}{MAE (yr) $\downarrow$}                   & \multicolumn{2}{c}{AUC (\%) $\uparrow$}
    \\
    \midrule
    Observed data         & \multicolumn{2}{c|}{99.63}                                    & \multicolumn{2}{c|}{95.34}
                          & \multicolumn{2}{c|}{6.197}
                          & \multicolumn{2}{c}{94.41}
    \\
    \midrule
    \textsc{Intervention} & \multicolumn{2}{c}{$|\Delta_{\text{AUC}}|$ (\%) $\downarrow$} & \multicolumn{2}{c}{$|\Delta_{\text{AUC}}|$ (\%) $\downarrow$} & \multicolumn{2}{c}{$\Delta_{\text{MAE}}$ (yr) $\downarrow$} & \multicolumn{2}{c}{$|\Delta_{\text{AUC}}|$ (\%) $\downarrow$}
    \\
    \midrule
    $\mathit{do}(S=s)$    & 0.370                                                         & {0.173}{\tiny$\pm$.02}\cellcolor{darkblue!10}                 & 11.44                                                       & {0.583}{\tiny$\pm$.15}\cellcolor{darkblue!10}                 & 0.288 & 0.333{\tiny$\pm$.06}\cellcolor{darkblue!10}
                          & 2.490                                                         & {0.023}{\tiny$\pm$.05}\cellcolor{darkblue!10}
    \\
    $\mathit{do}(R=r)$
                          & 0.070                                                         & 0.180{\tiny$\pm$.01}\cellcolor{darkblue!10}
                          & 8.640                                                         & {0.050}{\tiny$\pm$.07}\cellcolor{darkblue!10}
                          & 0.144                                                         & 0.394{\tiny$\pm$.10}\cellcolor{darkblue!10}
                          & 7.010                                                         & {0.310}{\tiny$\pm$.19}\cellcolor{darkblue!10}
    \\
    $\mathit{do}(A=a)$
                          & 0.070                                                         & 0.187{\tiny$\pm$.03}\cellcolor{darkblue!10}
                          & 14.64                                                         & {1.197}{\tiny$\pm$.15}\cellcolor{darkblue!10}
                          & 0.446                                                         & 0.836{\tiny$\pm$.08}\cellcolor{darkblue!10}
                          & 2.810                                                         & {0.347}{\tiny$\pm$.09}\cellcolor{darkblue!10}
    \\
    $\mathit{do}(D=d)$
                          & 0.070                                                         & {0.067}{\tiny$\pm$.00}\cellcolor{darkblue!10}
                          & 16.04                                                         & {0.627}{\tiny$\pm$.18}\cellcolor{darkblue!10}
                          & 0.371                                                         & 0.435{\tiny$\pm$.05}\cellcolor{darkblue!10}
                          & 3.790                                                         & {2.280}{\tiny$\pm$.37}\cellcolor{darkblue!10}
    \\
    $\mathit{do}(\texttt{rand})$
                          & 0.170                                                         & {0.150}{\tiny$\pm$.02}\cellcolor{darkblue!10}
                          & 12.54                                                         & {0.730}{\tiny$\pm$.22}\cellcolor{darkblue!10}
                          & 0.300                                                         & 0.510{\tiny$\pm$.05}\cellcolor{darkblue!10}
                          & 0.590                                                         & 0.640{\tiny$\pm$.17}\cellcolor{darkblue!10}
    \\
    \bottomrule
  \end{tabular}
\end{table}
\begin{figure}
  \centering
  \includegraphics[width=.985\textwidth]{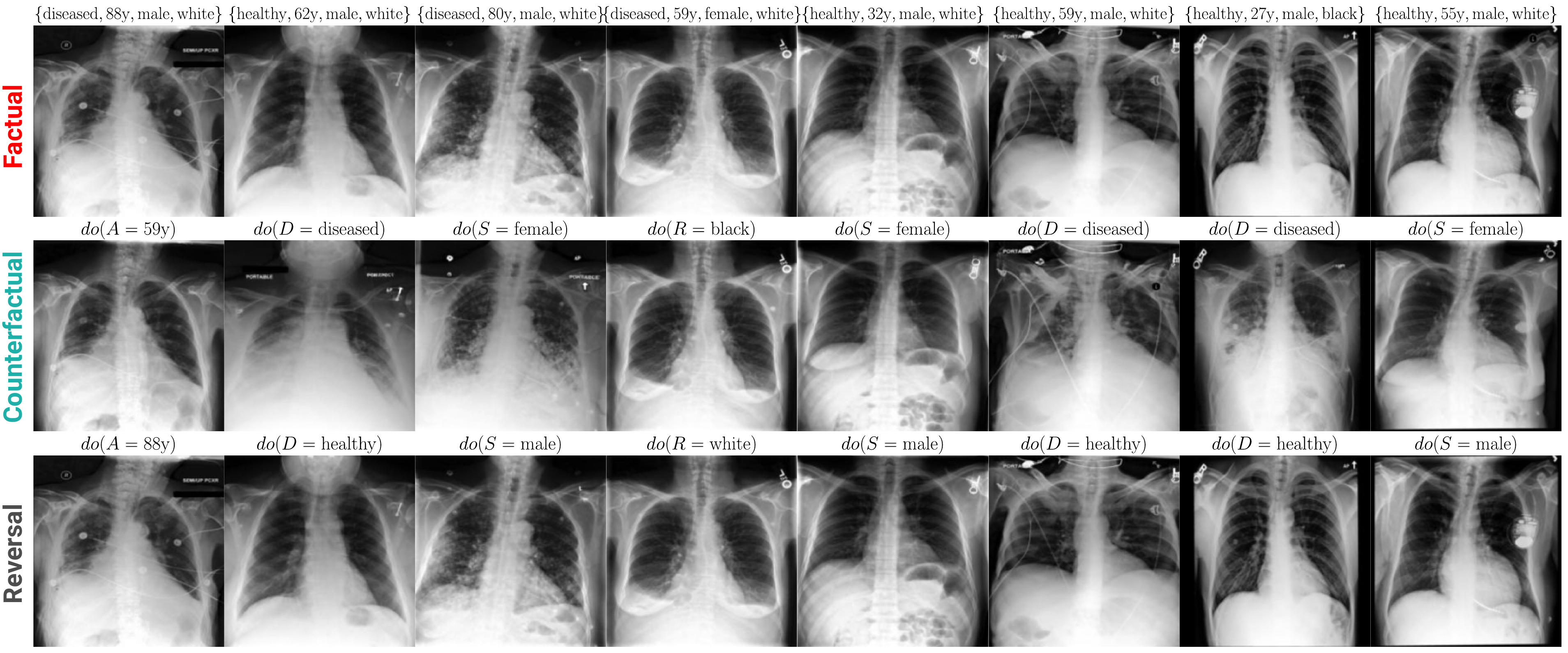}
  \caption{Qualitative counterfactual inference results on MIMIC Chest X-ray. We observe faithful, reversible interventions without requiring counterfactual fine-tuning, or classifier(-free) guidance.
  }
  \label{fig:mimic_quali}
\end{figure}
\section{Conclusion}\label{sec:conclusion}
Causal claims are credible only insofar as their identification is defensible, because observationally equivalent models can imply different answers.
This work establishes a foundation for counterfactual identification of high-dimensional, multivariate outcomes (e.g. images) from observational data. We clarify the conditions under which flow matching yields identified counterfactuals for common causal structures, stating all assumptions and constraints explicitly to invite scrutiny and relaxation. Our results address causal validity concerns in prior work and extend to non-Markovian settings under standard criteria.
For Markovian and instrumental variable settings, we characterise a continuous flow that is provably unique and monotone, yielding a rank-preserving dynamic OT map and consistent counterfactuals.
Using continuous flows, we demonstrate near-exact ground-truth counterfactuals in controlled scenarios and improved axiomatic counterfactual soundness on real images relative to prior methods. Scaling OT to large problems remains a challenge.
We conclude by urging practitioners to assess the causal validity of their estimates when using generative models for causal inference.

\begin{ack}
We thank Aapo Hyvärinen and Charles Jones for helpful insights and discussions. F.R. and B.G. acknowledge the support of the UKRI AI programme, and the EPSRC, for CHAI-EPSRC Causality in Healthcare AI Hub (grant no. EP/Y028856/1). B.G. received
support from the Royal Academy of Engineering as part of his Kheiron/RAEng Research Chair.
\end{ack}



\bibliographystyle{plainnat}
\setlength{\bibsep}{.9\baselineskip}
\bibliography{bibfile}

\newpage
\appendix
\onecolumn
\addcontentsline{toc}{section}{Appendices}
\part{Appendices}
\changelinkcolor{black}{}
\parttoc
\section{Motivating Example: Monotonicity Requirement}\label{appsubsec:example}
We start with a simple instructive example in one dimension ($d=1$). Suppose we have two causal mechanisms $T^{(1)}$ and $T^{(2)}$ defined as follows:
\begin{align}
    X & = T^{(1)}(\mathbf{PA}, U) = \mathbf{PA} + U,
    \\[2pt] X &= T^{(2)}(\mathbf{PA}, U) = \mathbf{PA} + 1 - U,
\end{align}
where $\mathbf{PA} \sim \mathcal{B}(0.5)$ is Bernoulli distributed, and $U \sim \mathcal{U}(0,1)$ is continuous uniform. One can verify that both $T^{(1)}$ and $T^{(2)}$ induce the same conditional observational distribution:
\begin{align}
                                                   &  & P(X \mid \mathbf{PA} = 0) = \mathcal{U}(0, 1), &  &
    P(X \mid \mathbf{PA} = 1) = \mathcal{U}(1, 2). &  &
\end{align}
Without loss of generality, consider the counterfactual query \begin{align}
    X_{\mathbf{pa}^*\coloneqq 1} \mid \{\mathbf{PA} = 0, X = 0.8\}.
\end{align}
In the abduction step, $T^{(1)}$ and $T^{(2)}$ infer \textit{different} exogenous noise values:
\begin{align}
    u^{(1)} & = (T^{(1)})^{-1}(\mathbf{pa}, x) = x - \mathbf{pa} = 0.8 - 0 = 0.8,
    \\[2pt] u^{(2)} &= (T^{(2)})^{-1}(\mathbf{pa}, x) = \mathbf{pa} + 1 - x = 0 + 1 - 0.8 = 0.2.
\end{align}
Nonetheless, both mechanisms produce the same counterfactuals:
\begin{align}
    x^{*(1)} & = T^{(1)}(\mathbf{pa}^*, u^{(1)}) = \mathbf{pa}^* + u^{(1)} = 1 + 0.8  = 1.8,
    \\[2pt]
    x^{*(2)} & = T^{(2)}(\mathbf{pa}^*, u^{(2)}) = \mathbf{pa}^* + 1 - u^{(2)} = 1 + 1 - 0.2 = 1.8.
\end{align}
Even if $u^{(1)} \neq u^{(2)}$, both $T^{(1)}$ and $T^{(2)}$ can return the same counterfactual outcomes because they assign the same \textbf{rank} $u$ (i.e., cumulative probability or quantile level) to the observed value $x$.

Concretely, since $T^{(1)}$, $T^{(2)}$ define the same observational distribution $P(X \mid \mathbf{PA})$, their conditional CDFs $F_{X|\mathbf{PA}} : \mathbb{R} \to [0,1]$ match, and we can use them to perform abduction:
\begin{align}
    u &= F_{X|\mathbf{PA}=0}(x) \\[3pt]
    &= P(X \leq x \mid \mathbf{PA} = 0) 
    \\[3pt]
    &= \frac{x-a}{b-a} =\frac{0.8-0}{1-0}= 0.8,
\end{align}
where $(a, b) = (0, 1)$ since $P(X \mid \mathbf{PA} = 0) = \mathcal{U}(0, 1)$.

Then, we compute the counterfactual using the quantile function $F_{X|\mathbf{PA}=\mathbf{pa}^*}^{-1} : (0,1) \to \mathbb{R}$, that is, the inverse conditional CDF under intervention:
\begin{align}
    x^* = F_{X|\mathbf{PA}=\mathbf{pa}^*}^{-1}({u}) = a + {u}(b - a) = 1 + 0.8(1) = 1.8,
\end{align}
where $(a, b) = (1, 2)$, since we set $\mathbf{pa}^*\coloneqq 1$ and $P(X \mid \mathbf{PA} = 1) = \mathcal{U}(1, 2)$.

The key here is the \textbf{{strict monotonicity}} of $T^{(1)}, T^{(2)}$ in $U$, that is for any $u^{(1)} < u^{(2)} \in \mathbb{R}$, we have
\begin{align}
    &&T^{(1)}(\cdot, u^{(1)}) < T^{(1)}(\cdot, u^{(2)}) && \text{and} && T^{(2)}(\cdot, u^{(1)}) > T^{(2)}(\cdot, u^{(2)}).    \end{align}
To illustrate the importance of this property, consider a third mechanism $T^{(3)}$ that induces the same conditional observational distribution:
\begin{align}
     &  & X = T^{(3)}(\mathbf{PA}, U) = \begin{cases}
        U   & \text{if} \; \mathbf{PA}=0 \\
        2-U & \text{if} \; \mathbf{PA}=1
    \end{cases} \;, &  & U \sim \mathcal{U}(0,1). &  &
\end{align}
However, we can see that $T^{(3)}$ produces a different counterfactual for the same query used before:
\begin{align}
    u^{(3)} & = (T^{(3)})^{-1}(\mathbf{pa}, x) = x = 0.8,
    \\[2pt] x^{*(3)} &= T^{(3)}(\mathbf{pa}^*, u^{(3)}) = 2 - u^{(3)} = 1.2.
\end{align}
One can verify that, unlike for $T^{(1)}, T^{(2)}$, the rank is not being preserved under intervention:
\begin{align}
     &  & F_{X|\mathbf{PA}=0}(x)=0.8 > F_{X|\mathbf{PA}=\mathbf{pa}^*}(x^{*(3)})=0.2. &  &
\end{align}
For a consistent counterfactual inference model, we would expect the inferred rank $u$ (i.e., the cumulative probability or quantile level) of an observed outcome $x$ given its parents $\mathbf{pa}$ to map to a counterfactual outcome $x^*$ that lies at the same rank in the counterfactual distribution:
\begin{align}
    x^* = F^{-1}_{X|\mathbf{PA}=\mathbf{pa}^*}(F_{X|\mathbf{PA}=\mathbf{pa}}(x)).
\end{align}
The mismatch observed previously occurs because $T^{(3)}$ is not monotonic in $U$ across different values of the parents $\mathbf{PA}$\footnote{Note that $T^{(3)}$ is monotonic in $U$ for each fixed value of $\mathbf{PA}$, but not globally across both settings.}. As a result, $T^{(3)}$ fails to preserve the rank order of outcomes across different interventions, thereby breaking counterfactual consistency and identifiability.

This rank-preservation condition can be viewed as the continuous analogue of the \textit{monotone treatment response}~\parencite{1b322628-f411-3422-9c5f-b46be96fdca4,angrist1995identification} used in epidemiological studies, which rules out so-called \textit{`defiers'}: units for whom exposure to disease risk factors would actually \textit{reduce} disease risk. In both cases, monotonicity ensures that counterfactuals respect the domain knowledge-based intuition that exposure should not make disease less likely. As is often the case in causal analysis, the appropriateness of this condition is contingent on the particular context in which it is applied.

With the above $d=1$ example in mind, in the following sections, we characterise the set of constraints on the map $T$ which permit (point-wise) counterfactual identification of causal mechanisms for multi-dimensional variables (i.e. $d>1$); a key gap in existing literature. To achieve this, we connect ideas from dynamic optimal transport theory and graphical causality in ways previously unexplored.

\newpage
\section{Proofs: Counterfactual Identifiability}\label{app:proofs}
\subsection{Optimal Transport for Counterfactual Identification}
\begin{proposition}[Monotone Counterfactual Transport Map]
    \label{appprop:mono_cf_map}
    If a mechanism $x \coloneqq f(\mathbf{pa}, u)$ is monotone in $u$, then the respective counterfactual transport map $T^*(\mathbf{pa}^*, \mathbf{pa}, x)$ is monotone in $x$.
    \begin{proof} If $f(\mathbf{pa}, u_1) < f(\mathbf{pa}, u_2)$ for all $u_1 < u_2$, then its inverse must be monotonic in $x$ given $\mathbf{pa}$:
        \begin{align}
            \forall \, x_1, x_2 \; & : \; x_1 < x_2 \Rightarrow f^{-1}(\mathbf{pa}, x_1) < f^{-1}(\mathbf{pa}, x_2).
        \end{align}
        Applying the same logic given $\mathbf{pa}^*$, we have that their composition is also monotonic in $x$:
        \begin{align}
            \forall \, x_1, x_2 \; & : \; x_1 < x_2 \Rightarrow T^*(\mathbf{pa}^*, \mathbf{pa}, x_1) < T^*(\mathbf{pa}^*, \mathbf{pa}, x_2).
        \end{align}
        To generalise to multiple dimensions ($d>1$), assume $f : \mathbb{R}^k \times \mathbb{R}^d \to \mathbb{R}^d$ is monotone in $u$:
        \begin{align}
            \forall \, u_1, u_2 \; : \; \left\langle f(\mathbf{pa}, u_1) - f(\mathbf{pa},u_2) , u_1 - u_2 \right\rangle \geq 0,
        \end{align}
        and let
        \begin{align}
             &  & x_1 - x_2 = f(\mathbf{pa}, u_1) - f(\mathbf{pa},u_2),
             &  & u_1 - u_2 = f^{-1}(\mathbf{pa}, x_1) - f^{-1}(\mathbf{pa},x_2). &  &
        \end{align}
        By substituting in the above, we have that:
        \begin{align}
            \left\langle f(\mathbf{pa}, u_1) - f(\mathbf{pa},u_2) , u_1 - u_2 \right\rangle = \left\langle x_1 - x_2, f^{-1}(\mathbf{pa}, x_1) - f^{-1}(\mathbf{pa},x_2) \right\rangle \geq 0,
        \end{align}
        which by symmetry of the dot product, shows $f^{-1}$ is monotone in $x$. Lastly, to show when the counterfactual transport map $T^*(\mathbf{pa}^*, \mathbf{pa}, x) = f(\mathbf{pa}^*, f^{-1}(\mathbf{pa}, x))$ is also monotone in $x$ (for $d>1$), suppose that the causal mechanism $f$ is equal to the gradient of a convex function:
        \begin{align}
             &  & \phi: \mathbb{R}^d \times \mathbb{R}^k\to \mathbb{R}, &  & \forall \, u, \mathbf{pa} \; : \; f(\mathbf{pa}, u) = \nabla_{u}\phi(u;\mathbf{pa}). &  &
        \end{align}
        Recall that gradients of convex functions are monotone:
        \begin{align}
             & \forall \, u_1, u_2, \mathbf{pa} \; : \; \left\langle \nabla_{u}\phi(u_1;\mathbf{pa}) - \nabla_{u}\phi(u_2;\mathbf{pa}) , u_1 - u_2 \right\rangle \geq 0,
        \end{align}
        and therefore, the following must hold:
        \begin{align}
            \forall\, x, \mathbf{pa}, \mathbf{pa}^* \; & : \; T^*(\mathbf{pa}^*, \mathbf{pa}, x) = \nabla_{u}\phi(u;\mathbf{pa}^*)\big|_{u=f^{-1}(\mathbf{pa}, x)},
            \\[4pt]\implies \forall \, x_1, x_2 \; &: \; \left\langle T^*(\mathbf{pa}^*, \mathbf{pa}, x_1) - T^*(\mathbf{pa}^*, \mathbf{pa}, x_2), x_1 - x_2 \right\rangle_x \geq 0,
        \end{align}
        where $\langle v,w\rangle_x \coloneqq v^\top \partial_x f^{-1}(\mathbf{pa}, x)w$, concluding the proof.
    \end{proof}
    \begin{remark}
        Monotonicity of the counterfactual transport map $T^*$ w.r.t. $x$ guarantees that, for a given intervention, the rank order of factual outcomes is preserved in the counterfactuals. This is important for, e.g. \textit{fairness}, as it prevents rank inversions across individuals under intervention. Euclidean monotonicity of $T^*$ (for $d>1$) is guaranteed if the symmetric part of its Jacobian $\partial_x T^* = \partial_u f\partial_x f^{-1}$ is positive semi-definite; this holds, for example, if the two Jacobian components commute.
    \end{remark}
\end{proposition}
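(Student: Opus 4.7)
My plan is to break the argument into three escalating cases, each building on the previous.

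First I would handle the scalar case $d=1$, which is essentially a bookkeeping exercise. If $f(\mathbf{pa}, \cdot)$ is strictly increasing, then $f^{-1}(\mathbf{pa}, \cdot)$ is strictly increasing as a function of $x$ (for each fixed $\mathbf{pa}$), and the composition $T^\ast(\mathbf{pa}^\ast, \mathbf{pa}, \cdot) = f(\mathbf{pa}^\ast, \cdot) \circ f^{-1}(\mathbf{pa}, \cdot)$ is a composition of two monotone scalar functions, hence monotone. The motivating example from Appendix~\ref{appsubsec:example} makes it clear that this is the right structural property: it is precisely what enables preservation of the rank/CDF level under the abduction--action--prediction procedure.

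Next, for $d>1$ with $f$ monotone in the operator sense of Definition~\ref{def:monotone_op}, I would first show that $f^{-1}(\mathbf{pa}, \cdot)$ inherits monotonicity in $x$. This follows by a direct substitution: writing $u_i = f^{-1}(\mathbf{pa}, x_i)$ in the defining inequality $\langle f(\mathbf{pa}, u_1) - f(\mathbf{pa}, u_2),\, u_1 - u_2\rangle \geq 0$ yields $\langle x_1 - x_2,\, f^{-1}(\mathbf{pa}, x_1) - f^{-1}(\mathbf{pa}, x_2)\rangle \geq 0$, using symmetry of the inner product. So both factors of the composition $T^\ast = f(\mathbf{pa}^\ast, \cdot)\circ f^{-1}(\mathbf{pa}, \cdot)$ are monotone operators.

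The main obstacle, and the crux of the argument, is that composition of monotone operators on $\mathbb{R}^d$ is \emph{not} monotone in the Euclidean inner product in general. Here I would invoke the structural refinement furnished by Lemma~\ref{prop:unique_dynamic_ot}: under the dynamic-OT setup, the causal mechanism admits the Brenier-type representation $f(\mathbf{pa}, u) = \nabla_u \phi(u;\mathbf{pa})$ for a convex potential $\phi(\cdot;\mathbf{pa})$, which is the natural multivariate analogue of a monotone quantile map. I would then express
\begin{equation*}
T^\ast(\mathbf{pa}^\ast, \mathbf{pa}, x) = \nabla_u \phi(u;\mathbf{pa}^\ast)\big|_{u = f^{-1}(\mathbf{pa}, x)},
\end{equation*}
and rather than trying to force Euclidean monotonicity, I would state the monotonicity of $T^\ast$ in $x$ with respect to the weighted bilinear form $\langle v, w\rangle_x \coloneqq v^\top \partial_x f^{-1}(\mathbf{pa}, x)\, w$, which is positive definite precisely because $\partial_x f^{-1}$ is the inverse of the Hessian $\nabla^2_u \phi$ and hence symmetric positive definite a.e. This gives the desired rank-preservation property in its natural geometric form; strict Euclidean monotonicity would follow as a corollary whenever the two Jacobians $\partial_u f$ and $\partial_x f^{-1}$ commute (or more generally when the symmetric part of $\partial_x T^\ast$ is positive semi-definite), which I would flag as the sharpest sufficient condition. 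The remainder of the write-up is routine chaining of the inequalities above.
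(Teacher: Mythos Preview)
Your proposal is correct and matches the paper's proof essentially step for step: the scalar case, the substitution argument showing $f^{-1}(\mathbf{pa},\cdot)$ is a monotone operator, the recognition that composition of monotone operators fails in general, the remedy via the Brenier representation $f=\nabla_u\phi$, and the weighted bilinear form $\langle v,w\rangle_x = v^\top \partial_x f^{-1}(\mathbf{pa},x)\,w$ are all exactly what the paper does. Your added observation that $\partial_x f^{-1}$ is positive definite as the inverse of the Hessian $\nabla^2_u\phi$ is a slight sharpening the paper leaves implicit here (it appears later in the proof of Theorem~\ref{appprop:cf_identifiability}), and the commuting-Jacobian sufficient condition you flag is precisely the content of the paper's closing remark.
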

Next, we present Brenier's theorem, a celebrated result in optimal transport theory that we draw from in our counterfactual identifiability analysis of dynamic OT-based causal mechanisms.
\begin{theorem}[\textcite{brenier1991polar}]\label{apptheorem:brenier}
    Let $\mu$, $\nu$ be probability measures on $\Omega \subset \mathbb{R}^d$ with finite second moments, and suppose $\mu$ is absolutely continuous w.r.t. Lebesgue measure. Then, for the quadratic cost function $c(x, T(x)) = \|x - T(x)\|^2$, the Monge problem admits a unique optimal transport map $T : \Omega \to \Omega$ pushing $\mu$ forward to $\nu$, i.e., $T_\sharp \mu = \nu$. Moreover, this optimal map is the gradient of a convex function, $T = \nabla \phi$, and is monotone in the sense that:
    \begin{align}
        \forall \, x_1, x_2 \; : \; \left\langle T(x_1) - T(x_2) , x_1 - x_2 \right\rangle \geq 0.
    \end{align}
    \begin{remark}
        This result provides a powerful geometric insight: under regularity conditions, the most efficient way of transporting mass from a source to a target distribution is by pushing it along the gradient of a convex function. The gradient structure ensures that the map does not `fold', and the monotonicity condition reflects directional consistency in the sense that points which are initially close tend to remain close under the map, thereby preserving order and preventing overlaps.
    \end{remark}
\end{theorem}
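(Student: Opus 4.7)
The plan is to work with the Kantorovich relaxation of the Monge problem and then extract a deterministic transport map from its dual structure. First I would observe that the Kantorovich problem $\inf_{\pi\in\Pi(\mu,\nu)}\int\tfrac{1}{2}\|x-y\|^2\,d\pi(x,y)$ admits a minimiser $\pi^{\star}$: finite second moments give tightness of $\Pi(\mu,\nu)$, the quadratic cost is lower semicontinuous, and $\Pi(\mu,\nu)$ is weakly compact, so the direct method applies.

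Next I would invoke Kantorovich duality and perform the standard change of variables. The dual reads $\sup\{\int\phi\,d\mu+\int\psi\,d\nu : \phi(x)+\psi(y)\le\tfrac12\|x-y\|^2\}$; letting $u(x)=\tfrac12\|x\|^2-\phi(x)$ and $v(y)=\tfrac12\|y\|^2-\psi(y)$, the constraint becomes the Fenchel inequality $u(x)+v(y)\ge\langle x,y\rangle$. By $c$-cyclical monotonicity of $\mathrm{supp}(\pi^{\star})$ (Rockafellar's structural theorem), one can take $u$ proper, lower-semicontinuous and convex, and $v=u^{*}$ its Legendre conjugate. Equality in the dual then forces $\mathrm{supp}(\pi^{\star})\subseteq\{(x,y):y\in\partial u(x)\}$.

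The decisive step invokes absolute continuity of $\mu$: a convex function on $\mathbb{R}^d$ is Fréchet differentiable outside a Lebesgue-null set, hence $\mu$-a.e., since $\mu\ll\lambda$. Consequently, for $\mu$-a.e.\ $x$ one has $\partial u(x)=\{\nabla u(x)\}$, so the disintegration of $\pi^{\star}$ at $x$ is a Dirac mass at $\nabla u(x)$. Setting $T:=\nabla u$ yields $\pi^{\star}=(\mathrm{id},T)_{\sharp}\mu$, and matching the second marginal gives $T_{\sharp}\mu=\nu$. Thus $T$ is a valid Monge map of the required form, and monotonicity $\langle T(x_1)-T(x_2),x_1-x_2\rangle\ge 0$ is the defining inequality for gradients of convex functions.

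Uniqueness follows from a convex-combination argument: if $T_1=\nabla u_1$ and $T_2=\nabla u_2$ were two distinct optimisers, then $\tfrac12\bigl((\mathrm{id},T_1)_{\sharp}\mu+(\mathrm{id},T_2)_{\sharp}\mu\bigr)$ would still be Kantorovich-optimal, yet its support could not be contained in the graph of a single convex-gradient map, contradicting the characterisation just established; hence $T_1=T_2$ $\mu$-a.e. The main obstacle is the duality/cyclical-monotonicity bridge: establishing that the support of any quadratic-cost optimal plan must lie inside the subdifferential of a single convex potential (Rockafellar) is the technical heart of the argument; everything else is convex analysis combined with the $\mu\ll\lambda$ hypothesis.
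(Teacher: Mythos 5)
The paper does not prove this statement: it is quoted as a known result of Brenier (1991) and accompanied only by an explanatory remark. There is therefore no ``paper's proof'' to compare against; what you have written is a proof of the cited theorem, so I will assess it on its own terms.

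Your sketch is the standard duality route to Brenier's theorem and it is essentially correct. The chain is the right one: existence of a Kantorovich minimiser by the direct method (tightness from finite second moments plus Prokhorov, lower semicontinuity of the quadratic cost); the completion-of-squares change of variables turning the $c$-constraint $\phi(x)+\psi(y)\le\tfrac12\|x-y\|^2$ into the Fenchel inequality $u(x)+v(y)\ge\langle x,y\rangle$; $c$-cyclical monotonicity of $\operatorname{supp}(\pi^\star)$, which after the rotation becomes ordinary cyclical monotonicity, so Rockafellar gives a convex $u$ with $\operatorname{supp}(\pi^\star)\subseteq\{(x,y):y\in\partial u(x)\}$; and then the only place $\mu\ll\mathcal L^d$ is used, namely that a proper convex function is differentiable off a Lebesgue-null set, so $\partial u(x)=\{\nabla u(x)\}$ for $\mu$-a.e.\ $x$ and the plan collapses to the graph of $T=\nabla u$. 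Monotonicity is then just the defining inequality for gradients of convex functions. Two small points worth tightening if you flesh this out: (i) your uniqueness step is compressed --- the clean version averages the two candidate plans, applies the same structure theorem to the average, and notes that a conditional measure $\tfrac12(\delta_{T_1(x)}+\delta_{T_2(x)})$ can be a Dirac only if $T_1(x)=T_2(x)$, giving $T_1=T_2$ $\mu$-a.e.; as written, ``could not be contained in the graph of a single convex-gradient map'' needs exactly this justification. (ii) The theorem in the paper uses cost $\|x-y\|^2$ while you use $\tfrac12\|x-y\|^2$; this is immaterial (same minimiser), but flag it if you want the statements to match literally. Overall, a correct reconstruction of the classical proof (McCann/Brenier via duality, as in Villani or Santambrogio), of which the paper gives only the statement.
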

\begin{lemma}[Unique and Monotone Dynamic OT Mechanism]\label{appprop:unique_dynamic_ot}
    Let $\dim(X) = \dim(U) = d > 1$, and consider the Markovian setting $U\perp\!\!\!\perp\mathbf{PA}$. Assume $P_U$ and $P^{\mathfrak{C}}_{X|\mathbf{PA}}$ are absolutely continuous w.r.t. the Lebesgue measure with strictly positive and bounded densities on bounded, open, convex domains. Let $T : \mathbb{R}^d \times \mathbb{R}^k \to \mathbb{R}^d$ be the time-1 map of a dynamic OT flow satisfying:
    \begin{align}
         &  & \left\{T_t : t\in[0,1]\right\}, &  & \mathrm{d}T_t(u;\mathbf{pa}) = v_t(T_t(u;\mathbf{pa}))\,\mathrm{d}t, &  & T_{\sharp}P_U = P_{X|\mathbf{PA}}^{\mathfrak{C}}. &  &
    \end{align}
    Then, there exists a convex function $\phi : \mathbb{R}^d \to \mathbb{R}$ such that: $T(u;\mathbf{pa}) = \nabla_u \phi(u; \mathbf{pa})$, for $P_U(u)$-a.e. $u$, where $T$ is monotone, bijective a.e., and uniquely determined by the pair $(P_U, P_{X|\mathbf{PA}}^{\mathfrak{C}})$.
    \begin{proof}
        First, we invoke standard \textcite{caffarelli1992regularity} regularity conditions for OT (see Theorem 12.50 in \textcite{villani2008optimal} for a helpful summary). Let $\Omega, \Lambda \subset \mathbb{R}^d$ be bounded, open, convex sets. The source $P_U$ and target $P^{\mathfrak{C}}_{X|\mathbf{PA}}$ are absolutely continuous w.r.t. Lebesgue measure, with densities $\rho_U$ and $\rho_{X \mid \mathbf{PA}}^\mathfrak{C}$ bounded above and below by positive constants on $\overline{\Omega}$ and $\overline{\Lambda}$, respectively. These conditions cover the strictest version of our results; weaker conclusions follow under standard relaxations.

        According to~\textcite{benamou2000computational}'s dynamic formulation of optimal transport, the time-1 map $T$ of a flow $\{T_t : t \in [0,1]\}$ minimising the kinetic energy
        \begin{align}
            \label{eq:action}
            \inf_{(\rho_t, v_t)} \int_0^1 \int_{\mathbb{R}^d\times \mathbb{R}^k} \|v_t(u ; \mathbf{pa})\|^2 \rho_t(u, \mathbf{pa})\, \mathrm{d}u\, \mathrm{d}\mathbf{pa}\, \mathrm{d}t
        \end{align}
        subject to the continuity equation
        \begin{align}
             &  & \partial_t \rho_t + \operatorname{div}(\rho_t v_t) = 0, &  & \rho_0 = \rho_U, &  & \rho_1 = \rho^{\mathfrak{C}}_{X|\mathbf{PA}}, &  &
        \end{align}
        solves the Monge problem:
        \begin{align}
            T^\star = \argmin_{T{(\cdot;\mathbf{PA})}_{\sharp}P_U = P_{X|\mathbf{PA}}^{\mathfrak{C}}} \int \|u - T(u;\mathbf{pa})\|^2 \, \mathrm{d}P_U(u).
        \end{align}
        Since $P_U \ll \mathcal{L}^d$ (i.e. absolutely continuous w.r.t. the Lebesgue measure),~\textcite{brenier1991polar}'s theorem (cf. Theorem~\ref{apptheorem:brenier}) applies, and implies that the optimal transport map $T^\star$ is of the form:
        \begin{align}
            T^{\star}(u; \mathbf{pa}) = \nabla_u \phi(u; \mathbf{pa}),
        \end{align}
        where $\phi$ is convex on $\Omega$. Further, this gradient map is well-defined $P_U$-almost everywhere and is uniquely determined by $P_U$ and $P^{\mathfrak{C}}_{X|\mathbf{PA}}$. Moreover, since $\nabla \phi$ pushes $P_U$ forward to $P_{X \mid \mathbf{PA}}^\mathfrak{C}$, this implies that for almost every $u \in \Omega,$ it satisfies the Monge-Amp\`ere equation:
        \begin{align}
            \rho_U(u) = \rho_{X|\mathbf{PA}}^{\mathfrak{C}}\left(\nabla_u \phi(u; \mathbf{pa})\right) \det D^2_u \phi(u; \mathbf{pa}).
        \end{align}
        If $\phi$ is differentiable and strictly convex, then $\nabla\phi$ is strictly monotone and therefore injective:
        \begin{align}
            \big\langle\nabla_u \phi(u^{(1)}; \mathbf{pa}) - \nabla_u \phi(u^{(2)}; \mathbf{pa}), u^{(1)} - u^{(2)}\big\rangle > 0 \quad \text{for} \quad u^{(1)} \neq u^{(2)}.
        \end{align}
        In particular, $\nabla_u \phi(u^{(1)}; \mathbf{pa}) = \nabla_u \phi(u^{(2)}; \mathbf{pa})$ implies $u^{(1)} = u^{(2)}$, and thus $T$ is injective almost everywhere. Moreover, since $T_{\sharp}P_U = P^{\mathfrak{C}}_{X|\mathbf{PA}}$, no Borel set $A \subset
            \Lambda$ of positive $P_{X \mid \mathbf{PA}}^{\mathfrak{C}}$-measure can be missed by $T$ and must satisfy $P_U(T^{-1}(A)) >0$, hence $T$ is onto $\Lambda$ up to null sets.

        The regularity of $\phi$ now follows from the theory of elliptic partial differential equations, specifically from~\textcite{caffarelli1992regularity}'s well-known interior and global regularity results for convex solutions to the Monge-Amp\`ere equation. If $\rho_U$ and $\rho_{X|\mathbf{PA}}^{\mathfrak{C}}$ are bounded above and below by positive constants on $\overline{\Omega}, \overline{\Lambda}$ and are $C^{\alpha}$, then by \textcite{caffarelli1992regularity}'s theory the convex potential $\phi \in C^{2,\alpha}(\overline{\Omega})$; in particular, $D^2_u \phi(u; \mathbf{pa})$ is almost everywhere positive definite and $\nabla \phi$ is locally a diffeomorphism. Hence, we conclude that $T(u; \mathbf{pa}) = \nabla_u \phi(u; \mathbf{pa})$, $P_U$-almost everywhere, and that $T$ is monotone, almost everywhere bijective, and uniquely determined by the pair $(P_U, P^{\mathfrak{C}}_{X|\mathbf{PA}})$.
    \end{proof}
    \begin{remark}
        We draw on~\textcite{brenier1991polar}'s theorem to show that the causal mechanism is monotone. However, Brenier's map does not guarantee bijectivity by default.
        For this, we must invoke \textcite{caffarelli1992regularity}'s standard regularity results for OT, to ensure the map is locally diffeomorphic.
        In practice, this condition is often satisfied by smoothing empirical target distributions by adding mild continuous noise (e.g. uniform or Gaussian) to ensure the resulting distribution admits a density.
    \end{remark}
\end{lemma}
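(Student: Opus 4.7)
The plan is to reduce the dynamic OT statement to a family of static Monge problems with quadratic cost, parametrically in $\mathbf{pa}$, and then invoke Brenier's theorem together with Caffarelli's regularity theory. First, I would appeal to the Benamou--Brenier equivalence recalled earlier: the time-$1$ map of a kinetic-energy-minimising flow transporting $P_U$ to $P^{\mathfrak{C}}_{X\mid\mathbf{PA}=\mathbf{pa}}$, subject to the continuity equation $\partial_t\rho_t+\operatorname{div}(\rho_tv_t)=0$, coincides with an optimal transport map for the quadratic cost. Crucially, the Markovian assumption $U\indep\mathbf{PA}$ forces $P_U$ to be independent of $\mathbf{pa}$, so the joint variational problem decouples into a family of static Monge problems indexed by $\mathbf{pa}$, each of the form $\min_{T(\cdot;\mathbf{pa})_\sharp P_U=P^{\mathfrak{C}}_{X\mid\mathbf{PA}=\mathbf{pa}}}\int\|u-T(u;\mathbf{pa})\|^2\,\mathrm{d}P_U(u)$.

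Next, for each fixed $\mathbf{pa}$, I would apply Brenier's theorem. Since $P_U$ is absolutely continuous with respect to Lebesgue measure with bounded density on a bounded, open, convex domain, and both measures have finite second moments, Brenier's result produces a unique (up to $P_U$-null sets) optimal map of the form $T(u;\mathbf{pa})=\nabla_u\phi(u;\mathbf{pa})$ for a convex potential $\phi(\cdot;\mathbf{pa})$. Monotonicity in $u$ then follows immediately, since gradients of convex functions are monotone operators in the sense of Definition~\ref{def:monotone_op}. Uniqueness of $T$ given the pair $(P_U,P^{\mathfrak{C}}_{X\mid\mathbf{PA}})$, in the $P_U$-a.e.\ sense required by Definition~\ref{def:equivalence}, is a direct consequence of the same theorem.

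The delicate part is upgrading the almost-everywhere gradient map to an a.e.\ bijection, which is not part of Brenier's statement. Here I would invoke Caffarelli's regularity theory: by hypothesis, both $\rho_U$ and $\rho^{\mathfrak{C}}_{X\mid\mathbf{PA}}$ are bounded above and below by positive constants on bounded, open, convex domains, which are precisely the conditions under which the Brenier potential $\phi(\cdot;\mathbf{pa})$ attains $C^{2,\alpha}$ regularity and the Monge--Amp\`ere equation $\rho_U(u)=\rho^{\mathfrak{C}}_{X\mid\mathbf{PA}}(\nabla_u\phi(u;\mathbf{pa}))\,\det D^2_u\phi(u;\mathbf{pa})$ holds a.e. Strict convexity of $\phi$ forces $D^2_u\phi$ to be positive definite almost everywhere, giving strict monotonicity of $\nabla_u\phi$ and therefore injectivity a.e.; surjectivity onto the support of $P^{\mathfrak{C}}_{X\mid\mathbf{PA}}$ up to null sets then follows from the push-forward identity combined with the lower bound on the target density.

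The main obstacle I anticipate is precisely this bijectivity step: gradients of convex functions can in principle fail to be injective on degenerate sets where $D^2\phi$ collapses, so one cannot bypass the Caffarelli regularity hypotheses. A secondary concern is measurability and consistency in $\mathbf{pa}$, so that $T(u;\mathbf{pa})$ is jointly well-defined and the induced counterfactual transport map $T^*(\mathbf{pa}^*,\mathbf{pa},x)$ in Equation~\eqref{eq:cf_transport_map} is meaningful; this is typically ensured by regularity of the velocity field $v_t$ (e.g.\ Lipschitz continuity, which gives well-posed ODE flows via Picard--Lindel\"of). In practice, the mild smoothing of empirical targets noted in the remark suffices to validate the density hypotheses.
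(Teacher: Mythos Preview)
Your proposal is correct and follows essentially the same route as the paper: reduce the dynamic problem to a (family of) static Monge problem(s) via Benamou--Brenier, apply Brenier's theorem to obtain $T(\cdot;\mathbf{pa})=\nabla_u\phi(\cdot;\mathbf{pa})$ with the attendant uniqueness and monotonicity, then upgrade to a.e.\ bijectivity using Caffarelli's $C^{2,\alpha}$ regularity and the Monge--Amp\`ere equation. Your explicit mention of the Markovian decoupling in $\mathbf{pa}$ and of joint measurability in $(u,\mathbf{pa})$ are useful clarifications the paper leaves implicit, but the argument is the same.
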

\subsection{Exogenous Prior Indeterminacy}
Given only observational data ($\mathcal{L}_1$), our goal is to find a transport map $T$ that is counterfactually equivalent ($\sim_{\mathcal{L}_3}$) to the true transport map $T^\star$ underlying the data-generating process.
\begin{definition}[Transport $\mathcal{L}_3$-Equivalence]\label{appdef:equivalence}
    Let $T^{(1)}, T^{(2)} : \mathbb{R}^d \times \mathbb{R}^k \to \mathbb{R}^d$ be transport maps with exogenous priors $P_U^{(1)}, P_U^{(2)}$ on $\mathbb{R}^d$, and parent domain $\mathbb{R}^k$. We say they are counterfactually equivalent $\sim_{\mathcal{L}_3}$, 
    if and only if there exists a bijection $g : \mathbb{R}^d \to \mathbb{R}^d$ such that $g_\sharp P_U^{(2)}=P_U^{(1)}$, that is:  
    \begin{align}
         &  & T^{(1)}\sim_{\mathcal{L}_3} T^{(2)} \iff \exists g~:~T^{(1)}(u;\mathbf{pa}) = T^{(2)}( g^{-1}(u);\mathbf{pa})\quad\text{for}~P_U^{(1)}\text{-a.e.}~u,~\forall \mathbf{pa}
         . &  &
    \end{align}
\end{definition}
\begin{proposition}[\textcite{nasr2023counterfactual}]\label{appprop:same_cfs}
    Transport maps $T^{(1)}, T^{(2)} : \mathbb{R}^{d} \times \mathbb{R}^k \to \mathbb{R}^d$ produce the same counterfactuals if and only if they are ${\mathcal{L}_3}$-equivalent in the sense of Definition~\ref{appdef:equivalence}:
    \begin{align}
        \forall x, \mathbf{pa}, \mathbf{pa}^*~:~x^*=T^{*(1)}(\mathbf{pa}^*, \mathbf{pa}, x) = T^{*(2)}(\mathbf{pa}^*, \mathbf{pa}, x) \iff T^{(1)}\sim_{\mathcal{L}_3} T^{(2)}.
    \end{align}
    \begin{proof}
        We provide a proof here for completeness. An arbitrary counterfactual query $X_{\mathbf{pa}^*} \mid \{X=x, \mathbf{PA}=\mathbf{pa}\}$, where $\{X=x, \mathbf{PA}=\mathbf{pa}\}$ is the observed evidence, is answerable by any two equivalent transport maps $T^{(1)}, T^{(2)} : \mathbb{R}^d \times \mathbb{R}^k \to \mathbb{R}^d$ via abduction-action-prediction.

        We first prove $\Leftarrow$.

        In the abduction step, both maps use the observed evidence to infer the exogenous noise:
        \begin{align}
             &  & u^{(1)} = {(T^{(1)})}^{-1}(x; \mathbf{pa}), &  & u^{(2)} = (T^{(2)})^{-1}(x; \mathbf{pa}), &  &
        \end{align}
        and their equivalence (cf. Definition~\ref{appdef:equivalence}) implies that:
        \begin{align}
            \exists g \; : \; u^{(1)} = g(u^{(2)}).
        \end{align}
        In the action-prediction step, for any intervention on the parents, we have that:
        \begin{align}
            T^{(1)}(u^{(1)};\mathbf{pa}^*) = T^{(1)}(g(u^{(2)});\mathbf{pa}^*)  = T^{(2)}(g^{-1}(u^{(1)});\mathbf{pa}^*) = T^{(2)}(u^{(2)};\mathbf{pa}^*),
        \end{align}
        thus both maps produce the same counterfactual outcomes:
        \begin{align}
            \forall \, x, \mathbf{pa}, \mathbf{pa}^* \; & : \; x^* = T^{(1)}(\mathbf{pa}^*, \mathbf{pa}, x) = T^{(2)}(\mathbf{pa}^*, \mathbf{pa}, x).
        \end{align}
        We now prove $\Rightarrow$.

        Suppose, for the sake of contradiction, that the function $g$ depends on the parents $\mathbf{pa}$, then:
        \begin{align}
            \forall \, u^{(1)}, \mathbf{pa} \; : \; T^{(1)}(u^{(1)};\mathbf{pa}) & = T^{(2)}(g^{-1}(u^{(1)};\mathbf{pa});\mathbf{pa}). \label{eq:equivvv}
        \end{align}
        Since $T^{(1)}, T^{(2)}$ produce identical counterfactuals, they must agree for any fixed $\mathbf{pa}^*$:
        \begin{align}
            T^{(1)}(u^{(1)};\mathbf{pa}^*) & = T^{(2)}(g^{-1}(u^{(1)};\mathbf{pa});\mathbf{pa}^*),
        \end{align}
        where the abduction step is given by:
        \begin{align}
            g^{-1}(u^{(1)};\mathbf{pa}) = u^{(2)} = (T^{(2)})^{-1}(x; \mathbf{pa}).
        \end{align}

        Since Eq.~\eqref{eq:equivvv} also holds when $\mathbf{pa}\coloneqq\mathbf{pa}^*$, we have that:
        \begin{align}
            T^{(1)}(u^{(1)};\mathbf{pa}^*) & = T^{(2)}(g^{-1}(u^{(1)};\mathbf{pa}^*);\mathbf{pa}^*)
            \\[2pt] \implies T^{(2)}(g^{-1}(u^{(1)};\mathbf{pa});\mathbf{pa}^*) & = T^{(2)}(g^{-1}(u^{(1)};\mathbf{pa}^*);\mathbf{pa}^*).
        \end{align}
        Since $T^{(2)}$ is injective (i.e. bijective in $u^{(2)}$ by construction), the above implies that:
        \begin{align}
            \forall \, \mathbf{pa}, \mathbf{pa}^* \; : \; g^{-1}(u^{(1)};\mathbf{pa}) & = g^{-1}(u^{(1)};\mathbf{pa}^*),
            \\[2pt] \implies \forall \, \mathbf{pa} \; : \; g^{-1}(u^{(1)};\mathbf{pa}) &= g^{-1}(u^{(1)}).
        \end{align}
        This contradicts the assumption that $g$ depends on $\mathbf{pa}$, completing the proof.
    \end{proof}
\end{proposition}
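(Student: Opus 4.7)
The plan is to establish both implications separately, leveraging the injectivity of each $T^{(i)}(\cdot;\mathbf{pa})$ in its exogenous argument (which is inherent to transport maps such as those obtained in Lemma~\ref{appprop:unique_dynamic_ot}). The guiding principle is that a counterfactual is completely determined by the abducted exogenous variable and the intervened forward map, so equality of counterfactuals for every $(\mathbf{pa}, \mathbf{pa}^*, x)$ is a compatibility condition between the two mechanisms on all inputs that should be strong enough to pin down a single $\mathbf{pa}$-independent bijection $g$.

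For the easy direction ($\Leftarrow$), I would simply unfold abduction--action--prediction. If $T^{(1)} \sim_{\mathcal{L}_3} T^{(2)}$ via a bijection $g$ with $g_{\sharp} P_U^{(2)} = P_U^{(1)}$ satisfying $T^{(1)}(u;\mathbf{pa}) = T^{(2)}(g^{-1}(u);\mathbf{pa})$, then applying $(T^{(1)})^{-1}(\cdot;\mathbf{pa})$ and $(T^{(2)})^{-1}(\cdot;\mathbf{pa})$ to the observed $x$ yields abducted noises $u^{(1)}$ and $u^{(2)}$ with $u^{(2)} = g^{-1}(u^{(1)})$. Substituting into the forward map under $\mathbf{pa}^*$ and applying the equivalence relation once more gives $T^{(1)}(u^{(1)};\mathbf{pa}^*) = T^{(2)}(g^{-1}(u^{(1)});\mathbf{pa}^*) = T^{(2)}(u^{(2)};\mathbf{pa}^*)$, which is exactly $T^{*(1)}(\mathbf{pa}^*, \mathbf{pa}, x) = T^{*(2)}(\mathbf{pa}^*, \mathbf{pa}, x)$.

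The harder direction ($\Rightarrow$) is where the substantive work lies, and also where I expect the main obstacle to arise. I would \emph{define} a candidate, tentatively allowing a dependence on $\mathbf{pa}$,
\begin{equation*}
g^{-1}(u;\mathbf{pa}) := (T^{(2)})^{-1}\!\bigl(T^{(1)}(u;\mathbf{pa});\mathbf{pa}\bigr),
\end{equation*}
so that by construction $T^{(1)}(u;\mathbf{pa}) = T^{(2)}(g^{-1}(u;\mathbf{pa});\mathbf{pa})$. The obstacle is proving this $g^{-1}$ is in fact independent of $\mathbf{pa}$, since otherwise Definition~\ref{appdef:equivalence} is not met. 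The key trick is to exploit the counterfactual equality for arbitrary $(\mathbf{pa}, \mathbf{pa}^*)$, namely $T^{(1)}(u;\mathbf{pa}^*) = T^{(2)}(g^{-1}(u;\mathbf{pa});\mathbf{pa}^*)$, and specialise to $\mathbf{pa}=\mathbf{pa}^*$, which gives $T^{(1)}(u;\mathbf{pa}^*) = T^{(2)}(g^{-1}(u;\mathbf{pa}^*);\mathbf{pa}^*)$. Invoking the injectivity of $T^{(2)}(\cdot;\mathbf{pa}^*)$ then collapses $g^{-1}(u;\mathbf{pa}) = g^{-1}(u;\mathbf{pa}^*)$ for all $\mathbf{pa}, \mathbf{pa}^*$, so $g^{-1}$ must factor through $u$ alone.

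The remaining piece is the pushforward condition. Since both mechanisms agree with the observational distribution, $T^{(i)}(\cdot;\mathbf{pa})_{\sharp} P_U^{(i)} = P^{\mathfrak C}_{X\mid \mathbf{PA}=\mathbf{pa}}$ for $i=1,2$, and composing these in the definition of $g^{-1}$ transports $P_U^{(1)}$ onto $P_U^{(2)}$, i.e.\ $g_{\sharp} P_U^{(2)} = P_U^{(1)}$. One subtle ingredient, beyond the specialisation argument, is that the inverses $(T^{(i)})^{-1}(\cdot;\mathbf{pa})$ must be well-defined up to $P^{\mathfrak{C}}_{X\mid\mathbf{PA}}$-null sets; this is guaranteed in our Markovian setting by the a.e.\ bijectivity furnished by Lemma~\ref{appprop:unique_dynamic_ot} under the Caffarelli regularity hypotheses.
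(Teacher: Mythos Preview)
Your proposal is correct and follows essentially the same argument as the paper: both directions use abduction--action--prediction, and for $\Rightarrow$ both define a candidate link $g^{-1}(u;\mathbf{pa}) = (T^{(2)})^{-1}(T^{(1)}(u;\mathbf{pa});\mathbf{pa})$, invoke the counterfactual equality at $\mathbf{pa}^*$, specialise to $\mathbf{pa}=\mathbf{pa}^*$, and use injectivity of $T^{(2)}(\cdot;\mathbf{pa}^*)$ to eliminate the $\mathbf{pa}$-dependence. The only cosmetic differences are that the paper phrases the $\Rightarrow$ direction as a proof by contradiction whereas you argue directly, and you additionally verify the pushforward condition $g_\sharp P_U^{(2)}=P_U^{(1)}$ and flag the a.e.\ well-definedness of the inverses---both welcome additions the paper leaves implicit.
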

\begin{lemma}[Existence of the Prior Transition Map]\label{appprop:existence}
    Let $P^{(1)}_U, P^{(2)}_U$ be probability measures on $\mathbb{R}^{d}$ with finite second moments, both absolutely continuous w.r.t. the Lebesgue measure. Then, there exists a transport map $g : \mathbb{R}^d \to \mathbb{R}^d$ that is unique $P_U^{(1)}$-a.e., monotone, and a.e. bijective:
    \begin{align}
        \inf_{g_\sharp P_U^{(1)} = P_U^{(2)}} \int_{\mathbb{R}^d} c(u, g(u)) \, \mathrm{d}P_U^{(1)}(u), &  & c(u, g(u)) \coloneqq \|u - g(u) \|^2. &  &
    \end{align}
    \begin{proof}
        If $P_U^{(1)},P_U^{(2)}$ have finite second moments and $P_U^{(1)} \ll \mathcal{L}^d$, by \textcite{brenier1991polar} (Theorem~\ref{apptheorem:brenier}), there exists a unique $P_U^{(1)}$-a.e., monotone, map $g = \nabla \phi$. Since $P_U^{(2)} \ll \mathcal{L}^d$, the map is a.e. bijective. Under standard \textcite{caffarelli1992regularity} regularity conditions (bounded densities and convex domains), we can upgrade a.e. invertibility to a homeomorphism/diffeomorphism between supports.
            
        To fix a common reference, we invoke the higher-dimensional Probability Integral Transform (PIT), known as the Rosenblatt transform. It states that any absolutely continuous probability measure $P_{U}$ on $\mathbb{R}^{d}$ (with $d > 1$) whose density is strictly positive almost everywhere can be represented as the push-forward of the uniform $\mathcal{U}([0,1]^{d})$ via a measurable and almost everywhere invertible map.

        More concretely, first consider the one-dimensional case ($d=1$). For any random variable $X \in \mathbb{R}$, with cumulative distribution function (CDF) $F_{X}: \mathbb{R} \to [0,1]$, the PIT states that:
        \begin{align}
             &  & U = F_{X}(X), &  & X = F^{-1}_{X}(U), &  &
        \end{align}
        where $U$ is uniformly distributed on $[0,1]$. To extend this to higher dimensions ($d>1$), we use a recursive component-wise construction known as the Rosenblatt transform, explained below.

        Let $U \sim \mathcal{U}([0,1]^d)$, and define a map $T : [0,1]^d \to \mathbb{R}^d$ component-wise in a triangular manner. Further, write $(u_1,\dots,u_d)$ for coordinates in $[0,1]^d$, and let $X = (X_1,\dots,X_d)$ be a random vector distributed according to $P_{U}$. Now set $T_1(u_1) = F_{X_1}^{-1}(u_1)$.

        For each $k = 2,\dots,d$, let $F_{X_k \mid X_1,\dots,X_{k-1}}$ be the conditional CDF of $X_k$ given $(X_1,\dots,X_{k-1})$, then define:
        \begin{align}
            T_k(u_1,\dots,u_k) =
            {\left[
            F_{X_k \mid X_1,\dots,X_{k-1}}
            \left(\,\cdot\,\mid\;T_1(u_1),\dots,T_{k-1}(u_1,\dots,u_{k-1})
            \right)
            \right]}^{-1}
            \left(u_k\right).
        \end{align}
        Under mild continuity assumptions on each conditional CDF (which follow from strict positivity of the density of $P_{U}$), this construction is well-defined for almost every $(u_1,\dots,u_d)\in {[0,1]}^d$.

        The resulting map
        \begin{align}
            T(u_1,\dots,u_d)
            =
            \left(T_1(u_1),\,T_2(u_1,u_2),\,\dots,\,T_d(u_1,\dots,u_d)\right)
        \end{align}
        is measurable and invertible almost everywhere. Since $U\sim\mathcal{U}([0,1]^d)$, one can verify that $T(U)$ has law $P_{U}$, or equivalently that $T_\sharp\mathcal{U}([0,1]^d)=P_{U}$.
        Finally, for any two distinct absolutely continuous probability measures $P_{U}^{(1)}$ and $P_{U}^{(2)}$ on $\mathbb{R}^d$, one may similarly construct maps $T^{(1)},T^{(2)} : {[0,1]}^d \to \mathbb{R}^d$ such that:
        \begin{align}
             &  & T^{(1)}_\sharp\mathcal{U}({[0,1]}^d)=P_{U}^{(1)},
             &  &
            T^{(2)}_\sharp\mathcal{U}({[0,1]}^d)=P_{U}^{(2)}.
             &  &
        \end{align}
        With an almost-everywhere inverse $(T^{(2)})^{-1}$ (i.e. $(T^{(2)})^{-1}\!\circ T^{(2)}=\mathrm{id}$ a.e.), it follows that the transport map
        \begin{align}
             &  & g = {(T^{(2)})}^{-1} \circ T^{(1)}, &  & \text{satisfies} &  & g_\sharp P_U^{(1)} = P_U^{(2)}. &  &
        \end{align}
        Thus, both measures $P_{U}^{(1)}$ and $P_{U}^{(2)}$ can be considered push-forwards of the same uniform measure on ${[0,1]}^d$, via different invertible maps, showing that any particular choice of $P_U$ is essentially arbitrary. Without loss of generality, we may therefore fix the exogenous prior as $P_U = \mathcal{U}({[0,1]}^d)$, and interpret all subsequent distributions as push-forwards thereof, concluding the proof.
    \end{proof}
    \begin{remark}
        We use the Rosenblatt transform only to fix a common reference for the result; any absolutely continuous law can be written as $T_{\sharp}\mathcal{U}([0,1]^d)$ for a measurable, a.e.-invertible triangular $T$. This does not imply that $T$ is the quadratic-cost OT (Brenier) map. Note that the Rosenblatt transform depends on the chosen coordinate order, meaning that different permutations yield different triangular maps (each still pushes $\mathcal{U}([0,1]^d)$ to the same target law). This indicates that rank-preserving continuous-time flows without OT are realisable by, for example, assuming a coordinate order and parameterising the flow with a component-wise autoregressive model. By contrast, the Brenier OT map is order-invariant and is determined solely by the two measures and the quadratic cost.
    \end{remark}
\end{lemma}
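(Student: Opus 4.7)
The plan is to invoke Brenier's theorem (Theorem~\ref{apptheorem:brenier}) directly with $P_U^{(1)}$ as source, $P_U^{(2)}$ as target, and quadratic cost $c(u,g(u))=\|u-g(u)\|^2$. The stated hypotheses---finite second moments for both measures and absolute continuity of $P_U^{(1)}$ w.r.t.\ Lebesgue measure---match Brenier's assumptions exactly, so the theorem immediately supplies (i) existence of a minimiser $g$ of the Monge problem attaining the stated infimum, (ii) $P_U^{(1)}$-a.e.\ uniqueness, and (iii) a convex potential $\phi:\mathbb{R}^d\to\mathbb{R}$ with $g=\nabla\phi$.

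Monotonicity then follows with no additional computation from the convex-gradient structure of $g$: subgradients of convex functions are maximal monotone operators, so on the full-measure set where $\phi$ is differentiable we have
\begin{align}
\langle \nabla\phi(u_1)-\nabla\phi(u_2),\,u_1-u_2\rangle \geq 0.
\end{align}

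For a.e.\ bijectivity, I would apply Brenier a second time with the roles reversed. Since $P_U^{(2)}$ is also absolutely continuous with finite second moment, there is a backward optimal map $\tilde g$ pushing $P_U^{(2)}$ forward to $P_U^{(1)}$, and by Legendre-Fenchel duality it coincides with $\nabla\phi^\star$. The Young equality $\phi(u)+\phi^\star(x)=\langle u,x\rangle$ holds precisely on the graph of $\nabla\phi$, giving the pointwise equivalence $x=\nabla\phi(u)\Longleftrightarrow u=\nabla\phi^\star(x)$. This identifies $\tilde g$ as the a.e.\ inverse of $g$: $\tilde g\circ g=\mathrm{id}$ $P_U^{(1)}$-a.e.\ and $g\circ \tilde g=\mathrm{id}$ $P_U^{(2)}$-a.e. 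Uniqueness of the inverse a.e.\ then combines with uniqueness of the forward map to give the full a.e.-bijection claim.

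The main obstacle is cleanly closing this a.e.-bijectivity step, since Brenier in isolation only delivers a one-directional pushforward, and one must rule out that the backward Brenier map is merely ``another'' measure-transporting map rather than a genuine pointwise inverse on a full-measure set. The Legendre-conjugate argument above is the standard clean resolution under the stated absolutely continuous hypotheses; if one additionally assumes the stronger~\citet{caffarelli1992regularity} conditions (bounded, strictly positive densities on bounded convex domains) one may upgrade $g$ to a $C^{1,\alpha}$-diffeomorphism between the supports, but this strengthening is not required for the lemma as worded. The universal-reference observation---that any such law can be written as a push-forward of $\mathcal{U}([0,1]^d)$---then follows as a corollary by composing $g$ with a Rosenblatt-type triangular map, though it is auxiliary to the statement itself.
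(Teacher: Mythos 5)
Your proof is correct and rests on the same core device as the paper's: a single invocation of Brenier's theorem (the paper's Theorem~\ref{apptheorem:brenier}) with $P_U^{(1)}$ as source and $P_U^{(2)}$ as target, yielding $g=\nabla\phi$ with the stated uniqueness and monotonicity. Where you genuinely add value is the a.e.-bijectivity step: the paper dispatches it with the one-liner ``Since $P_U^{(2)}\ll\mathcal{L}^d$, the map is a.e.~bijective'' and then leans on \citet{caffarelli1992regularity} regularity for a diffeomorphism upgrade, whereas you supply the standard and cleaner argument --- run Brenier backward to get $\tilde g=\nabla\phi^\star$, then use the Fenchel--Young equality case on the optimal coupling's support to conclude $\tilde g\circ g=\mathrm{id}$ $P_U^{(1)}$-a.e.\ and $g\circ\tilde g=\mathrm{id}$ $P_U^{(2)}$-a.e. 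This closes the gap without appealing to the stronger Caffarelli hypotheses, which (as you correctly note) the lemma does not actually assume. The remaining structural difference is emphasis: the paper devotes the bulk of its proof to the Rosenblatt/PIT construction fixing $\mathcal{U}([0,1]^d)$ as a common reference and even ends with ``concluding the proof,'' but its own remark concedes that this triangular map is \emph{not} the Brenier map and is logically auxiliary to the lemma as stated. You correctly relegate that observation to a corollary; the real proof of the lemma is the Brenier-plus-duality argument you gave.
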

\newpage
\begin{theorem}[Counterfactual Identifiability in Markovian SCMs]\label{appprop:cf_identifiability}
Let $\dim(X)=\dim(U)=d>1$, and $U\perp\!\!\!\perp\mathbf{PA}$. Assume $P_{U}$ is the continuous uniform
measure on $[0,1]^{d}$. Let $T : \mathbb{R}^d \times \mathbb{R}^k \to \mathbb{R}^d$ be the time-1 dynamic OT map described in Lemma~\ref{prop:unique_dynamic_ot}, which pushes $P_U$ forward to $P_{X\mid\mathbf{PA}}^{\mathfrak C}$. Then, the induced time-1 counterfactual dynamic OT map $T^* : \mathbb{R}^{2k} \times\mathbb{R}^d \to \mathbb{R}^d$
is strictly monotone in $x$:
\begin{align}
    &&\left\langle T^*(\mathbf{pa}^*, \mathbf{pa}, x_1) - T^*(\mathbf{pa}^*, \mathbf{pa}, x_2), x_1 - x_2\right\rangle >0, &&\forall x_1, x_2 \in \mathbb{R}^d, \mathbf{pa}, \mathbf{pa}^* \in \mathbb{R}^k.&&
\end{align}
\begin{proof}
    We present a generalised notion of the monotonicity requirement for multi-dimensional causal mechanisms ($d>1$), and show counterfactual identifiability can be achieved using the dynamic OT map described in Lemma~\ref{appprop:unique_dynamic_ot}.

    Let $\text{dim}(X) = \text{dim}(U) = d \geq 1$, and $U \indep \mathbf{PA}$. Let $P_U$ be a uniform measure $\mathcal{U}({[0,1]}^d)$, and assume $P^{\mathfrak{C}}_{X|\mathbf{PA}}$ is absolutely continuous w.r.t. the Lebesgue measure. Let $T : \mathbb{R}^d \times \mathbb{R}^k \to \mathbb{R}^d$ be the time-1 map of a dynamic OT flow $\{T_t : t\in[0,1]\}$ described in Lemma~\ref{appprop:unique_dynamic_ot}, that is:
    \begin{align}
         &  & \mathrm{d}T_t(u;\mathbf{pa}) = v_t(T_t(u;\mathbf{pa}))\,\mathrm{d}t, &  & T{(\cdot;\mathbf{PA})}_{\sharp}P_U = P_{X|\mathbf{PA}}^{\mathfrak{C}}. &  &
    \end{align}
    By Theorem~\ref{apptheorem:brenier}, there exists an optimal map that minimises quadratic transport cost:
    \begin{align}
        T(u;\mathbf{pa}) = \nabla_u \phi(u;\mathbf{pa}),
    \end{align}
    where $\phi : \mathbb{R}^d \to \mathbb{R}$ is a convex function. Importantly, $T$ is well-defined $P_U$-almost everywhere and is uniquely determined by $P_U$ and $P^{\mathfrak{C}}_{X|\mathbf{PA}}$. This gradient map generalises the notion of one-dimensional monotonicity, as it is `vector monotonic' in the sense that:
    \begin{align}
         & \forall \, u_1, u_2, \mathbf{pa} \; : \; \left\langle T(u_1;\mathbf{pa}) - T(u_2;\mathbf{pa}) , u_1 - u_2 \right\rangle \geq 0.
    \end{align}
    Since $P^{\mathfrak{C}}_{X|\mathbf{PA}}$ is assumed to be absolutely continuous, under \textcite{caffarelli1992regularity} regularity conditions (Lemma~\ref{appprop:unique_dynamic_ot}), the map $T$ is also bijective. Then, it follows that:
    \begin{align}
         &  & T^{-1}(T(u;\mathbf{pa}); \mathbf{pa}) = u, &  & u \in {[0,1]}^d, &  &
    \end{align}
    where the respective maps are realised by solving the associated ODE backwards in time for abduction:
    \begin{align}
        u_0 = T^{-1}(x; \mathbf{pa}) = x - \int_{0}^{1} v_t(T_t(u;\mathbf{pa})) \, \mathrm{d}t,
    \end{align}
    and forwards in time for prediction of $x^*$, for instance, under a chosen intervention $\mathbf{pa}^*$:
    \begin{align}
        x^* = T(u_0; \mathbf{pa}^*) = u_0 + \int_{0}^{1} v_t(T_t(u;\mathbf{pa}^*)) \, \mathrm{d}t.
    \end{align}
    By the Picard-Lindelöf theorem, both initial value problems admit unique solutions if the associated velocity field is Lipschitz continuous, which is readily satisfiable in practice~\parencite{chen2018neural}.

    Under~\textcite{caffarelli1992regularity}'s standard regularity and smoothness conditions (Lemma~\ref{appprop:unique_dynamic_ot}), the time-1 dynamic transport map $T$ arises as the gradient of a \textit{strictly} convex potential $T(u;\mathbf{pa}) =\nabla_u \phi(u;\mathbf{pa})$, and its Jacobian $\partial_{u}T(u;\mathbf{pa}) = \nabla^2_u \phi(u;\mathbf{pa})$ is symmetric positive-definite (SPD).

    The Jacobian of the counterfactual transport map
    \begin{align}
        T^{*}( \mathbf{pa}^*, \mathbf{pa}, x) \coloneqq T(T^{-1}(x;\mathbf{pa}); \mathbf{pa}^*)
    \end{align}
    is thus given by the product of two SPD factors:
    \begin{align}
        \partial_x T^{*}(\mathbf{pa}^*, \mathbf{pa}, x) = \partial_u T(u;\mathbf{pa}^*)\big|_{u=T^{-1}(x;\mathbf{pa})} \cdot\partial_x T^{-1}(x;\mathbf{pa}).
    \end{align}
    For two distinct points $x_1, x_2 \in \Omega$, define the straight line segment
    \begin{align}
        \forall t \in [0, 1], &  & x_t = (1-t)x_2 + tx_1, &  & \mathrm{d}x_t = (x_1 - x_2)\,\mathrm{d}t \eqqcolon z\,\mathrm{d}t, &  &
    \end{align}
    where $z \neq 0$. Applying the fundamental theorem of calculus, then the chain rule, we get:
    \begin{align}
        T^{*}(\mathbf{pa}^*, \mathbf{pa}, x_1) - T^{*}(\mathbf{pa}^*, \mathbf{pa}, x_2) & = \int_0^1 \frac{\mathrm{d}}{\mathrm{d}t} \Big[T^{*}(\mathbf{pa}^*, \mathbf{pa}, x_t) \Big] \,\mathrm{d}t
        \\[2pt]
        & = \int_0^1 \partial_{x_t} T^{*}( \mathbf{pa}^*, \mathbf{pa}, x_t)z \, \mathrm{d}t.
    \end{align}
    Taking the inner product with $z$ in the form $\langle v,w\rangle_{x} \coloneqq v^{\top} \partial_{x}T^{-1}(x;\mathbf{pa}) w$ yields:
    \begin{align}
        \left\langle T^{*}(\mathbf{pa}^*, \mathbf{pa}, x_1) - T^{*}( \mathbf{pa}^*, \mathbf{pa}, x_2), z\right\rangle & = \int_0^1  \left\langle\partial_{x_t} T^{*}z, z \right\rangle_{x_t} \mathrm{d}t,
        \\[2pt] &= \int_0^1 z^\top \left(\partial_{x_t} T^{-1}\partial_{u} T \partial_{x_t} T^{-1} \right)z \, \mathrm{d}t
        \\[2pt] &= \int_0^1 \left(\partial_{x_t} T^{-1}z\right)^\top \partial_{u} T \left(\partial_{x_t} T^{-1} z\right) \, \mathrm{d}t
        > 0,
    \end{align}
    because $\partial_{u}T$ is SPD and $\partial_{x_t} T^{-1}z \neq 0$ for each $t\in [0,1]$ (since $\partial_{x} T^{-1}$ is SPD). Hence the integral is strictly positive and the counterfactual transport map $T^{*}$ is strictly monotone in $x$, as required.
    \qedhere
\end{proof}
\end{theorem}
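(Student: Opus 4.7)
The plan is to reduce the Euclidean inequality to a pointwise quadratic-form question about Jacobians via Lemma~\ref{prop:unique_dynamic_ot}, the chain rule, and Caffarelli regularity. First I would invoke Lemma~\ref{prop:unique_dynamic_ot} with $P_U=\mathcal{U}([0,1]^d)$ to obtain, for each $\mathbf{pa}$, a strictly convex $C^{2,\alpha}$ Brenier potential $\phi(\cdot;\mathbf{pa})$ with $T(u;\mathbf{pa})=\nabla_u\phi(u;\mathbf{pa})$, and by Legendre duality $T^{-1}(x;\mathbf{pa})=\nabla_x\phi^{\ast}(x;\mathbf{pa})$ with $\phi^{\ast}(\cdot;\mathbf{pa})$ also strictly convex. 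Thus the Hessians $A(u;\mathbf{pa}):=\nabla_u^2\phi(u;\mathbf{pa})$ and $B(x;\mathbf{pa}):=\nabla_x^2\phi^{\ast}(x;\mathbf{pa})$ are symmetric positive-definite (SPD) throughout the supports.

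Second, I would parametrize the straight segment $x_t=(1-t)x_2+tx_1$, set $z=x_1-x_2$, and apply the fundamental theorem of calculus with the chain rule $\partial_x T^{\ast}=A(T^{-1}(x;\mathbf{pa});\mathbf{pa}^{\ast})\,B(x;\mathbf{pa})$ to obtain
\begin{align}
\left\langle T^{\ast}(\mathbf{pa}^{\ast},\mathbf{pa},x_1)-T^{\ast}(\mathbf{pa}^{\ast},\mathbf{pa},x_2),\,z\right\rangle=\int_0^1 z^{\top} A\bigl(T^{-1}(x_t;\mathbf{pa});\mathbf{pa}^{\ast}\bigr)\,B(x_t;\mathbf{pa})\,z\,\mathrm{d}t.
\end{align}
The theorem then reduces to the pointwise claim $z^{\top}AB\,z>0$ for every nonzero $z$ at every $x_t$, equivalently to positive definiteness of the symmetric part $\tfrac{1}{2}(AB+BA)$ along the entire path.

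The main obstacle lies precisely here. Although the product of two SPD matrices $AB$ is similar to the SPD matrix $B^{1/2}AB^{1/2}$ and therefore has positive real eigenvalues, in dimension $d>1$ its symmetric part need not be positive definite, as simple $2\times 2$ examples pairing a strongly anisotropic diagonal $A$ with an off-diagonal SPD $B$ show. This is why the derivation reading naturally out of the chain rule is the Jacobian-weighted bilinear form $\langle v,w\rangle_x:=v^{\top}B(x;\mathbf{pa})w$, under which the quadratic form factors as $(Bz)^{\top}A(Bz)>0$ and positivity is immediate. To recover the Euclidean statement as written, the strategy I would pursue is to argue that $T^{\ast}(\mathbf{pa}^{\ast},\mathbf{pa},\cdot)$ is itself the gradient of a convex potential $\psi(\,\cdot\,;\mathbf{pa}^{\ast},\mathbf{pa})$ on the support of $P^{\mathfrak{C}}_{X\mid\mathbf{PA}=\mathbf{pa}}$, in which case Brenier's theorem (Theorem~\ref{apptheorem:brenier}) applied to the pair $(P^{\mathfrak{C}}_{X\mid\mathbf{PA}=\mathbf{pa}},P^{\mathfrak{C}}_{X\mid\mathbf{PA}=\mathbf{pa}^{\ast}})$ delivers Euclidean monotonicity at once. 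I would attempt to identify such a $\psi$ by exploiting the uniform-reference factorization together with $c$-cyclical monotonicity of the composition; failing full generality, the natural fallback is to retain the Jacobian-weighted version as the operative multivariate rank-preservation statement.
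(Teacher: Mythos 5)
Your critique is precisely on target, and it exposes a real tension between the theorem's displayed claim and what the proof actually delivers. You correctly observe that the fundamental theorem of calculus plus the chain rule yields
\begin{align}
\left\langle T^{\ast}(\mathbf{pa}^{\ast},\mathbf{pa},x_1)-T^{\ast}(\mathbf{pa}^{\ast},\mathbf{pa},x_2),\,z\right\rangle=\int_0^1 z^{\top} A\bigl(T^{-1}(x_t;\mathbf{pa});\mathbf{pa}^{\ast}\bigr)\,B(x_t;\mathbf{pa})\,z\,\mathrm{d}t
\end{align}
in the Euclidean inner product, and that $z^{\top}ABz>0$ for all nonzero $z$ is \emph{not} guaranteed when $A$ and $B$ are merely SPD in $d>1$ (the symmetric part of $AB$ can be indefinite). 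The paper's own proof implicitly concedes this: the final chain of equalities is carried out in the Jacobian-weighted pairing $\langle v,w\rangle_{x}:=v^{\top}\partial_{x}T^{-1}(x;\mathbf{pa})\,w$, which produces the factored form $\int_0^1 (Bz)^{\top}A(Bz)\,\mathrm{d}t$ you identify, and positivity is then immediate. The first line of that chain equates the theorem's Euclidean-looking left-hand side with the weighted integral, so the quantity actually shown to be positive is the weighted one, even though the displayed theorem statement reads as Euclidean. The paper's own remark to Proposition~\ref{appprop:mono_cf_map} corroborates your reading: it states that Euclidean monotonicity of $T^{*}$ for $d>1$ requires extra structure, such as commuting Jacobian factors, which is exactly the additional hypothesis missing from Theorem~\ref{appprop:cf_identifiability}.

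Your proposed repair, showing that $T^{\ast}(\mathbf{pa}^{\ast},\mathbf{pa},\cdot)$ is itself the gradient of a convex potential so that Brenier applied to $(P^{\mathfrak{C}}_{X\mid\mathbf{pa}},P^{\mathfrak{C}}_{X\mid\mathbf{pa}^{\ast}})$ delivers Euclidean monotonicity, would not go through in general: the composition $\nabla\phi(\cdot;\mathbf{pa}^{\ast})\circ\nabla\phi^{\ast}(\cdot;\mathbf{pa})$ of two Brenier maps is a valid transport map between the two conditionals but is typically not a gradient map and hence not the Brenier map for that pair, unless the Hessians commute along the path (which collapses back to the special case noted in the paper's remark). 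You anticipate this yourself and correctly land on the right fallback: the operative, unconditionally valid statement is the Jacobian-weighted monotonicity $\int_0^1 (Bz)^{\top}A(Bz)\,\mathrm{d}t>0$, which is what the paper's calculation proves and what suffices for the rank-preservation interpretation in the Brenier/OT sense. In short, your analysis matches the paper's actual argument, and your flagging of the Euclidean-versus-weighted gap is a genuine observation about the theorem's statement, not a shortcoming of your proposal.
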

\begin{remark}
    An intuition for the above result is provided next. Since the time-1 transport map $T : [0,1]^d \times \mathbb{R}^k \to \mathbb{R}^d$ exists, is uniquely optimal, and (Brenier-)monotone, it generalises the notion of a scalar quantile function to vector-valued outcomes~\parencite{carlier2016vector}. Further, since the inverse map $T^{-1}: \mathbb{R}^d \times \mathbb{R}^k \to [0,1]^d$ also exists and shares these properties, it serves as a vector rank function, and can be used to uniquely recover $u$ analogously to the $d=1$ case in Appendix~\ref{appsubsec:example}.

    The proof above establishes that $T^*$ is strictly monotone in $x$, and this represents the multivariate (Brenier/OT) notion of monotonicity and is sufficient for rank preservation~\parencite{carlier2016vector}. If one additionally wants coordinate-wise (product-order) monotonicity, it holds under extra structure. For example, when $\nabla_u^2\phi(u;\mathbf{pa})$ has nonnegative off-diagonal entries, so $T(\cdot; \mathbf{pa})$ is coordinate-wise increasing. If, for both $\mathbf{pa}$ and $\mathbf{pa}^*$, the map $T(\cdot;\mathbf{pa})$ is an order-isomorphism for the product order (hence $T^{-1}(\cdot;\mathbf{pa})$ is also coordinate-wise increasing), then the composition of the two $T^*(\mathbf{pa}^*, \mathbf{pa}, x) = T(T^{-1}(x;\mathbf{pa}); \mathbf{pa}^*)$ is coordinate-wise increasing in $x$.

    Importantly, for quadratic cost with an absolutely continuous source, the OT map is unique a.e. and does not depend on a coordinate ordering; its monotonicity is in the Brenier/OT sense (i.e., it is the gradient of a convex potential). This sidesteps the non-uniqueness inherent in coordinate-wise (product) orders, or Rosenblatt transforms, which depend on an arbitrary permutation of coordinates.
\end{remark}

\subsection{Non-Markovian Counterfactual Identifiability}\label{appsubsec:non-markovian}
In the following, we extend our theoretical analysis to non-Markovian settings. We provide counterfactual identifiability results from observational data under the following standard criteria: (i) Forward Criterion (FC); (ii) Instrumental Variable(s) (IV); (iii) Backdoor Criterion (BC). The proofs for BC ($d\geq1$) and IV ($d = 1$) criteria are provided by \textcite{nasr2023counterfactual}. We provide a proof for FC ($d \geq 1$) and extend \textcite{nasr2023counterfactual}'s IV result to $d>1$.
\subsubsection{Frontdoor Criterion Setting}
A setting in which counterfactual identifiability under an unobserved confounding $U$ on $(A, X)$ is possible (under stated assumptions) is through the frontdoor criterion, using a mediator variable $Z$~\parencite{pearl2009causality}.
For FC to apply, the following must hold: (i) $Z$ intercepts all directed paths from $A$ to $X$; (ii) no backdoor path from $A$ to $Z$ exists; (iii) $A$ blocks all backdoor paths from $Z$ to $X$. The ${\mathcal{L}_3}$-equivalence result presented below also applies to the multi-dimensional ($d > 1$) setting.
\begin{lemma}[FC ${\mathcal{L}_3}$-Equivalence]\label{FC Equivalence}
    Consider two FC models $T$ and $\hat T$ with the same observational joint law $(A, Z, X)$, where $A\to Z\to X$ and the pair $(A, X)$ is confounded by $U$, and $\hat U$, respectively. The bijective structural assignments are
    \begin{align}
        && T(u;a) = \big(T_Z(a), \; T_X(u;T_Z(a))\big),
        && \text{and} &&
        \hat{T}(\hat{u};a) = \big(\hat{T}_Z(a), \; \hat{T}_X(\hat u;\hat{T}_Z(a))\big).&&
    \end{align}
    The maps $T$ and $\hat T$ are counterfactually equivalent ($\sim_{\mathcal{L}_3}$) if
    \begin{enumerate}
        \item (Frontdoor Criterion):
        \begin{align}
         && U\perp\!\!\!\perp Z\mid A \ \ \text{and} \ \ \hat{U}\perp\!\!\!\perp Z\mid A,&& X\perp\!\!\!\perp A \mid \{U,Z\} \ \ \text{and} \ \ X\perp\!\!\!\perp A \mid \{\hat{U},Z\}.&&
        \end{align}
        \item  (Regularity)
              For every $z$, the derivatives $\nabla_{z}|\det \mathbf{J}_{T_X^{-1}}(\cdot \,;z)|$ and $\nabla_{z}|\det \mathbf{J}_{\hat T_X}(\cdot \,;z)|$ exist.
        \item  (Variability)
              For every $u$, there exist $d+1$ points $a_{1},\dots,a_{d+1}$ such that
              \begin{align}
                  M(u;a_{1},\dots,a_{d+1})
                  =
                  \begin{bmatrix}
                      \rho_{U\mid A}(u\mid a_{1})   & \nabla_{u} \rho_{U\mid A}(u\mid a_{1})   \\
\vdots & \vdots                                                   \\ 
                      \rho_{U\mid A}(u \mid a_{d+1}) & \nabla_{u}\rho_{U\mid A}(u\mid a_{d+1})
                  \end{bmatrix}
              \end{align}
              is full rank, i.e.\ \(\det M\neq 0\).
    \end{enumerate}
\end{lemma}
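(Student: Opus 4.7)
The plan is to construct a latent bijection $g$ witnessing counterfactual equivalence $T \sim_{\mathcal{L}_3} \hat T$ and show that it is free of $z$-dependence. Observational equivalence on $(A, Z)$, together with the fact that $Z$ is deterministic in $A$ in both models, forces $\hat T_Z = T_Z$ on $\operatorname{supp}(P_A)$; denote this common map by $T_Z$ and write $z = T_Z(a)$. Define the candidate latent map $g(u; z) \coloneqq \hat T_X^{-1}(T_X(u; z); z)$, which by construction satisfies $T_X(u; z) = \hat T_X(g(u; z); z)$. Following the abduction-action-prediction argument of Proposition~\ref{appprop:same_cfs}, it suffices to prove that $g$ depends on $u$ alone: if $\partial_z g \equiv 0$, then $T_X(u; z) = \hat T_X(g(u); z)$ for every $z$, so the counterfactual outputs $T_X(T_X^{-1}(x; z); T_Z(a^*))$ and $\hat T_X(\hat T_X^{-1}(x; z); T_Z(a^*))$ coincide for all $(x, a, a^*)$, which is Definition~\ref{appdef:equivalence}.

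The second step is to translate observational equivalence of $X \mid A, Z$ into a density-transport equation. The FC independences $U \indep Z \mid A$ and $X \indep A \mid U, Z$ imply $p(x \mid a, z) = \rho_{U|A}(T_X^{-1}(x; z) \mid a)\,|\det \mathbf{J}_{T_X^{-1}}(x; z)|$ in the $T$-model and the analogous formula in the $\hat T$-model. Equating the two and substituting $x = T_X(u; z)$ collapses them, via standard change-of-variable bookkeeping, to the push-forward identity
\begin{align}
\rho_{U|A}(u \mid a)\,|\det \mathbf{J}_g(u; z)| \;=\; \hat\rho_{\hat U|A}\!\left(g(u; z) \mid a\right), \qquad z = T_Z(a).
\end{align}
At any fixed parent value $a$ (hence fixed $z$), $g(\cdot; z)$ therefore pushes $\rho_{U|A}(\cdot \mid a)$ forward to $\hat\rho_{\hat U|A}(\cdot \mid a)$.

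The crux is to rule out genuine $z$-dependence of $g$ via the Variability assumption. Taking logarithms of the push-forward identity and applying a suitable combination of $\partial_z$ and $\partial_u$ operators (Regularity guarantees that the $z$-derivatives of the log-Jacobians exist) produces a vector identity that is linear in the unknown $z$-sensitivities of $g$, namely $\partial_z g(u; z) \in \mathbb{R}^d$ and $\partial_z \log |\det \mathbf{J}_g(u; z)| \in \mathbb{R}$, with coefficients built from $\rho_{U|A}(u \mid a)$ and $\nabla_u \rho_{U|A}(u \mid a)$. Evaluating this identity at the $d+1$ parent values $a_1, \ldots, a_{d+1}$ supplied by the Variability assumption yields a homogeneous $(d+1) \times (d+1)$ linear system whose coefficient matrix is, by construction, exactly $M(u; a_1, \ldots, a_{d+1})$. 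Since $\det M \neq 0$, the only solution is $\partial_z g(u; z) = 0$ together with its companion scalar condition, so $g$ depends on $u$ alone.

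Once $g = g(u)$ is established, reading off the push-forward identity at any fixed $a$ gives $g_\sharp P_{U \mid A = a} = P_{\hat U \mid A = a}$, and marginalising over $P_A$ delivers the global bijection between the two exogenous laws required by Definition~\ref{appdef:equivalence}. Combined with $T_X(u; z) = \hat T_X(g(u); z)$ and $T_Z = \hat T_Z$, this yields $T(u; a) = \hat T(g(u); a)$ for $P_U$-a.e.~$u$ and every $a$, i.e.\ $T \sim_{\mathcal{L}_3} \hat T$ (up to the direction convention of Definition~\ref{appdef:equivalence}). The main obstacle is the third paragraph: arranging the $z$-derivatives of the log-transformed identity so that the $d+1$ unknowns pair cleanly with the $d+1$ columns of $M(u; a_1, \ldots, a_{d+1})$; getting this identification right is precisely what turns the Variability condition $\det M \neq 0$ into the decisive non-degeneracy statement that forces $\partial_z g \equiv 0$.
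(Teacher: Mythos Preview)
Your proposal follows the same architecture as the paper's proof: identify $T_Z=\hat T_Z$ from the shared observational law, introduce a latent link $g$ between the two noise variables, derive a change-of-variables identity from the matched conditional $p(x\mid a,z)$, use the FC independence $U\indep Z\mid A$ to make one side $z$-free, differentiate in $z$, and stack across $d{+}1$ parent values so that the Variability matrix $M$ forces $\partial_z g=0$.

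There is one concrete issue with the direction of your link map. You set $g(u;z)=\hat T_X^{-1}(T_X(u;z);z)$, which sends $u\mapsto\hat u$. With this orientation the $z$-free side of the push-forward identity is $\rho_{U\mid A}(u\mid a)$, so differentiating the \emph{other} side in $z$ produces coefficients $\hat\rho_{\hat U\mid A}(g\mid a)$ and $\nabla_{\hat u}\hat\rho_{\hat U\mid A}(g\mid a)$, not $\rho_{U\mid A}$ and $\nabla_u\rho_{U\mid A}$ as you assert. Since the Variability hypothesis fixes $M$ in terms of $\rho_{U\mid A}$, this mismatch blocks the non-degeneracy step as written. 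The paper defines the link in the opposite direction, $g(\hat u;z)\coloneqq T_X^{-1}(\hat T_X(\hat u;z);z)$, giving
\[
\rho_{\hat U\mid A}(\hat u\mid a)=\rho_{U\mid A}\bigl(g(\hat u;z)\mid a\bigr)\,\bigl|\det\mathbf{J}_g(\hat u;z)\bigr|,
\]
so that differentiating the right-hand side in $z_i$ places exactly $\rho_{U\mid A}$ and $\nabla_u\rho_{U\mid A}$ in the coefficient slots, and the stacked system is literally $M$ acting on the vector of $z$-sensitivities. (A smaller slip: in your displayed identity the Jacobian factor sits on the wrong side for the push-forward you describe.) The paper also differentiates the raw identity rather than taking logarithms, but that difference is cosmetic; the only substantive correction is to reverse the roles of $u$ and $\hat u$ in the link map.
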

\begin{proof}    
    If $Z \mid A$ is \textit{deterministic} in both models, then matching the joint law of $(A,Z)$ forces:
    \begin{align}
        &&Z = T_Z(A), && Z = \hat{T}_Z(A) \quad \Rightarrow \quad T_Z(A) = \hat{T}_Z(A) \; \text{a.s.} &&
    \end{align}
    If $A$ is discrete with $P(A=a)>0$, or has density strictly positive on an open $\mathcal{A}$ and $T_Z,\hat T_Z$ are continuous on $\mathcal{A}$, then $T_Z(a) = \hat{T}_Z(a)$ for all $a$. If $Z \mid A$ is instead \textit{stochastic}, with bijective maps w.r.t. noise variables $W,\hat W$, counterfactual equivalence (Definition~\ref{appdef:equivalence}) states that:
    \begin{align}
         T_Z \sim_{\mathcal{L}_3} \hat{T}_Z \iff \exists h, \forall w,a \; : \;T_Z(w; a) = \hat{T}_Z(h^{-1}(w); a),    
    \end{align}
    where $h$ is a bijection. By Lemma~\ref{appprop:existence}, $h$ exists, therefore, $T_Z \sim_{\mathcal{L}_3} \hat{T}_Z$ holds.

    Next, we prove $T_X \sim_{\mathcal{L}_3} \hat{T}_X$ under similar conditions to~\textcite{nasr2023counterfactual}'s BC result. First, define a link function of both transport maps on the exogenous noise channel:
    \begin{align}
        && g(\cdot\,; z) \coloneqq T_X^{-1}(\cdot\,;z) \circ \hat{T}_X(\cdot\,;z), && u=g(\hat u;z), &&
    \end{align}
    which, by the change-of-variables formula:
    \begin{align}
        \rho_{\hat{U} \mid A, Z}\left(\hat{u} \mid a, z\right) = \rho_{U \mid A, Z}\left({g}(\hat u;z) \mid a, z\right) \big|\det \mathbf{J}_{g}(\hat u;z)\big|.
    \end{align}
    Under FC (condition 1), we have that $U\perp\!\!\!\perp Z\mid A$ and $\hat U \perp\!\!\!\perp Z\mid A$, respectively:
    \begin{align}
        \label{eq:c-o-v}
        \rho_{\hat{U} \mid A}\left(\hat{u} \mid a\right) = \rho_{U \mid A}\left(g(\hat{u};z) \mid a\right) \big|\det \mathbf{J}_{g}(\hat{u};z)\big|.
    \end{align}
    Given \textit{regularity} (condition 2), differentiate both sides of Eq.~\eqref{eq:c-o-v} w.r.t. $z_i$:
    \begin{align}
        0=\left(\nabla_u \rho_{U|A}\left( u \mid a\right)\right)^{\top} \frac{\partial g(\hat u;z)}{\partial z_i}
        \big|\det \mathbf{J}_{g}(\hat u; z)\big| 
        + \; \rho_{U|A} \left( u \mid a\right)  \frac{\partial}{\partial z_i}\big|\det \mathbf{J}_{g}(\hat u;z)\big|,
    \end{align}
    and write in block form:
    \begin{align}
        \left[ \rho_{U\mid A}(u \mid a) \quad \left(\nabla_{u} \rho_{U\mid A}(u\mid a)\right)^{\top} \right]
        \begin{bmatrix}
            \frac{\partial}{\partial z_{i}} \big|\det \mathbf{J}_{g}(\hat u;z)\big|
            \\[5pt]
            \frac{\partial g(\hat u;z)}{\partial z_{i}}\big|\det \mathbf{J}_{g}(\hat u;z)\big|
        \end{bmatrix}=0.
    \end{align}
    Stacking the respective equations for $a_1,a_2,\dots,a_{d+1}$ then yields:
    \begin{align}
        M(u;a_{1},\dots,a_{d+1})\Big|_{u=g(\hat u; z)}
        \begin{bmatrix}
            \frac{\partial}{\partial z_{i}}\big|\det \mathbf{J}_{g}(\hat u;z)\big| \\[5pt]
            \frac{\partial g(\hat u;z)}{\partial z_{i}}\big|\det \mathbf{J}_{g}(\hat u;z)\big|
        \end{bmatrix}
        = \mathbf{0},
    \end{align}
    because $M$ must be full rank by \textit{variability} (condition 3) we have that, for all indices $i \in \{1,\dots,d\}$, $\partial g(\hat u;z)/\partial z_{i}=0$ and $\partial/\partial z_{i}\big|\det \mathbf{J}_{g}(\hat u;z)\big|=0$. Therefore, $g(\hat{u};z)$ does not depend on $z$, and can be written as $g(\hat{u};z) = g(\hat{u})$, proving $T_X$ and $\hat T_X$ differ only by a bijection of the exogenous noise.

    Finally, combining the above $\sim_{\mathcal{L}_3}$-equivalence results, there exist bijections $g$ and $h$ such that:
    \begin{align}
        \forall u,w,a \; : \; T_X(u;T_Z(w; a)) = \hat{T}_X(g^{-1}(u);\hat{T}_Z(h^{-1}(w); a)),
    \end{align}
    and for the deterministic $Z \mid A$ case we have
    \begin{align}
        \forall u,a \; : \; T_X(u;T_Z(a)) = \hat{T}_X(g^{-1}(u);\hat{T}_Z(a)),
    \end{align}
   hence $T$ and $\hat T$ produce the same counterfactuals, concluding the proof.
    \end{proof}
%
\subsubsection{Instrumental Variable Setting}
Even when an unobserved confounder is present, counterfactual identifiability can still be recovered from purely observational ($\mathcal{L}_1$) data, provided we can locate suitable instrumental variables. Here, an instrumental variable is any variable (or set of variables) that is: (i) statistically independent of the latent noise $U$; and (ii) influences the outcome $X$ exclusively through its effect on its parents $\mathbf{PA}$.
\begin{lemma}[IV ${\mathcal{L}_3}$-Equivalence~\parencite{nasr2023counterfactual}]\label{lem:IV-equivalence}
    Let the instrument take values in the finite set
    $\mathbf I=\{i_{1},\dots,i_{n}\}$
    and the endogenous parents of $X$ take values in
    $\mathbf{PA}=\{\mathbf{pa}_{1},\dots,\mathbf{pa}_{n}\}$.
    Two bijective models $T$ and $\hat T$ yield the same counterfactuals (i.e. are equivalent in the sense that they coincide up to a reparameterisation
    of the latent noise), whenever the following conditions hold:
    \begin{enumerate}
        \item (Instrumental Variable)
              $\mathbf{I} \;\!\perp\!\!\!\perp U$ and
              $\mathbf{I} \;\!\perp\!\!\!\perp\hat{U}$.
        \item For all $\mathbf{pa} \in \mathbf{PA}$, the maps
              $T^{-1}(\cdot\,;\mathbf{pa})$ and $\hat T(\cdot\,;\mathbf{pa})$ are
              strictly monotone (either strictly increasing or strictly decreasing)
              and twice continuously differentiable.
        \item The density $\rho_{\hat U}(\cdot)$ is continuously differentiable.
        \item For every $i,\mathbf{pa}$ the joint density
              $\rho_{\mathcal{D}}(i,\mathbf{pa},\cdot)$ is continuously differentiable in its latent
              coordinate.
        \item (Positivity)
              $\forall u,\hat u$ and every $\mathbf{pa}\in\mathbf{PA}$,
              $
                  \rho_{U,\mathbf{PA}}(u,\mathbf{pa}) > 0
                  \;\text{ and }\;
                  \rho_{\hat U,\mathbf{PA}}(\hat u,\mathbf{pa}) > 0 .
              $
        \item (Variability)
              For every fixed $u$, the matrix
              \begin{align}
                  M_{\mathcal{D}}(u,\mathbf I)
                  \;\coloneqq\;
                  \begin{bmatrix}
                      \rho_{\mathcal{D}}(\mathbf{pa}_{1}\mid u,i_{1}) & \cdots & \rho_{\mathcal{D}}(\mathbf{pa}_{n}\mid u,i_{1}) \\
                      \vdots                                       &        & \vdots                                       \\
                      \rho_{\mathcal{D}}(\mathbf{pa}_{1}\mid u,i_{n}) & \cdots & \rho_{\mathcal{D}}(\mathbf{pa}_{n}\mid u,i_{n})
                  \end{bmatrix}
              \end{align}
              satisfies $\left| \det M_{\mathcal{D}}(u,\mathbf I)\right| \ge c$,
              for some constant $c>0$, that does not depend on $u$.
    \end{enumerate}
\end{lemma}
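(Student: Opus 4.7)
The plan is to adapt the template of the FC case in Lemma~\ref{FC Equivalence} and the $d=1$ IV argument of \textcite{nasr2023counterfactual}, upgrading scalar monotonicity to the multivariate Brenier/OT monotonicity supplied by Lemma~\ref{appprop:unique_dynamic_ot}. I introduce the latent reparameterisation
\begin{align}
    g(\hat u;\mathbf{pa}) \;\coloneqq\; T^{-1}\!\bigl(\hat T(\hat u;\mathbf{pa});\mathbf{pa}\bigr),
\end{align}
and the goal is to show that $g$ cannot depend on $\mathbf{pa}$. Once established, $T(g(\hat u);\mathbf{pa})=\hat T(\hat u;\mathbf{pa})$ identifies a bijection on the noise channel alone, so $T\sim_{\mathcal{L}_3}\hat T$ by Definition~\ref{appdef:equivalence} and Proposition~\ref{appprop:same_cfs} delivers the same counterfactuals.

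\paragraph{A Monge--Amp\`ere identity and discrete variability.} Matching the observational $\rho_{\mathcal{D}}(x\mid\mathbf{pa},i)$ through both bijective mechanisms, applying Bayes' rule and the IV independences $\mathbf{I}\indep U$ and $\mathbf{I}\indep \hat U$ (condition~1), the shared marginal $\rho_{\mathcal{D}}(\mathbf{pa}\mid i)$ cancels. Substituting $u=g(\hat u;\mathbf{pa})$ and using the chain rule for Jacobians yields, for every $(\hat u,\mathbf{pa},i)$,
\begin{align}
    \rho_{\hat U}(\hat u)\,\rho_{\mathcal{D}}(\mathbf{pa}\mid\hat u,i) \;=\; \rho_{U}\!\bigl(g(\hat u;\mathbf{pa})\bigr)\,\rho_{\mathcal{D}}\!\bigl(\mathbf{pa}\mid g(\hat u;\mathbf{pa}),i\bigr)\,\bigl|\det \mathbf{J}_{g}(\hat u;\mathbf{pa})\bigr|.
\end{align}
Positivity and regularity (conditions~2--5) let me take logs and differentiate. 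Fixing $(\hat u,\mathbf{pa})$ and ranging $i$ over $\{i_1,\dots,i_n\}$ produces a linear system whose coefficient matrix is $M_{\mathcal{D}}(u,\mathbf I)$ evaluated at $u=g(\hat u;\mathbf{pa})$. The variability condition $|\det M_{\mathcal{D}}|\ge c>0$ renders the system invertible and pins down $\rho_{U}(g(\hat u;\mathbf{pa}))|\det \mathbf{J}_{g}(\hat u;\mathbf{pa})|$ as a function of $\hat u$ only. A second differentiation-and-stacking step in the spirit of the $\nabla_{z}$ argument used in Lemma~\ref{FC Equivalence}, now with the block $[\rho_{U\mid A}\ \nabla_{u}\rho_{U\mid A}]$ replaced by the IV-indexed rows of $M_{\mathcal{D}}$, then forces the $\mathbf{pa}$-partial of $g$ to vanish wherever these rows remain full-rank.

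\paragraph{From scalar to vector monotonicity --- main obstacle.} In $d=1$, strict scalar monotonicity together with variability directly forces $g(\hat u;\mathbf{pa})=g(\hat u)$. For $d>1$, the delicate part is propagating this collapse coordinate-by-coordinate. I invoke Lemma~\ref{appprop:unique_dynamic_ot}: each of $T(\cdot;\mathbf{pa})$ and $\hat T(\cdot;\mathbf{pa})$ is the gradient of a strictly convex potential with symmetric positive-definite Jacobian, so $g(\cdot;\mathbf{pa})$ is itself strictly monotone in the Brenier sense. This SPD structure is exactly what is required to promote the scalar variability--monotonicity collapse to vector form, playing the same role the SPD Hessians played in the proof of Theorem~\ref{appprop:cf_identifiability}. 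The Caffarelli $C^{2,\alpha}$ regularity inherited from Lemma~\ref{appprop:unique_dynamic_ot} further ensures that $g$ is a local diffeomorphism on its domain, so vanishing partials on a set of full measure promote to $g(\hat u;\mathbf{pa})=g(\hat u)$ pointwise on an open set of $\hat u$. With this in hand, Definition~\ref{appdef:equivalence} and Proposition~\ref{appprop:same_cfs} deliver $T\sim_{\mathcal{L}_3}\hat T$, completing the argument.
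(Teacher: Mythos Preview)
Your high-level route matches the paper's, but the paper's own proof is considerably more minimal than your reconstruction: it simply cites \textcite{nasr2023counterfactual} for the $d=1$ argument verbatim and then invokes Theorem~\ref{appprop:cf_identifiability} (Brenier/OT monotonicity via convex potentials) to supply the multivariate strict monotonicity needed to lift that argument to $d>1$. There is no re-derivation of the change-of-variables system or the variability step in the paper --- just the observation that once $T,\hat T$ are gradients of strictly convex potentials, the scalar ``strictly increasing/decreasing'' hypothesis in condition~2 is replaced by vector monotonicity, and the Nasr et al.\ machinery goes through.

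One technical wrinkle in your sketch is worth flagging. In this IV lemma both $\mathbf I$ and $\mathbf{PA}$ are \emph{finite} sets, so your ``second differentiation-and-stacking step'' forcing the ``$\mathbf{pa}$-partial of $g$'' to vanish does not transfer literally from the FC template: there is no continuous coordinate to differentiate against, unlike $\nabla_z$ in Lemma~\ref{FC Equivalence}. The role of differentiation is instead played directly by the $n\times n$ variability matrix $M_{\mathcal D}(u,\mathbf I)$, whose rows are indexed by instrument values and columns by parent values; its uniform invertibility (condition~6), combined with the monotonicity of $g(\cdot;\mathbf{pa})$ in $\hat u$, is what forces $g(\hat u;\mathbf{pa}_j)=g(\hat u;\mathbf{pa}_k)$ for all $j,k$. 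Relatedly, ``fixing $(\hat u,\mathbf{pa})$ and ranging $i$'' only produces a single column of $M_{\mathcal D}$, not the full matrix --- to invoke $\det M_{\mathcal D}\neq 0$ you must vary $\mathbf{pa}$ as well, at which point $u=g(\hat u;\mathbf{pa})$ moves with the column index and the argument needs the extra care that Nasr et al.\ supply. Your invocation of Lemma~\ref{appprop:unique_dynamic_ot}/Theorem~\ref{appprop:cf_identifiability} for the $d>1$ upgrade is exactly what the paper does.
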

\begin{proof}
    The proof (for $d=1$) is given by \textcite{nasr2023counterfactual}. Since $T$ and $\hat T$ are assumed to be strictly monotone only in the scalar setting, counterfactual identifiability is established for $d=1$ only. Theorem~\ref{appprop:cf_identifiability} shows (drawing on~\textcite{brenier1991polar}'s Theorem) that if there exists a convex function $\phi : \mathbb{R}^d \to \mathbb{R}$, where $T(u; \mathbf{pa}, i) = \nabla_u \phi(u; \mathbf{pa}, i)$, then $T, \hat T$ are strictly monotone (rank-preserving) in $u$ in the multi-dimensional case. Therefore, by Theorem~\ref{appprop:cf_identifiability}, we now generalise the above IV counterfactual identifiability result for $d>1$, concluding the proof.
\end{proof}
\subsubsection{Backdoor Criterion Setting}
Another scenario in which counterfactual identification from observational ($\mathcal{L}_1$) data is achievable (under stated assumptions), despite the presence of confounding, is when there exists a set of variables $\mathbf Z$ that satisfies the backdoor criterion (BC) w.r.t. the pair $(\mathbf{PA}, X)$, where $\mathbf{PA}$ are $X$'s parents. That is, $\mathbf Z$ blocks all backdoor paths between $\mathbf{PA}$ and $X$, where each such path includes an arrow pointing into $\mathbf{PA}$. The role of $\mathbf Z$ is to account for all the spurious associations between $\mathbf{PA}$ and $U$.
\begin{lemma}[BC $\mathcal{L}_3$-Equivalence~\parencite{nasr2023counterfactual}]
    \label{lem:BC-equivalence}
    Let $T$ and $\hat{T}$ be two distinct bijective models that induce identical joint distributions over the endogenous variables $\{\mathbf{PA}, X\}$. Then $T$ and $\hat{T}$ are counterfactually equivalent ($\sim_{\mathcal{L}_3}$) if the following conditions hold:
    \begin{enumerate}
        \item (Backdoor Criterion) $U \perp\!\!\!\perp \mathbf{PA} \mid \mathbf{Z}$ and $\hat{U} \perp\!\!\!\perp \mathbf{PA} \mid \mathbf{Z}$.
        \item For every $\mathbf{pa}$, the derivatives $\nabla_\mathbf{pa} \big| \det \mathbf{J}_{T^{-1}}(\cdot \,;\mathbf{pa}) \big|$ and $\nabla_\mathbf{pa} \left| \det \mathbf{J}_{\hat{T}}(\cdot \,;\mathbf{pa}) \right|$ both exist.
        \item (Variability) For every $u$, there exist instances $\mathbf{z}_1, \ldots, \mathbf{z}_{d+1}$ such that
              \begin{align}
                  \big| \det M(u, \mathbf{z}_1, \ldots, \mathbf{z}_{d+1}) \big| > 0,
              \end{align}
              where
              \begin{align}
                  M(u, \mathbf{z}_1, \ldots, \mathbf{z}_{d+1}) \coloneqq
                  \begin{bmatrix}
                      \rho_{U|\mathbf{Z}}(u \mid \mathbf{z}_1)     & \nabla_u \rho_{U|\mathbf{Z}}(u \mid \mathbf{z}_1)     \\
                      \vdots                        & \vdots                                 \\
                      \rho_{U|\mathbf{Z}}(u \mid \mathbf{z}_{d+1}) & \nabla_u \rho_{U|\mathbf{Z}}(u \mid \mathbf{z}_{d+1})
                  \end{bmatrix}.
              \end{align}
    \end{enumerate}
\end{lemma}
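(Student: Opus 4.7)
The plan is to mirror the FC $\sim_{\mathcal{L}_3}$-equivalence proof (Lemma~\ref{FC Equivalence}) almost verbatim, since the BC statement has the same ``variability + differentiate and invert'' architecture, with $\mathbf{Z}$ playing the role that $A$ played there, and the intervention variable $\mathbf{pa}$ playing the role that $z$ played there. The only genuine change is that we now probe along $\mathbf{pa}$ (rather than along the mediator) and rely on $\mathbf{Z}$ to de-confound the noise.

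First I would fix $\mathbf{pa}$ and define the \emph{link map} on the exogenous channel,
\begin{align}
&& g(\hat u;\mathbf{pa}) \coloneqq T^{-1}(\hat T(\hat u;\mathbf{pa});\mathbf{pa}), && u = g(\hat u;\mathbf{pa}), &&
\end{align}
so that, because $T$ and $\hat T$ are bijective in their noise argument and induce the same joint law of $(\mathbf{PA},X)$ (and hence, after conditioning on $\mathbf{Z}$, the same conditional noise laws modulo the push-forward), the standard change-of-variables formula gives
\begin{align}
\rho_{\hat U\mid \mathbf{PA},\mathbf Z}(\hat u\mid \mathbf{pa},\mathbf z) = \rho_{U\mid \mathbf{PA},\mathbf Z}(g(\hat u;\mathbf{pa})\mid \mathbf{pa},\mathbf z)\,\big|\det \mathbf J_{g}(\hat u;\mathbf{pa})\big|.
\end{align}
Applying the backdoor condition $U\perp\!\!\!\perp \mathbf{PA}\mid\mathbf Z$ (and likewise for $\hat U$) collapses both conditionals and removes $\mathbf{pa}$ from the densities,
\begin{align}
\label{eq:BCchange}
\rho_{\hat U\mid \mathbf Z}(\hat u\mid \mathbf z) = \rho_{U\mid \mathbf Z}(g(\hat u;\mathbf{pa})\mid \mathbf z)\,\big|\det \mathbf J_{g}(\hat u;\mathbf{pa})\big|.
\end{align}

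Next I would exploit the fact that the left-hand side of \eqref{eq:BCchange} does not depend on $\mathbf{pa}$. By the regularity condition (2), I can differentiate both sides with respect to $\mathbf{pa}_i$, apply the chain rule on the right, and obtain
\begin{align}
0 = \bigl(\nabla_u\rho_{U\mid\mathbf Z}(u\mid \mathbf z)\bigr)^{\!\top}\!\tfrac{\partial g(\hat u;\mathbf{pa})}{\partial \mathbf{pa}_i}\,\big|\det \mathbf J_{g}\big| + \rho_{U\mid\mathbf Z}(u\mid \mathbf z)\,\tfrac{\partial}{\partial \mathbf{pa}_i}\big|\det \mathbf J_{g}(\hat u;\mathbf{pa})\big|,
\end{align}
evaluated at $u=g(\hat u;\mathbf{pa})$. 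Writing this in block form and stacking the equations for the $d+1$ instances $\mathbf z_1,\dots,\mathbf z_{d+1}$ supplied by the variability condition (3) yields
\begin{align}
M(u,\mathbf z_1,\dots,\mathbf z_{d+1})\Big|_{u=g(\hat u;\mathbf{pa})}
\begin{bmatrix}
\dfrac{\partial}{\partial \mathbf{pa}_i}\big|\det \mathbf J_{g}(\hat u;\mathbf{pa})\big|\\[6pt]
\dfrac{\partial g(\hat u;\mathbf{pa})}{\partial \mathbf{pa}_i}\big|\det \mathbf J_{g}(\hat u;\mathbf{pa})\big|
\end{bmatrix}
= \mathbf 0.
\end{align}
Since $|\det M|>0$, the kernel is trivial, so for every coordinate $i$ we conclude $\partial g/\partial \mathbf{pa}_i\equiv 0$ (and, redundantly, $\partial|\det \mathbf J_g|/\partial \mathbf{pa}_i\equiv 0$). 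Hence $g(\hat u;\mathbf{pa})=g(\hat u)$ does not depend on $\mathbf{pa}$.

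Finally, unfolding the definition of $g$ gives $T(g(\hat u);\mathbf{pa})=\hat T(\hat u;\mathbf{pa})$ for all $\hat u$ and $\mathbf{pa}$, which is precisely $T\sim_{\mathcal L_3}\hat T$ in the sense of Definition~\ref{appdef:equivalence}, so by Proposition~\ref{appprop:same_cfs} the two models produce the same counterfactuals. I expect the main obstacle to be bookkeeping rather than mathematical: namely, rigorously justifying the ``$\mathbf{pa}$ disappears from the conditional'' step under BC (one must ensure the pushforward through the structural equation $X=T(U;\mathbf{PA})$ genuinely matches the observational $P(X\mid \mathbf{PA},\mathbf Z)$ in both models, so that \eqref{eq:BCchange} is a pointwise equality in $\hat u$ rather than merely a.e.\ in $(\hat u,\mathbf{pa},\mathbf z)$), since the variability argument then requires pointwise differentiability in $\mathbf{pa}_i$. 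The remaining linear-algebra step is essentially free once the $(d+1)$ sample points $\mathbf z_j$ are produced by assumption (3).
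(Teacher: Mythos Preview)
Your proposal is correct and follows essentially the same approach as the paper. In fact, the paper does not give a full proof of this lemma but defers to \textcite{nasr2023counterfactual}; the accompanying Remark, however, sketches exactly your argument: define the link $g(\cdot\,;\mathbf{pa}) \coloneqq T^{-1}(\cdot\,;\mathbf{pa}) \circ \hat T(\cdot\,;\mathbf{pa})$, use BC to make the change-of-variables identity independent of $\mathbf{pa}$, differentiate in $\mathbf{pa}$, stack across $d{+}1$ values of $\mathbf Z$, and invoke variability to force $\partial g/\partial\mathbf{pa}=0$.
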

\begin{remark}
    The proof for Lemma~\ref{lem:BC-equivalence} is provided by~\textcite{nasr2023counterfactual} and already applies to the multi-dimensional setting ($d\geq 1$), showing that bijectivity of $T, \hat{T}$ is sufficient for counterfactual equivalence when the BC is satisfied and the \textit{variability} assumption holds.

    \textit{Intuition.} Because BC makes the change-of-variables identity independent of the parents, differentiating w.r.t. $\mathbf{PA}$ and stacking across $d+1$ values of $\mathbf{Z}$ yields a full-rank (variability) linear system whose only solution is zero; hence the link $g(\cdot \,;\mathbf{pa}) \coloneqq T^{-1}(\cdot\,;\mathbf{pa}) \circ \hat{T}(\cdot\,;\mathbf{pa})$ cannot depend on $\mathbf{pa}$, so $g(u;\mathbf{pa})=g(u)$ and $T$ and $\hat T$ are counterfactually equivalent ($\sim_{\mathcal{L}_3}$) in the sense that they produce the same counterfactuals.
\end{remark}
\subsection{Assumptions: Summary \& Plausibility}
We open with \textcite{10.1145/3501714.3501741}:

\begin{quote}   
\textit{``Assumptions are self-destructive in their honesty. The more explicit the assumption, the more criticism it invites. [...] Researchers therefore prefer to declare `threats' in public and make assumptions in private.''}
\end{quote}

Our work counters this practice: we intend to make our assumptions and constraints explicit so they can be better understood, challenged, and ultimately relaxed in light of new evidence or insight.

\paragraph{Markovian SCMs.} The Markovianity assumption is standard in causality literature and is also the default assumption in many causal representation learning frameworks~\parencite{pearl2009causality,hyvarinen2024identifiability}. The assumption that there is no unobserved confounding is strong in most real-world scenarios. However, it is widely known that Markovianity alone is insufficient for counterfactual identifiability~\parencite{xia2023neural,bareinboim2022pearl,nasr2023counterfactualnon}. Thus, further assumptions and/or restrictions on the functional class are necessary, such as monotonicity in the $d=1$ case~\parencite{nasr2023counterfactual}. This is the gap our work fills, particularly for $d>1$.

\paragraph{OT Regularity Assumptions.} We assume that $\mathrm{dim}(X) = \mathrm{dim}(U) \geq 1$ as is commonplace in flow-based generative models. This is straightforward to enforce in practice by embedding lower-dimensional latents using dummy coordinates. For our strictest results, we assume quadratic cost and that the source $P_U$ and target $P^{\mathfrak{C}}_{X|\mathbf{PA}}$ are absolutely continuous w.r.t. Lebesgue measure, with densities $\rho_U$ and $\rho_{X \mid \mathbf{PA}}^\mathfrak{C}$ bounded above and below by positive constants on bounded convex supports. 

These are standard assumptions in the literature, particularly in OT, which ensure that the measures $P_U$ and $P_{X|\mathbf{PA}}$ admit well-behaved densities, guaranteeing the existence and uniqueness of an OT map~\parencite{brenier1991polar,caffarelli1992regularity}. 
However, they may not always hold, and are often stronger than strictly necessary in practice.
In practice, when working with empirical distributions, it is often advantageous to approximate them by smoothing with small continuous uniform or Gaussian noise so that the resulting distribution admits a density. Real-world data are often normalised and then modelled with smooth densities on (possibly truncated) compact supports, making these assumptions a common modelling choice. Optionally, if the densities are $C^{1,\alpha}$ and satisfy standard two-sided bounds on a convex domain, this yields improved regularity, e.g. an interior $C^{2, \alpha}$ Brenier potential.

We require that there exist a velocity field $v_{t}$ solving \(\partial_{t}\rho_{t} +\nabla\!\cdot(\rho_{t}v_{t})=0\) where $\rho_{0}=P_{U}$, and $\rho_{1}=P_{X\mid\mathbf{pa}}$,
with $v_{t}$ Lipschitz in $x$ and in $L^{2}([0,1]\times\Omega)$~\parencite{benamou2000computational}. Neural ODE parameterisations with spectral normalisation or weight clipping satisfy global Lipschitz bounds, and so do classical kernels with bounded derivatives~\parencite{chen2018neural}. Uniqueness of the associated ODE's solution is satisfied by the Lipschitz condition under Picard-Lindelöf's theorem. We note that in practice the time-dependent density $\rho_t$ is often fixed by choice of interpolant (e.g. linear~\parencite{mccann1997convexity}). By further optimising the interpolant, it is possible to solve Eq.~\eqref{eq:action} under mild assumptions on the Benamou-Brenier density~\parencite{albergo2023building}.

For our most general result, which generalises scalar quantiles/ranks to $d>1$, we assume that the prior $P_{U}$ be a uniform measure, e.g. $\mathcal{U}([0,1]^{d})$. This was chosen for PIT-type reasons.
However, a uniform on a $d$-dimensional cube is not strictly necessary for rank-preservation; other choices like the standard normal or the uniform on a ball are available provided regularity conditions for OT are met~\parencite{carlier2016vector}.
In any case, weaker counterfactual equivalence relations may be obtained by relaxing the conditions under which point-wise identification holds. Such relaxations are often admissible in practice without completely sacrificing the utility of the method; this is in the spirit of many existing identifiability results~\parencite{khemakhem2021causal,nasr2023counterfactual,hyvarinen2024identifiability}.
Finally, the experiments we ran to validate our theory rely on Batch-OT to recover the global OT map in practice, which \textcite{pooladian2023multisample,tong2024improving} provide convergence proofs for. However, Batch-OT is known to underperform for smaller batch sizes. Therefore, improving high-dimensional OT in terms of accuracy and efficiency remains key.

To summarise: with quadratic cost, (i) finite second moments ensure existence of an optimal coupling; (ii) if the source is absolutely continuous and $\dim(U) = \dim(X)$, the OT is (a.e.) unique and given by Brenier's gradient map; (iii) two-sided density bounds on convex domains are standard, useful regularity assumptions, but are not always strictly necessary in practice; and (iv) a Lipschitz velocity field yields a unique, invertible Benamou-Brenier flow solving the continuity equation. When embedded in a Markovian SCM with independent noise, this flow provides a unique, rank-preserving transport representation of the causal mechanism, generalising the notion of scalar quantiles to $d>1$, and under our stated assumptions, supporting identifiability of multi-dimensional counterfactuals from observational data.
\section{Energy Based Models: Curl-free Flows}
\begin{proposition}[Curl-free Flows.]
    Let ${(\rho_t, v_t)}_{t \in [0,1]}$ be an admissible curve for the dynamic OT map in Lemma~\ref{appprop:unique_dynamic_ot}, with $v_t \in C^1(\Omega;\mathbb{R}^d)$ for each fixed $\mathbf{pa}\in \mathbb{R}^k$. Write the Helmholtz-Hodge decomposition $v_t = \nabla \psi_t + s_t$, where $s_t$ is the curl component. Since $\operatorname{div}(\rho_ts_t) = 0$ by construction, the pair $(\rho_t, \nabla \psi_t)$ satisfies $\partial_t \rho_t + \operatorname{div}(\rho_t \nabla \psi_t) = 0$. Then, the Benamou-Brenier action $\mathcal{A}$ is:
    \begin{align}
        \mathcal{A}(\rho, v) - \mathcal{A}(\rho, \nabla\psi) =\frac{1}{2}\int_0^1\!\!\int_{\Omega}\|s_t(u;\mathbf{pa})\|^2\rho_t(u;\mathbf{pa})  \, \mathrm{d}u \,\mathrm{d}t \geq 0,
    \end{align}
    hence replacing $v_t$ by its curl-free component $\nabla \psi$ never increases the dynamic OT cost.
    \begin{proof}
        For each $t\in[0,1]$ the Helmholtz-Hodge decomposition provides a unique split
        \begin{align}
             &  & v_{t}=\nabla\psi_{t} + s_{t},
             &  &
            \int_{\Omega}\rho_t(u, \mathbf{pa})\,s_{t}(u, \mathbf{pa})\!\cdot\!\nabla\eta(u, \mathbf{pa})\,\mathrm {d}u = 0,
             &  & \forall\eta\in H^{1}_{N}(\Omega). &  &
        \end{align}
        Orthogonality in $L^{2}(\rho_{t})$ then implies
        \begin{align}\label{eq:HD}
            \|v_{t}\|^{2}=\|\nabla\psi_{t}\|^{2}+\|s_{t}\|^{2}
        \end{align}
        hence
        \begin{align}
             &  & \mathcal{A}(\rho, v) = \mathcal{A}(\rho, \nabla \psi)
            +\frac12\int_{0}^{1}\!\!\int_{\Omega}
            \|s_{t}(u;\mathbf{pa})\|^{2}\rho_t(u, \mathbf{pa})\,\mathrm{d}u \,\mathrm{d}t.
             &  &
        \end{align}
        The integral above is non-negative and vanishes iff $s_{t}\equiv0$, and therefore, $\mathcal{A}(\rho, v) \geq \mathcal{A}(\rho, \nabla \psi)$, with strict inequality for every non-conservative field.
    \end{proof}
\end{proposition}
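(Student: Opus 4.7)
The plan is to reduce the claim to a pointwise-in-$t$ Pythagorean identity in the weighted Hilbert space $L^{2}(\rho_{t})$, then integrate in time. The Helmholtz-Hodge split $v_{t}=\nabla\psi_{t}+s_{t}$ with $\operatorname{div}(\rho_{t}s_{t})=0$ is, by construction, an orthogonal decomposition in $L^{2}(\rho_{t})$, so the action contributions from the two components decouple additively. Since only the curl-free part $\nabla\psi_{t}$ carries the transport, dropping $s_{t}$ leaves the marginals invariant while strictly reducing kinetic energy whenever $s_{t}\not\equiv 0$.

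First I would check admissibility, i.e.\ that $(\rho_{t},\nabla\psi_{t})$ still satisfies the continuity equation. Starting from $\partial_{t}\rho_{t}+\operatorname{div}(\rho_{t}v_{t})=0$ and linearity of the divergence,
\begin{align}
\operatorname{div}(\rho_{t}v_{t})=\operatorname{div}(\rho_{t}\nabla\psi_{t})+\operatorname{div}(\rho_{t}s_{t})=\operatorname{div}(\rho_{t}\nabla\psi_{t}),
\end{align}
since the second summand vanishes by the defining property of $s_{t}$. Hence $(\rho_{t},\nabla\psi_{t})$ is admissible with the same endpoints $\rho_{0}=P_{U}$ and $\rho_{1}=P_{X\mid\mathbf{pa}}^{\mathfrak{C}}$.

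Next I would establish the weighted orthogonality. The natural pairing is
\begin{align}
\langle\nabla\psi_{t},s_{t}\rangle_{L^{2}(\rho_{t})}=\int_{\Omega}\nabla\psi_{t}(u)\cdot s_{t}(u)\,\rho_{t}(u)\,\mathrm{d}u=\int_{\Omega}\nabla\psi_{t}(u)\cdot\bigl(\rho_{t}s_{t}\bigr)(u)\,\mathrm{d}u.
\end{align}
An integration by parts converts this to $-\int_{\Omega}\psi_{t}\,\operatorname{div}(\rho_{t}s_{t})\,\mathrm{d}u$ plus a boundary flux term; the interior term is zero by hypothesis, and the boundary term vanishes under the standard no-flux condition $(\rho_{t}s_{t})\!\cdot\!\mathbf{n}=0$ on $\partial\Omega$ (equivalent to $s_{t}$ lying in the admissible divergence-free class for the weighted Hodge decomposition on the bounded convex domain of Lemma~\ref{appprop:unique_dynamic_ot}). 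Orthogonality immediately yields the Pythagorean identity $\|v_{t}\|_{L^{2}(\rho_{t})}^{2}=\|\nabla\psi_{t}\|_{L^{2}(\rho_{t})}^{2}+\|s_{t}\|_{L^{2}(\rho_{t})}^{2}$, as displayed in Eq.~\eqref{eq:HD}.

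Finally I would integrate the pointwise identity against $\tfrac{1}{2}\mathrm{d}t$ on $[0,1]$ and rearrange to obtain the claimed formula
\begin{align}
\mathcal{A}(\rho,v)-\mathcal{A}(\rho,\nabla\psi)=\tfrac{1}{2}\int_{0}^{1}\!\!\int_{\Omega}\|s_{t}(u;\mathbf{pa})\|^{2}\rho_{t}(u;\mathbf{pa})\,\mathrm{d}u\,\mathrm{d}t\geq 0,
\end{align}
with equality iff $s_{t}\equiv 0$ $\rho_{t}$-a.e.\ in $[0,1]\times\Omega$. The main obstacle is not the algebra but the functional-analytic setup: ensuring that the weighted Helmholtz-Hodge decomposition exists and is unique requires $\rho_{t}$ to be bounded below by a positive constant (so that $L^{2}(\rho_{t})$ is equivalent to $L^{2}$) and the correct boundary class for $s_{t}$ to kill the integration-by-parts residual. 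Both conditions are supplied by the Caffarelli regularity hypotheses of Lemma~\ref{appprop:unique_dynamic_ot} (strictly positive bounded densities on a bounded convex domain with sufficiently regular interpolant), so I would invoke these once at the outset and proceed with the orthogonal-decomposition argument above.
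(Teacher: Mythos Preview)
Your proposal is correct and follows essentially the same route as the paper: both arguments hinge on the weighted $L^{2}(\rho_{t})$-orthogonality of $\nabla\psi_{t}$ and $s_{t}$ to obtain the Pythagorean identity, then integrate in time. You are simply more explicit than the paper about verifying admissibility of $(\rho_{t},\nabla\psi_{t})$, spelling out the integration-by-parts step with its boundary term, and flagging the density/regularity hypotheses that underwrite the decomposition; the paper compresses all of this into the single orthogonality condition $\int_{\Omega}\rho_{t}s_{t}\cdot\nabla\eta=0$ for $\eta\in H^{1}_{N}(\Omega)$.
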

\begin{remark}
    The unique minimiser of the dynamic OT problem is curl-free, as its time-1 map is the gradient of a convex potential. Thus, the above result suggests that an energy-based model (EBM) parameterisation of $v_t$~(e.g. see \textcite{de2025demystifying,balcerak2025energy}) could prove to be a useful inductive bias for some problems, as it restricts the search space to curl-free vector fields. In practice, we operationalise this model by simply taking the sum of a neural network's output as the energy, then taking gradients w.r.t the input $x$, yielding a curl-free vector field. We use this model class primarily for the constructed counterfactual ellipse scenarios, as it can be quite computationally intensive for high-dimensional images, given our resources. We consider scalable curl-free flow models fertile ground for future work.
\end{remark}
\newpage
\section{Counterfactual Soundness Axioms}
\label{app:axioms}
When counterfactual ground truth is not available, we follow prior work~\parencite{monteiro2023measuring,ribeiro2023high} and measure counterfactual \textit{composition}, \textit{effectiveness} and \textit{reversibility}, which are axiomatic soundness properties of counterfactual functions that must hold true in all causal models~\parencite{Halpern1998,Galles1998,pearl2009causality}. We provide a gentle introduction next for completeness; for a more detailed treatment, please refer to~\textcite{monteiro2023measuring}.

We point out that, while certainly useful diagnostic tools, the following metrics alone do not imply identification and should not be construed as evidence of causal validity.
\begin{enumerate}
    \item The \textbf{composition} axiom states that intervening on a variable to have a value it would have had without the intervention should leave the system unchanged. In practice, composition is often measured using reconstruction error under a null-intervention on the parents:
          \begin{align}
              \mathbb{E}_{(x, \mathbf{pa}) \sim P_{\text{data}}} \Big[ \left\| x - T(\mathbf{pa}, \mathbf{pa}, x) \right\|_1 \Big].
          \end{align}
          We report this metric to enable fair comparisons with prior work. For a more stringent measure of composition that tests path independence of intervention sequences, we suggest:
          \begin{align}
              \mathbb{E}_{(x, \mathbf{pa}) \sim P_{\text{data}}} \Big[ \left\| T(\mathbf{pa}^*_2, \mathbf{pa}^*_1, T(\mathbf{pa}^*_1, \mathbf{pa}, x)) - T(\mathbf{pa}^*_2, \mathbf{pa}, x) \right\|_1 \Big].
          \end{align}
          Other \textit{composition} tests could include comparing outcomes of one-by-one sequential interventions against equivalent simultaneous ones.
    \item The \textbf{effectiveness} axiom states that intervening on a variable to have a specific value will cause the variable to take on that value. Effectiveness is typically the most challenging axiom to obtain an unbiased measure of in practice, as ideally one needs access to an \textit{oracle} to tell us when/if our interventions are faithful. In practice, pseudo-oracles in the form of parent predictor models are used to determine whether interventions are successful:
          \begin{align}
              \mathbb{E}_{(x, \mathbf{pa}) \sim P_{\text{data}}}\Big[ \mathrm{d}(\mathcal{O}(T(\mathbf{pa}^*, \mathbf{pa}, x)), \mathbf{pa}^*) \Big],
          \end{align}
          where $\mathcal{O}(\cdot)$ is a pseudo-oracle function that returns the value of the parent(s) given the observation. Here $\mathrm{d}(\cdot)$ is an appropriate distance function dependent on the parent type. It is common practice to also apply the same metric to variables that are \emph{not} descendants of the intervened-on variable, as invariance there serves as a diagnostic measure of \emph{minimality}.
    \item The \textbf{reversibility} axiom precludes multiple solutions due to feedback loops, and follows directly from composition in recursive systems such as DAGs. This can be intuitively understood as measuring cycle consistency w.r.t. interventions on the parents:
          \begin{align}
              \mathbb{E}_{(x, \mathbf{pa}) \sim P_{\text{data}}} \Big[ \left\| x - T(\mathbf{pa}, \mathbf{pa}^*, T(\mathbf{pa}^*, \mathbf{pa}, x)) \right\|_1 \Big].
          \end{align}
          Using less cluttered notation, we measure the extent to which the reversal $x_r$ matches the observation $x$ after $n \geq 1$ intervention cycles, where each cycle is given by:
          \begin{align}
               &  & x^* = T_{\mathbf{pa}^*} \circ T^{-1}_{\mathbf{pa}}(x), &  & x_r = T_{\mathbf{pa}} \circ T^{-1}_{\mathbf{pa}^*}(x^*). &  &
          \end{align}
\end{enumerate}
Although often too costly in practice, recursive measurement of all the above metrics can yield more fine-grained insight into the faithfulness of interventions, biases of the associated counterfactual function, and incremental loss of identity w.r.t. the original observation~\parencite{monteiro2023measuring}.
\newpage
\section{Counterfactual Ellipse: Extra Results}
\label{app:extra_results_ellipse}
\subsection{Dataset Details}
As outlined in the main text, our ellipse dataset is built on~\textcite{nasr2023counterfactual}'s setup. We reiterate here for completeness. Let $U \in \mathbb{R}^2$ be the semi-major and -minor parameters of an ellipse, $\operatorname{PA} \in (0, 2\pi)$ be an angle specifying a single point on the ellipse, and $X \in \mathbb{R}^2$ be its cartesian coordinates. The data-generating process is defined as follows:
\begin{flalign*}
    && Z & \coloneqq \epsilon_{z},
    && &\epsilon_{z} \sim \mathrm{Uniform}(-0.5,\,0.5) &&
    \\[4.5pt]
    && \operatorname{PA} & \coloneqq (1.44254843z + 0.59701923 + \epsilon_{\operatorname{pa}})\bmod (2\pi),
    && &\epsilon_{\operatorname{pa}}\sim \mathcal{N}(0,1)&&
    \\[5pt]
    &&U_{0} & \coloneqq \exp(1.64985274z + 0.2656131)\;+\;\epsilon_{u_{0}}
    && &\epsilon_{u_{0}}\sim \mathrm{Beta}(1,1)&&
    \\[5pt]
    &&U_{1} & \coloneqq U_{0}(1 + \epsilon_{u_{1}}\exp(1.61323358z - 0.18070237))
    && &\epsilon_{u_{1}}\sim \mathrm{Exponential}(1)&&
    \\[5pt]
    && X_{0} & \coloneqq U_{0}(2 + \sin(\operatorname{PA}))&&
    \\[5pt]
    && X_{1} & \coloneqq U_{1}(2 + \cos(\operatorname{PA})).&&
\end{flalign*}
By construction, $U \indep \operatorname{PA} \mid Z$, and $Z$ satisfies the backdoor criterion (BC) w.r.t. $\operatorname{PA}\to X$. To induce a Markovian setting, we simply randomise $\operatorname{PA}$, yielding marginal independence $U \indep \operatorname{PA}$.

For the front-door criterion (FC) to apply, we modify the data-generating process slightly to include a mediator variable $M$, such that $\operatorname{PA} \to M \to X$:
\begin{flalign*}
    && Z & \coloneqq \epsilon_{z},
    && &\epsilon_{z} \sim \mathrm{Uniform}(-0.5,\,0.5) &&
    \\[4.5pt]
    && \operatorname{PA} & \coloneqq (1.44254843z + 0.59701923 + \epsilon_{\operatorname{pa}})\bmod (2\pi),
    && &\epsilon_{\operatorname{pa}}\sim \mathcal{N}(0,1)&&
    \\[5pt]
    &&M_{0} & \coloneqq \beta \cdot (\sin(\operatorname{PA}) + \epsilon_{m_{0}})
    && &\epsilon_{m_{0}}\sim \mathcal{N}(0,0.01)&&
    \\[5pt]
    &&M_{1} & \coloneqq \beta \cdot (\cos(\operatorname{PA}) + \epsilon_{m_{1}})
    && &\epsilon_{m_{1}}\sim \mathcal{N}(0,0.01)&&
    \\[5pt]
    &&U_{0} & \coloneqq \exp(1.64985274z^2 + 0.2656131) + \epsilon_{u_{0}}
    && &\epsilon_{u_{0}}\sim \mathrm{Beta}(1,1)&&
    \\[5pt]
    &&U_{1} & \coloneqq U_{0}(1 + \epsilon_{u_{1}}\exp(1.61323358z^2 - 0.18070237))
    && &\epsilon_{u_{1}}\sim \mathrm{Exponential}(1)&&
    \\[5pt]
    && X_{0} & \coloneqq U_0(2 + M_0)&&
    \\[5pt]
    && X_{1} & \coloneqq U_{1}(2 + M_1), &&
\end{flalign*}
where $\beta = 1 / \sqrt{m_0^2 + m_1^2}$ projects the noised points back onto the unit circle. Note the non-linear confounding $\operatorname{PA} \leftarrow Z \to U \to X$. This dataset is only used for the front-door criterion experiments, where we intentionally omit the unobserved confounder $Z$ when building the model.

In all cases, we split our datasets into 70/10/20\% for training, validation and testing, respectively.
\subsection{Architecture and Experimental Setup}
For all our model variants, we used the following simple residual multi-layer perceptron (MLP) to parameterise the (conditional) vector field. We start with a linear projection layer which takes as input the concatenation of $x$, $\operatorname{pa}$ and a time index $t \in [0, 1]$. The input dimension is therefore $\text{dim}(X) + \text{dim}(\operatorname{PA}) + 1 = 4$. We then have 3 of the following (residual) blocks:
\begin{align}
    x \mapsto \textsc{Linear} \circ \textsc{SiLU} \circ \textsc{Linear} \circ \textsc{LayerNorm}(x).
\end{align}
The hidden dimension of these blocks is 256. The head of the MLP consists of a final layer norm and a linear projection to the output dimension $\text{dim}(X) = 2$. The energy-based flow model variants were built by taking the mean of the MLP output, then taking gradients w.r.t. the input $x$.

We used PyTorch~\parencite{paszkepytorch} to train all our modes for 500k steps, under identical hyperparameter setups. These were determined with a light sweep over the learning rate and network width. We used the AdamW optimizer with a learning rate of $10^{-4}$, weight decay of $10^{-4}$, $\beta_1 = 0.9$, $\beta_1 = 0.999$, $\epsilon = 10^{-8}$, and batch size 256. The best model was selected based on the counterfactual error ($\mu_{\text{APE}}$) on the validation set during training. Final performance is reported on the test set.
\clearpage
\subsection{Qualitative Results}
\begin{figure}[!ht]
    \centering
    \includegraphics[trim={0 0 5 2}, clip,width=\textwidth]{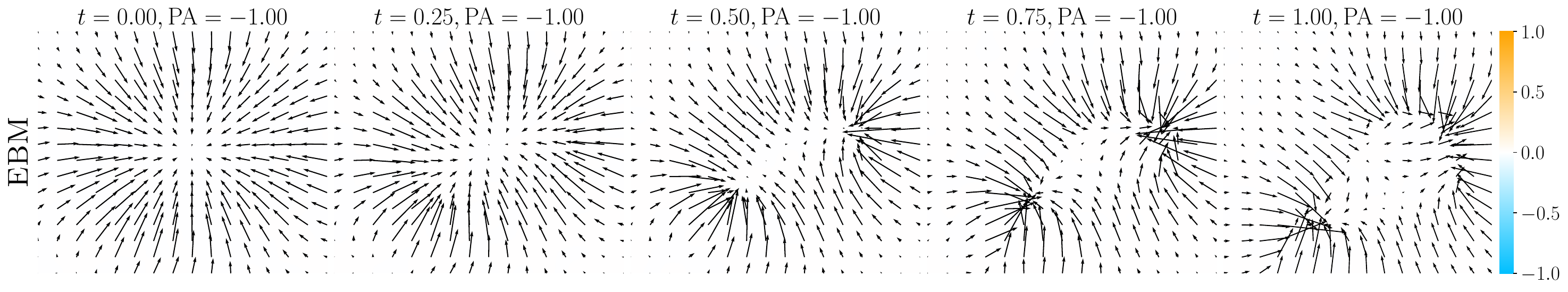}
    \includegraphics[trim={0 0 5 2}, clip,width=\textwidth]{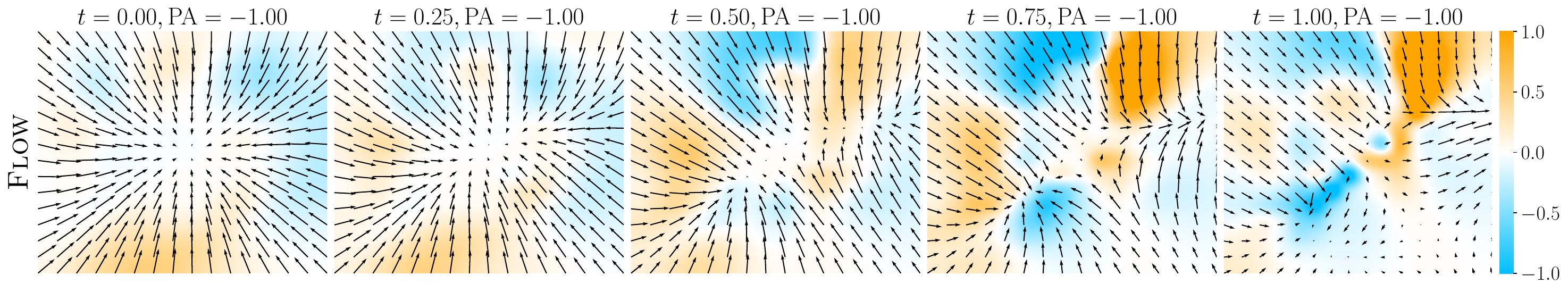}
    \includegraphics[trim={0 0 5 2}, clip,width=\textwidth]{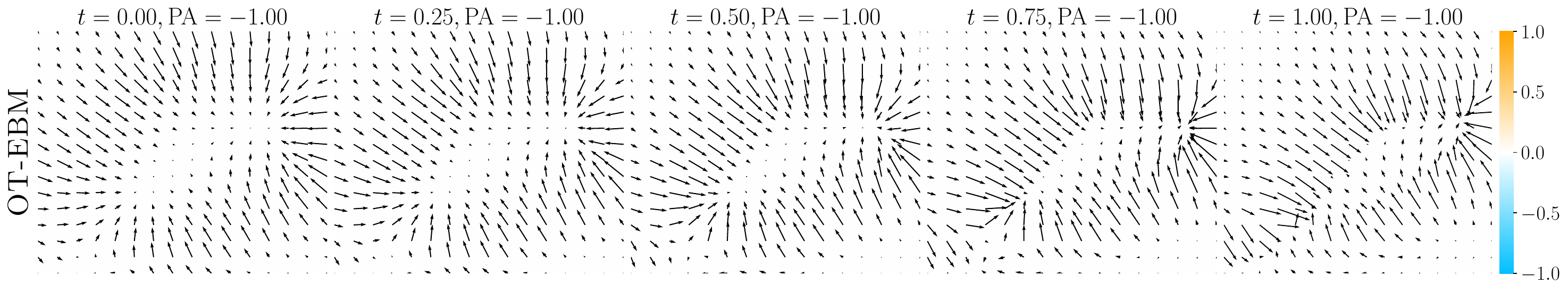}
    \includegraphics[trim={0 0 5 2}, clip,width=\textwidth]{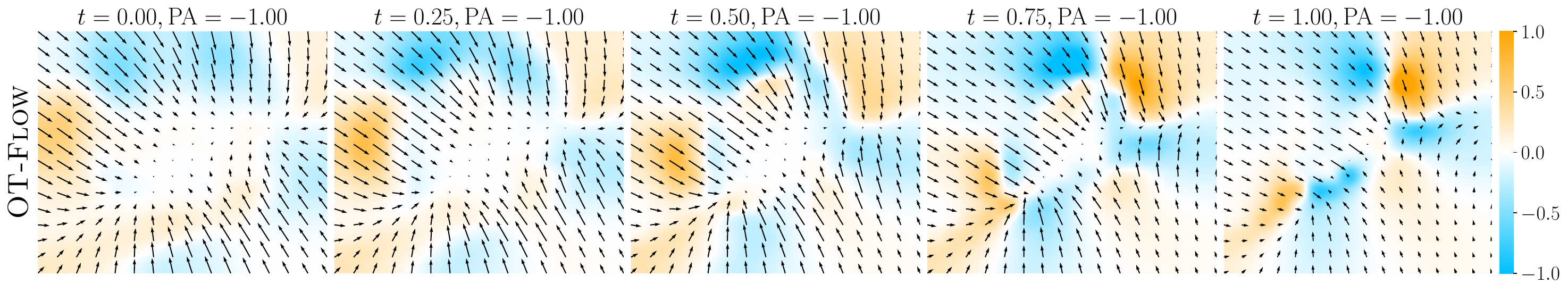}
    \caption{Visualising the curl ($\nabla \times v_t$) of the learned vector field of different models over time (scaled to $[-1, 1]$), for a given intervention on the parents $\operatorname{PA}=-1$. We can see that the \textsc{EBM} variants are curl-free (irrotational) by design, as the vector field is given by the gradient of a scalar potential. We also see that the OT vector fields are smoother and $\textsc{OT-Flow}$ exhibits milder `rotations' compared to $\textsc{Flow}$. This is consistent with convergence to the Brenier map, which is itself curl-free.}
    \label{appfig:ellipsecurl}
\end{figure}
\begin{figure}[!ht]
    \centering
    \includegraphics[trim={0 5 0 5}, clip,width=\textwidth]{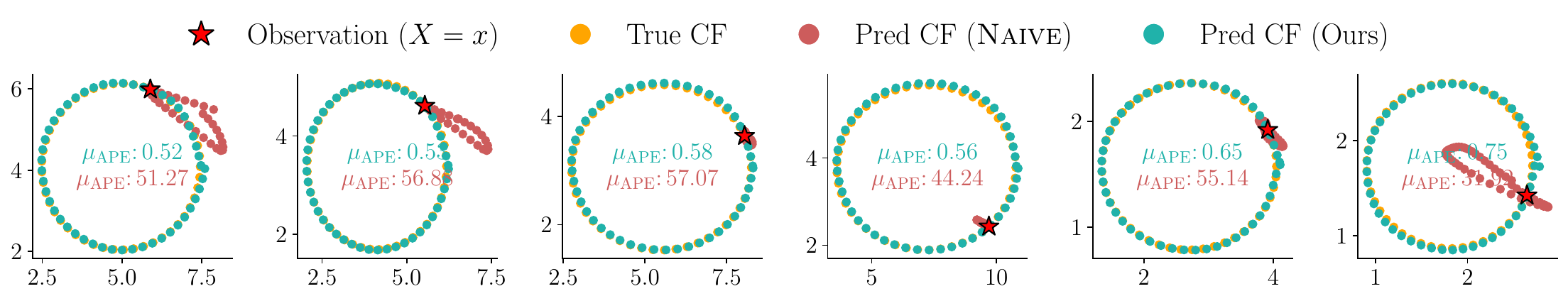}
    \includegraphics[trim={0 5 0 50}, clip,width=\textwidth]{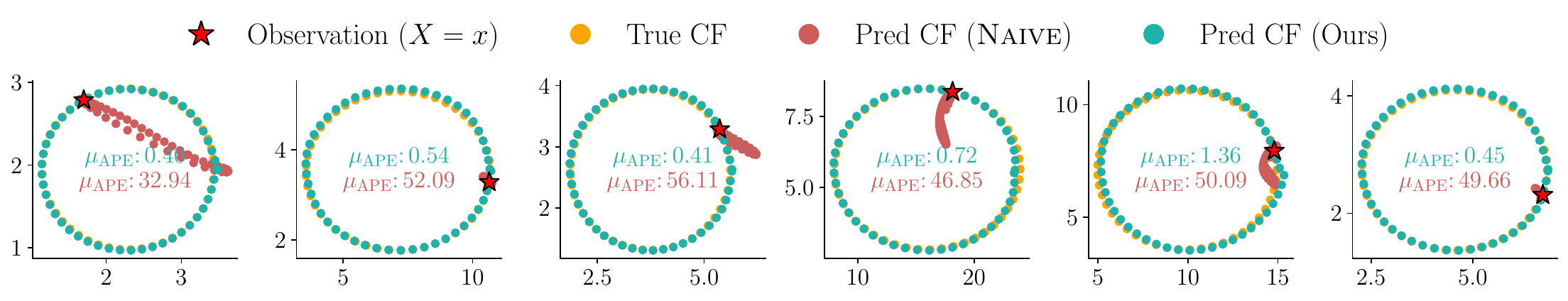}
    \includegraphics[trim={0 5 0 50}, clip,width=\textwidth]{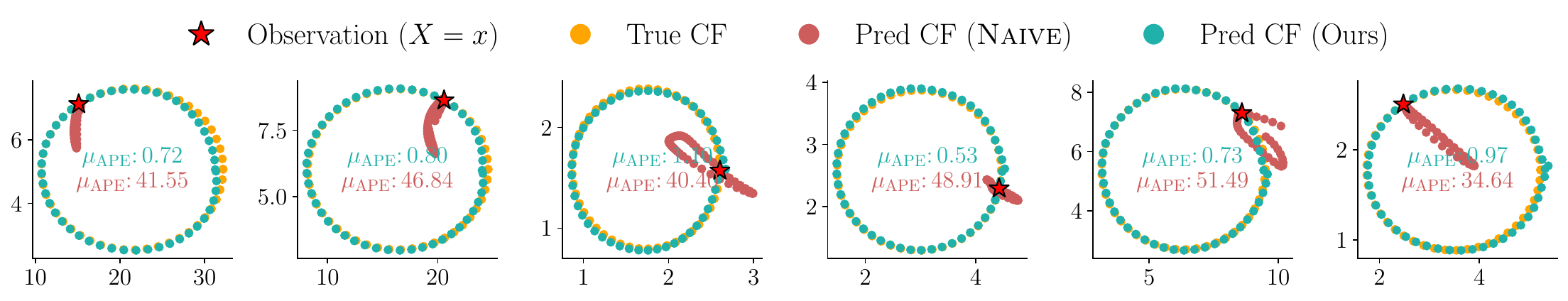}
    \caption{Inferred counterfactual (CF) ellipses using our OT coupling flow vs. the naive OT coupling approach explained in Section~\ref{sec:cf_maps_prescription}. Not cherry-picked.}
    \label{appfig:ellipse}
\end{figure}
\newpage
\begin{figure}[!ht]
    \centering
    \includegraphics[trim={0 40 0 0},clip,width=.99\textwidth]{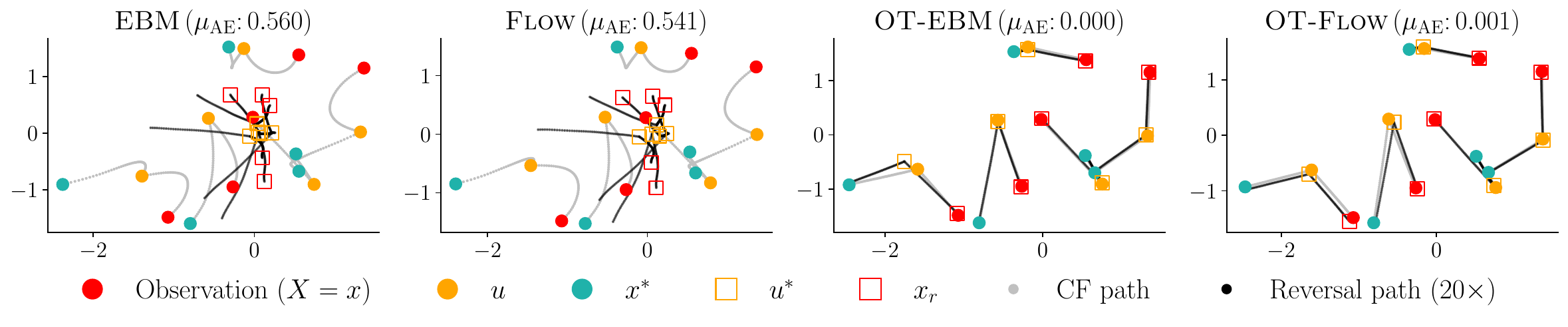}
    \includegraphics[trim={0 40 0 0},clip,width=.99\textwidth]{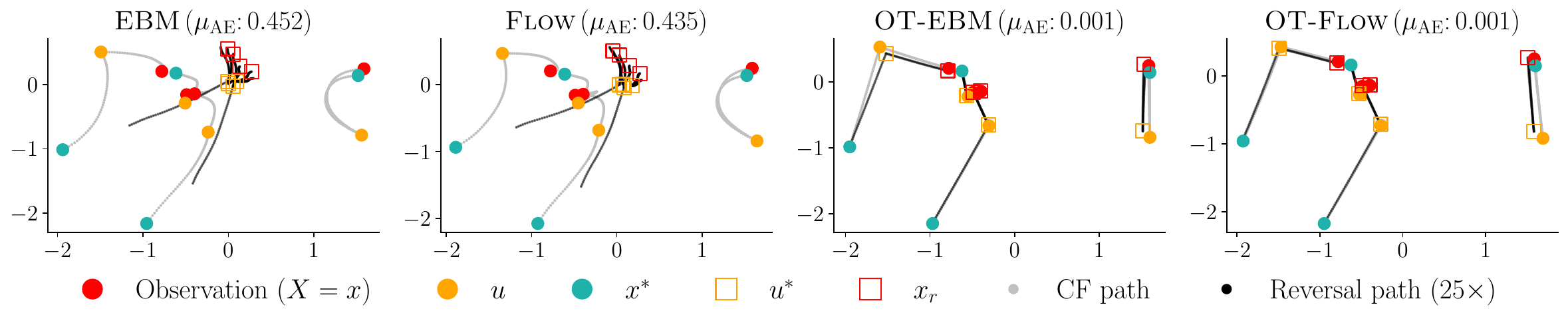}
    \includegraphics[trim={0 40 0 0},clip,width=.99\textwidth]{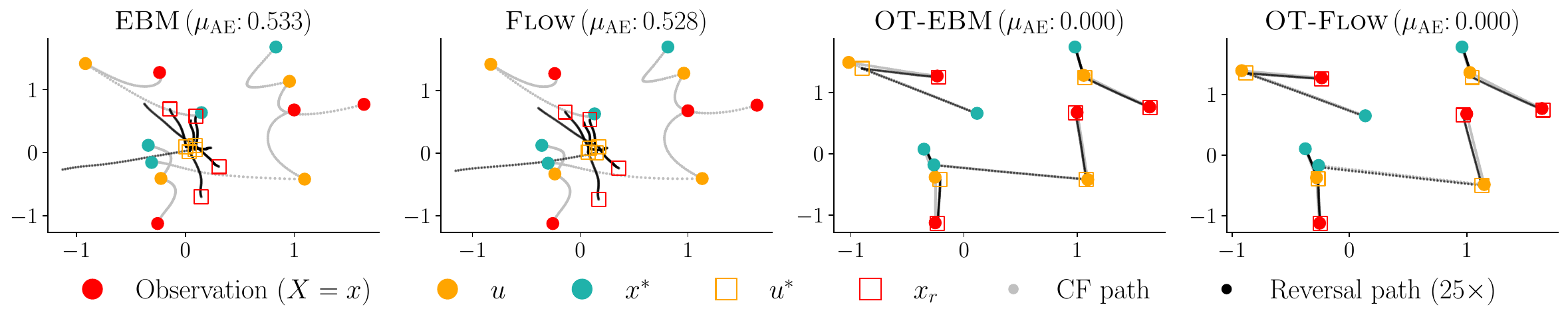}
    \includegraphics[trim={0 40 0 0},clip,width=.99\textwidth]{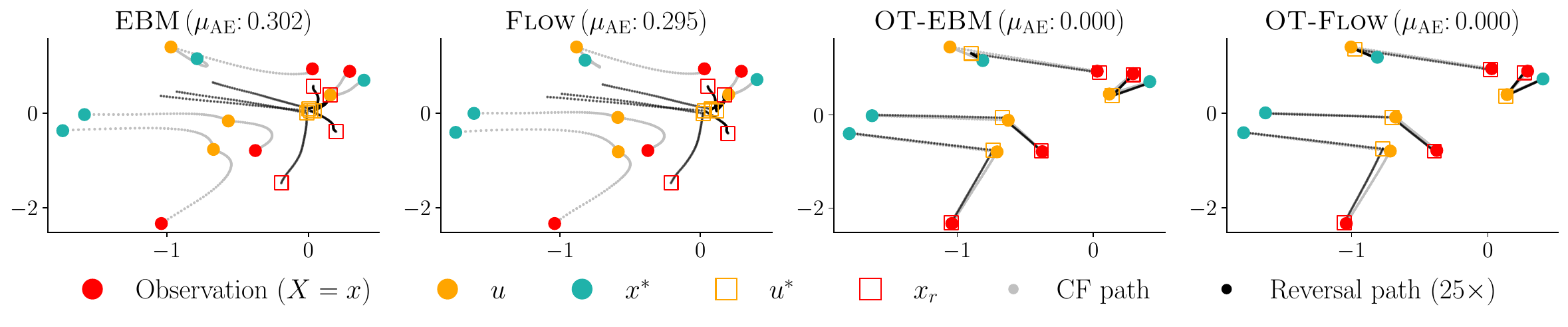}
    \includegraphics[trim={0 40 0 0},clip,width=.99\textwidth]{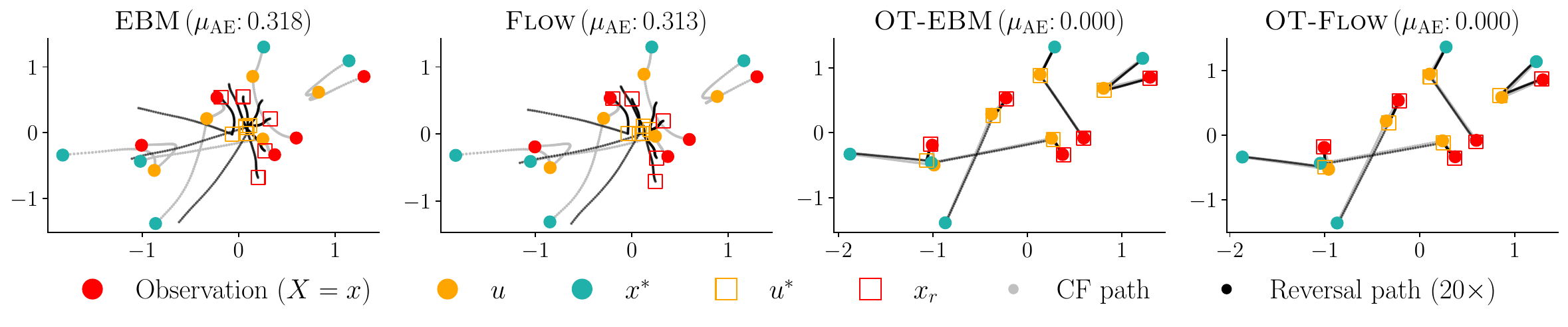}
    \includegraphics[trim={0 40 0 0},clip,width=.99\textwidth]{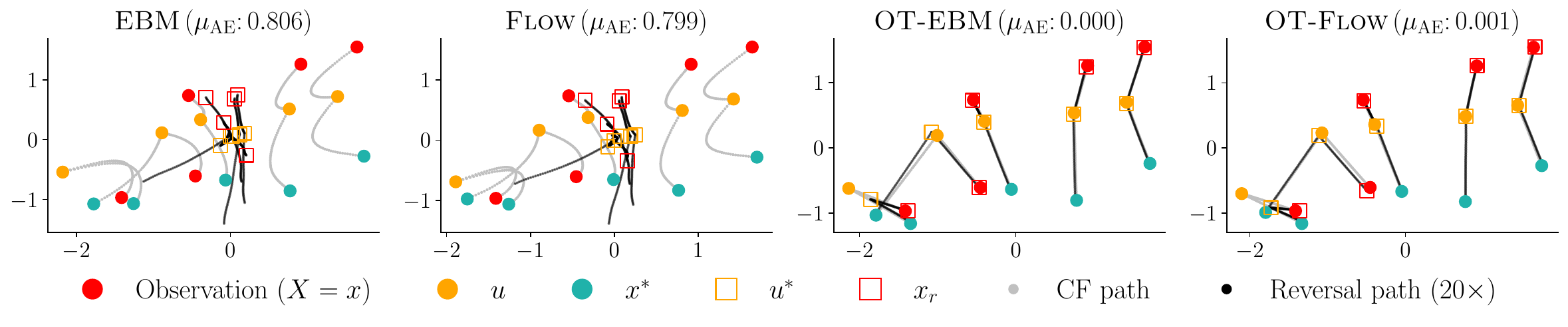}
    \includegraphics[trim={0 40 0 0},clip,width=.99\textwidth]{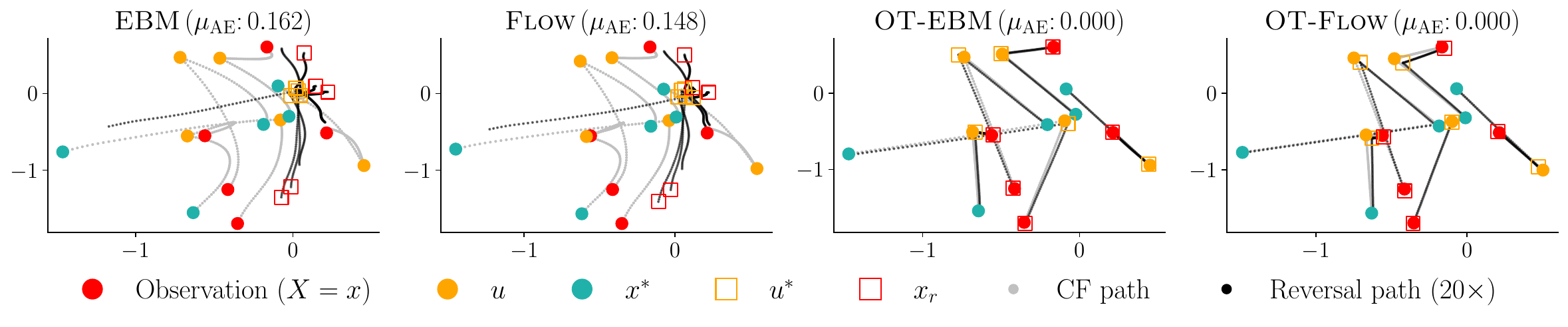}
    \includegraphics[trim={0 0 0 0},clip,width=.99\textwidth]{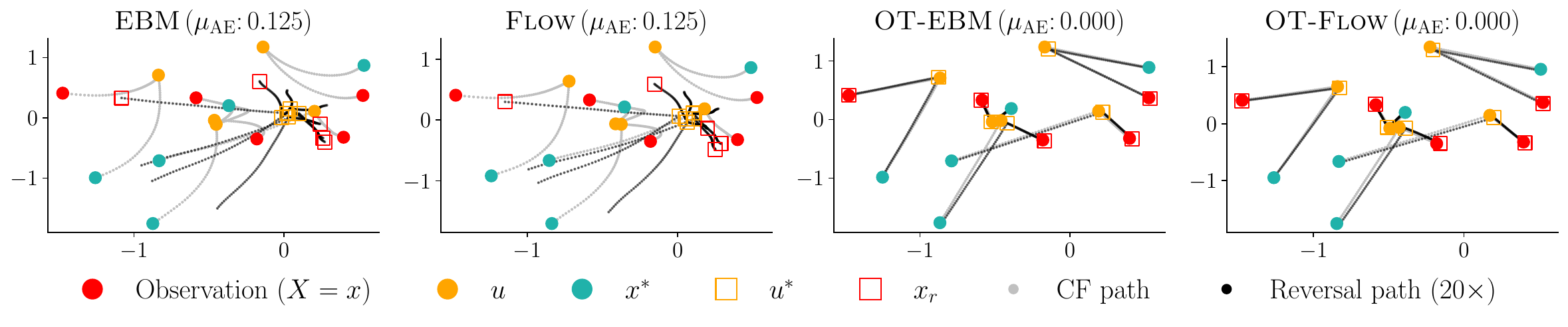}
    \caption{Extra counterfactual \textit{reversibility} results (not cherry-picked). Our OT coupling flow exhibits near-perfect counterfactual reversibility, satisfying the soundness axiom~\parencite{monteiro2023measuring}. Conversely, standard flows (and diffusion models, which are equivalent~\parencite{gao2025diffusion}) exhibit comparatively poor reversibility upon repeated cycles, with $u^*$ often collapsing to a single point for any initial observation $x$. In simple terms, straight paths are reversible; therefore, one ought to learn straight paths induced by OT in order to satisfy axiomatic reversibility of counterfactual functions.}
    \label{appfig:ellipse_ot}
\end{figure}
\newpage
\begin{figure}[!ht]
    \centering
    \includegraphics[trim={0 0 5 2}, clip,width=\textwidth]{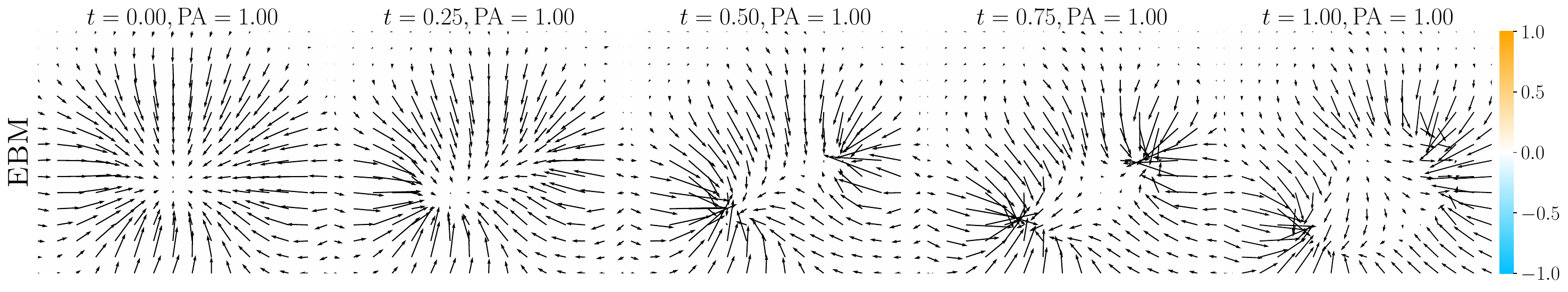}
    \includegraphics[trim={0 0 5 2}, clip,width=\textwidth]{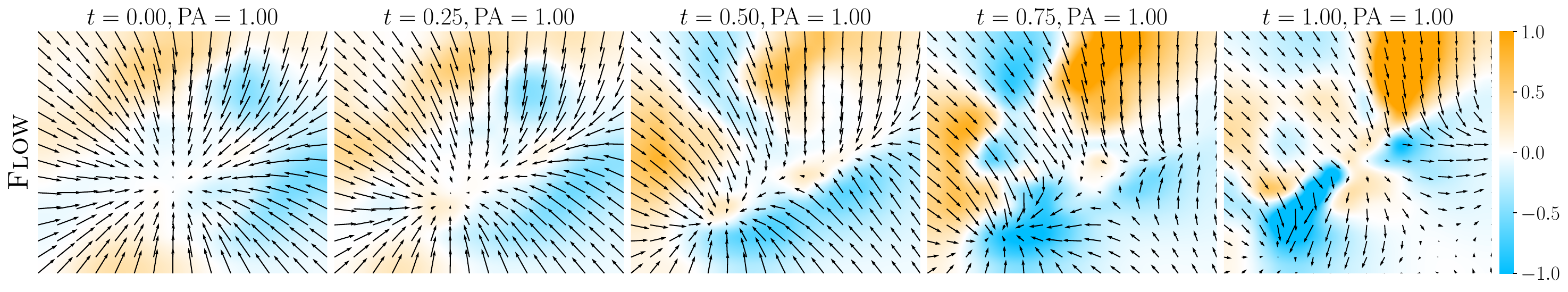}
    \includegraphics[trim={0 0 5 2}, clip,width=\textwidth]{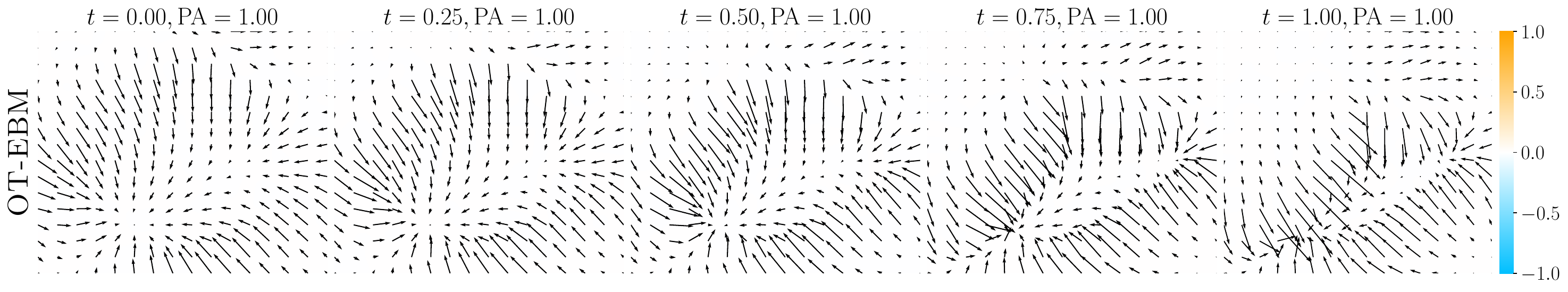}
    \includegraphics[trim={0 0 5 2}, clip,width=\textwidth]{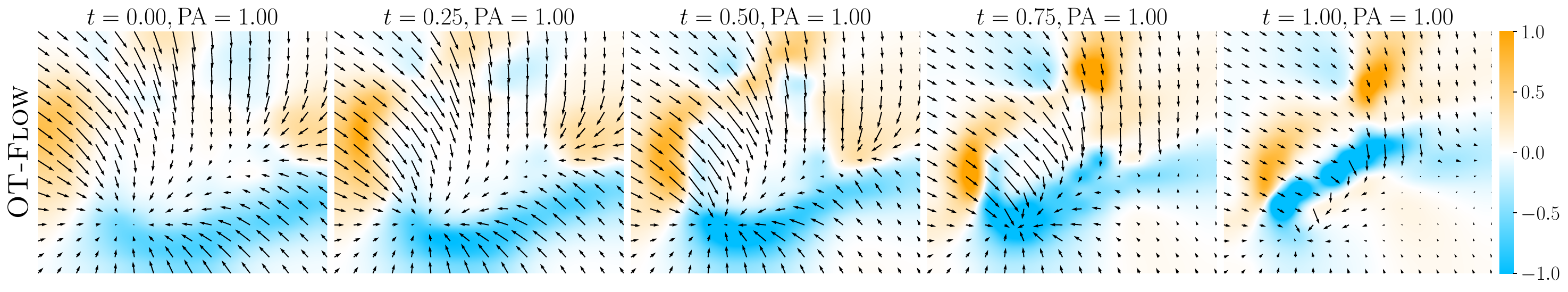}
    \caption{Visualising the curl ($\nabla \times v_t$) of the learned vector field of different models over time (scaled to $[-1, 1]$), for a given intervention on the parents $\operatorname{PA}=1$. We can see that the \textsc{EBM} variants are curl-free (irrotational) by design, as the vector field is given by the gradient of a scalar potential. We also see that the OT vector fields are smoother and $\textsc{OT-Flow}$ exhibits milder `rotations' compared to $\textsc{Flow}$. This is consistent with convergence to the Brenier map, which is itself curl-free.}
    \label{appfig:ellipsecurl2}
\end{figure}
\begin{figure}[!ht]
    \centering
    \includegraphics[trim={0 5 0 5}, clip,width=\textwidth]{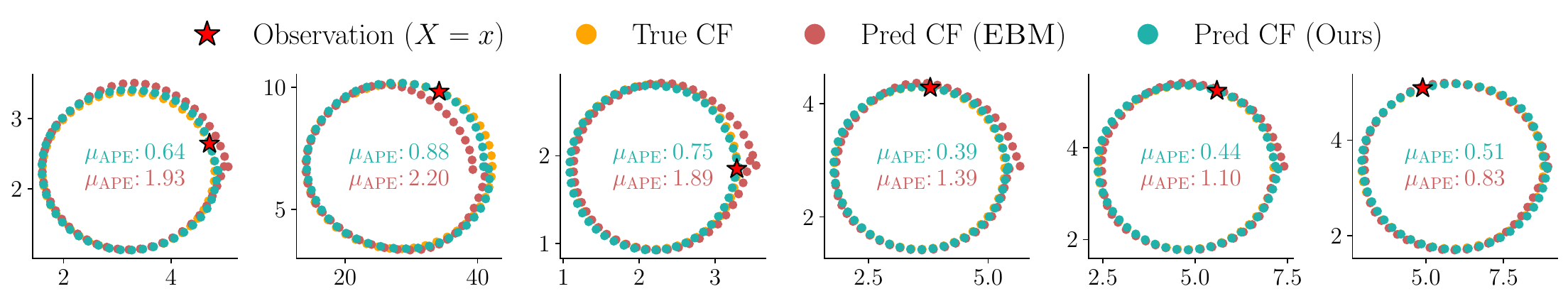}
    \includegraphics[trim={0 5 0 50}, clip,width=\textwidth]{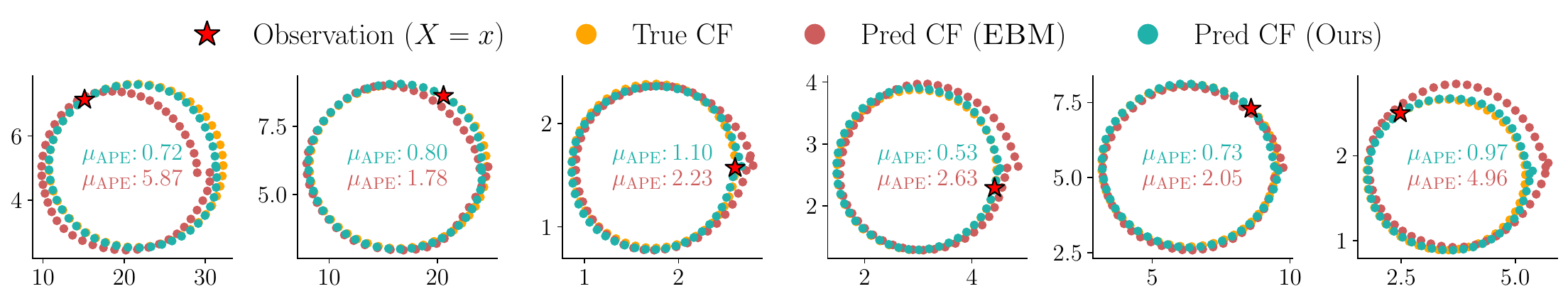}
    \includegraphics[trim={0 5 0 50}, clip,width=\textwidth]{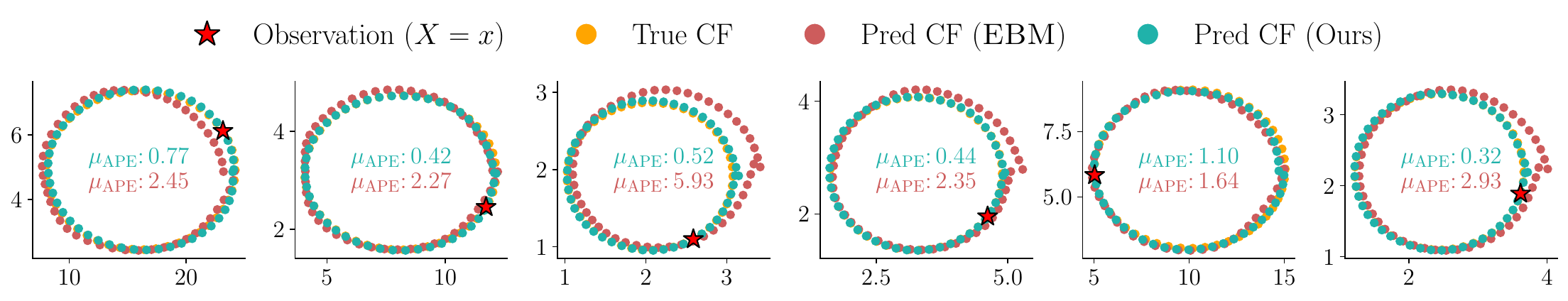}
    \caption{Inferred counterfactual (CF) ellipses using our (Markovian) OT coupling flow vs. an energy-based (curl-free) parameterisation of a flow-based model. Not cherry-picked.}
    \label{appfig:ellipse2}
\end{figure}
\newpage
\begin{figure}[!ht]
    \centering
    \includegraphics[trim={0 0 0 0}, clip,width=\textwidth]{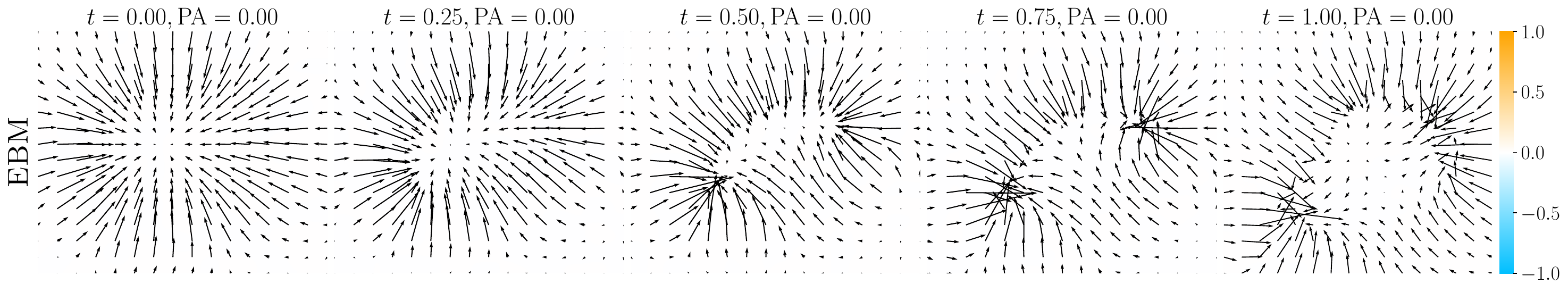}
    \includegraphics[trim={0 0 0 27}, clip,width=\textwidth]{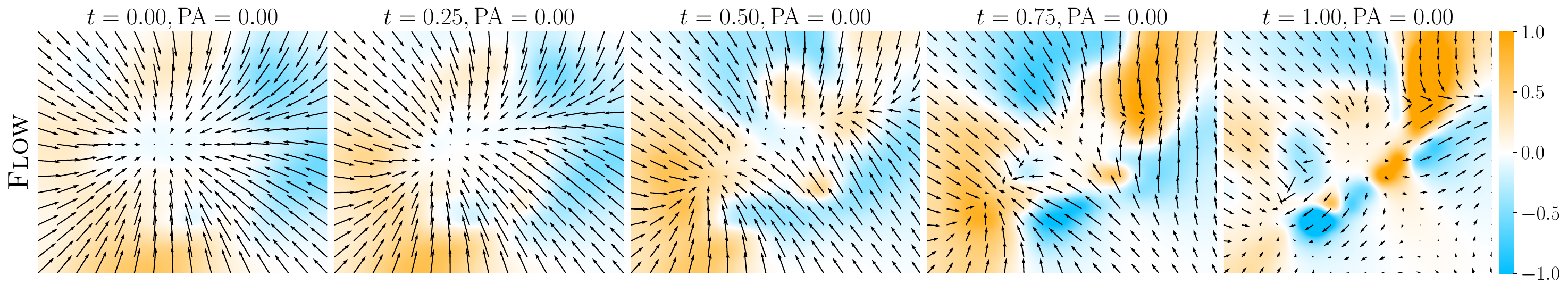}
    \includegraphics[trim={0 0 0 27}, clip,width=\textwidth]{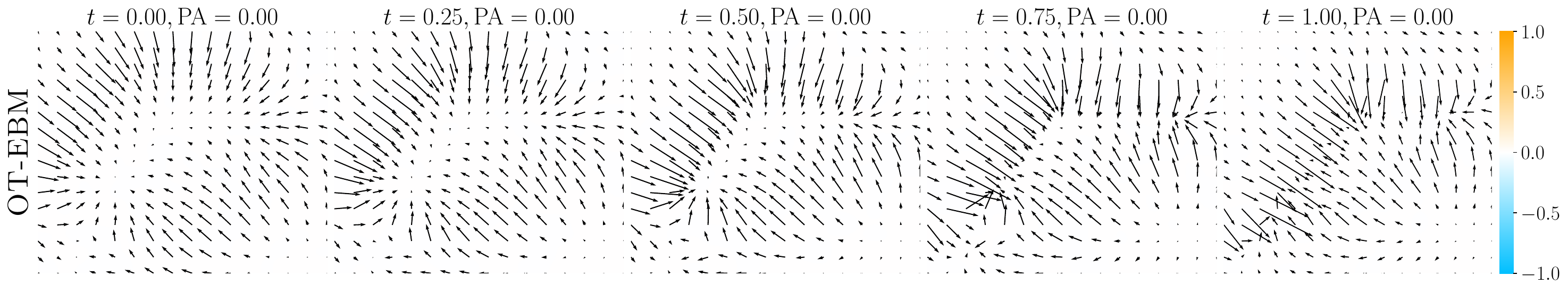}
    \includegraphics[trim={0 0 0 27}, clip,width=\textwidth]{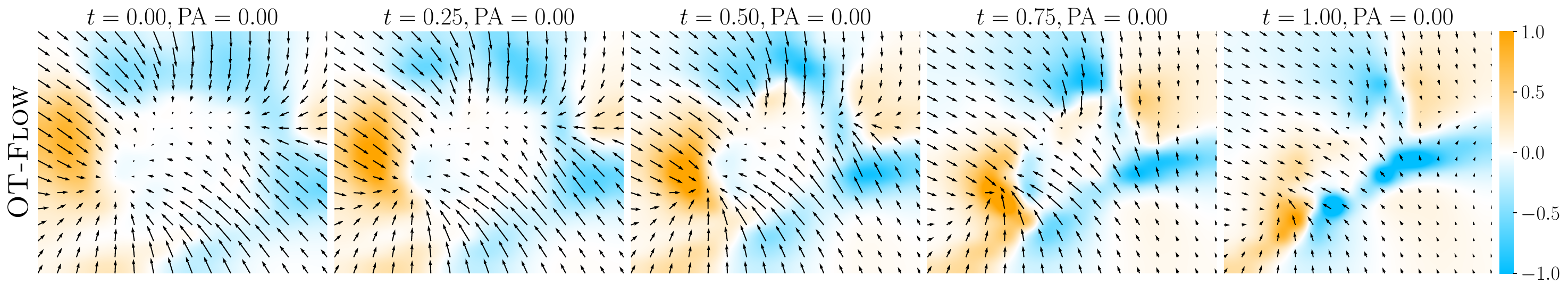}
    \caption{Visualising the \textit{curl} ($\nabla \times v_t$) of the learned vector field of different models over time (scaled to $[-1, 1]$), for a given intervention on the parents $\operatorname{PA}=-1$. We can see that the \textsc{EBM} variants are curl-free (irrotational) by design, as the vector field is given by the gradient of a scalar potential. We also see that the OT vector fields are smoother and $\textsc{OT-Flow}$ exhibits milder `rotations' compared to $\textsc{Flow}$. This is consistent with convergence to the Brenier map, which is itself curl-free.}
    \label{appfig:ellipsecurl3}
\end{figure}
\begin{figure}[!ht]
    \centering
    \includegraphics[trim={0 5 0 5}, clip,width=\textwidth]{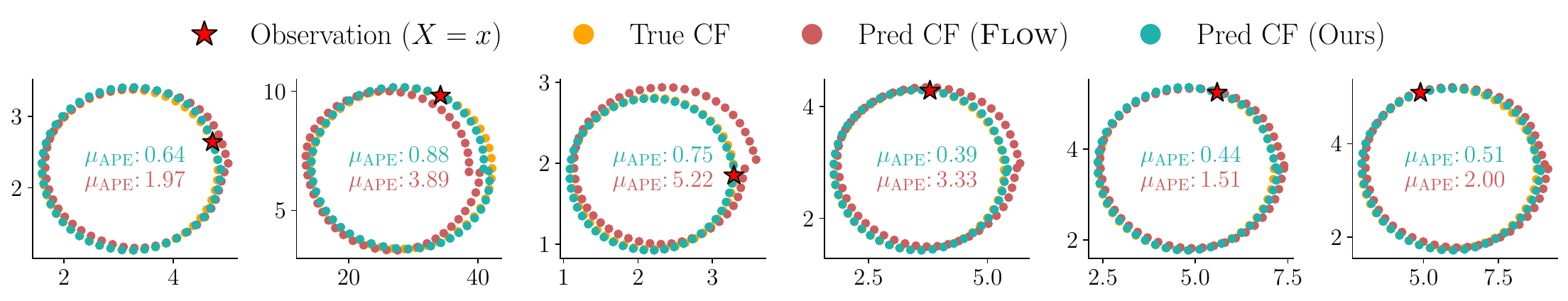}
    \includegraphics[trim={0 5 0 50}, clip,width=\textwidth]{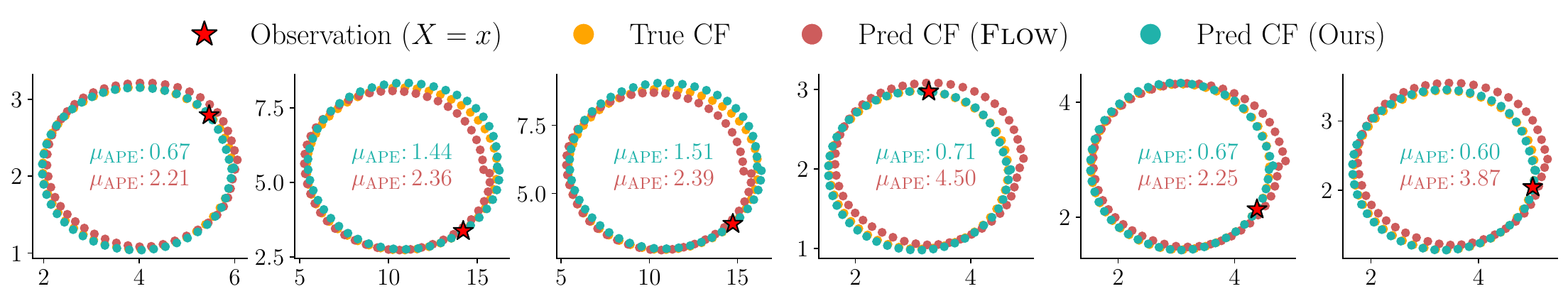}
    \includegraphics[trim={0 5 0 50}, clip,width=\textwidth]{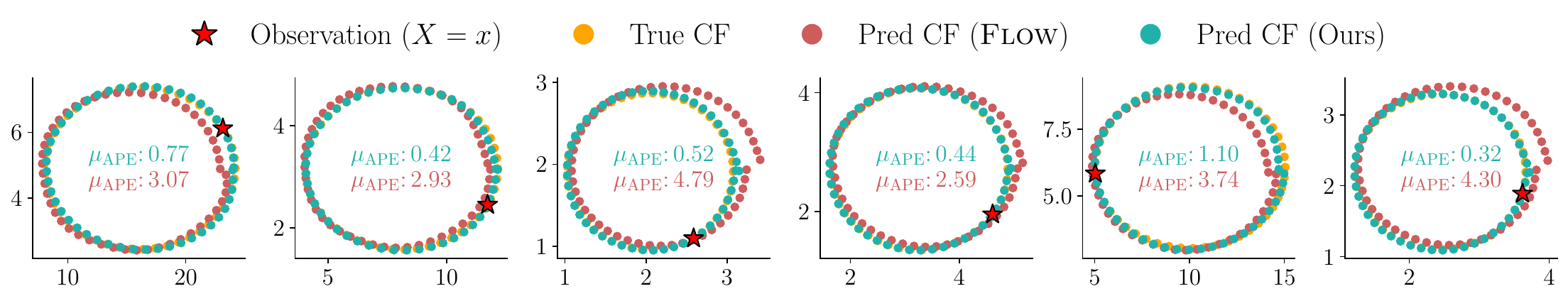}
    \caption{Inferred counterfactual (CF) ellipses using our (Markovian) OT coupling flow vs. a continuous-time flow~\parencite{lipman2023flow}. We note that a standard continuous-time flow is equivalent to a diffusional model~\parencite{gao2025diffusion}; thus, this baseline is also representative of \textcite{sanchez2021diffusion}'s diffusion-based counterfactual inference model. Not cherry-picked.}
    \label{appfig:ellipse3}
\end{figure}
\newpage
\subsection{Ablation Studies: Priors \& Assumption Violations}
\label{app:ellipse_ablation}
\begin{table}[!h]
    \centering
    \caption{Ablation of complex data-generating processes (DGP) on CF error $\mu_{\text{APE}}$ (\%) $\downarrow$.}
    \vspace{2pt}
    \label{tab:ellipse_abla}
    \begin{tabular}{lcccc}
      \toprule
      & \multicolumn{4}{c}{\textbf{Counterfactual Ellipse}} \\[3pt]
    \textsc{DGP} $P_U$
    & \textsc{EBM}                                                     & \textsc{Flow}                         & \textsc{OT-EBM}                                  & \textsc{OT-Flow}                                          \\
      \midrule
      Original          & 2.32{\scriptsize$\pm.01$}                                        & 2.30{\scriptsize$\pm.02$}             & 0.93{\scriptsize$\pm.02$} &
        0.76{\scriptsize$\pm.01$}                       \\
           Bimodal & 5.11{\scriptsize$\pm.03$}                                        & 5.01{\scriptsize$\pm.01$}             & 1.73{\scriptsize$\pm.09$} & 1.49{\scriptsize$\pm.06$}          \\
           Multimodal & 7.94{\scriptsize$\pm.26$}                                        & 7.69{\scriptsize$\pm.20$}             & 2.29{\scriptsize$\pm.02$} & 2.03{\scriptsize$\pm.03$}          \\
      \bottomrule
    \end{tabular}
\end{table}
\paragraph{More Complex $P_U$.} Intuitively, a more complex data-generating process is expected to be harder to model. To test this hypothesis, we created new datasets with multimodal $P_U$. We use an indicator $s_i \sim \operatorname{Bernoulli}(0.5)$ to induce a non-linear, bimodal shift into $\mu_i \in \{-2, 2\}$ into $U$'s mechanism. We also use a categorical shift $\mu_i \in \{-4, -2, 2, 4\}$, with respective shift probabilities $(0.3, 0.2, 0.2, 0.3)$, to generate another multimodal setting. As reported in Table~\ref{tab:ellipse_abla}, we confirm that more complex $P_U$'s can affect generation, but importantly, the relative rank of our models remains consistent. Naturally, we expect larger models to reduce performance gaps for complicated $P_U$'s.
\begin{table}[!h]
    \centering
    \caption{Ablation of monotonicity violation.}
    \vspace{2pt}
    \label{tab:ellipse_abla3}
    \begin{tabular}{lcccc}
      \toprule
      & \multicolumn{4}{c}{\textbf{Counterfactual Ellipse}} \\[3pt]
    \textsc{Model}
    & {Monotone}                                                     & \textsc{Composition} $\downarrow$                         & \textsc{Reversibility} $\downarrow$                                  & $\mu_{\text{APE}}$ (\%) $\downarrow$                     \\
      \midrule
      \textcite{nasr2023counterfactual}          & N                                       & -             & - &
        607{\scriptsize$\pm\text{N/A}$}                       \\
           \textsc{EBM} & N                                          & 
           27.86{\scriptsize$\pm.369$}             & 39.15{\scriptsize$\pm.349$} & 2.319{\scriptsize$\pm.006$}          \\
           \textsc{Flow} & N                                        & 27.99{\scriptsize$\pm.096$}             & 39.39{\scriptsize$\pm.229$} & 2.295{\scriptsize$\pm.020$}          \\
            \textsc{OT-EBM} & Y                                        & 0.659{\scriptsize$\pm.020$}             & 1.056{\scriptsize$\pm.072$} & 0.925{\scriptsize$\pm.019$}          \\
            \textsc{OT-Flow} & Y                                        & 0.618{\scriptsize$\pm.046$}             & 1.016{\scriptsize$\pm.090$} & 0.763{\scriptsize$\pm.014$}          \\
      \bottomrule
    \end{tabular}
\end{table}
\paragraph{Monotonicity Violation.} We ran a new set of experiments to evaluate different methods under a monotonicity violation. In addition to measuring counterfactual error, we inspect \textit{composition} and \textit{reversibility} counterfactual soundness axioms, in order to reveal possible performance disparities. For evaluation, we use 50/250 steps for ODE/SDE solving, and 20 cycles for composition and reversibility measurements. As reported in Table~\ref{tab:ellipse_abla3}, models which are not `vector-monotone' exhibit substantially inferior counterfactual soundness. The results confirm the intuition that straighter paths lead to improved reversal capabilities. That said, whether this generalises to much higher dimensions is entirely dependent on the quality of the dynamic OT approximation being used.
\begin{table}[!h]
    \centering
    \caption{Ablation of bijectivity violation.}
    \vspace{2pt}
    \label{tab:ellipse_abla2}
    \begin{tabular}{lcccc}
      \toprule
      & \multicolumn{4}{c}{\textbf{Counterfactual Ellipse}} \\[3pt]
    \textsc{Model}
    & {Bijective}                                                     & \textsc{Composition} $\downarrow$                         & \textsc{Reversibility} $\downarrow$                                  & $\mu_{\text{APE}}$ (\%) $\downarrow$                     \\
        \midrule
           SDE Abduction & N                                        & 2.950{\scriptsize$\pm.013$}             & 4.173{\scriptsize$\pm.019$} & 3.064{\scriptsize$\pm.011$}          \\
            SDE Prediction & N                                          & 
           	3.006{\scriptsize$\pm.015$}             & 4.276{\scriptsize$\pm.019$} & 3.126{\scriptsize$\pm.007$}          \\
            CFG ($w=1.05$) & N                                        & 0.032{\scriptsize$\pm.004$}             & 0.487{\scriptsize$\pm.010$} & 2.174{\scriptsize$\pm.070$}          \\
            CFG ($w=1.15$) & N                                        & 	0.032{\scriptsize$\pm.004$}             & 1.499{\scriptsize$\pm.013$} & 6.081{\scriptsize$\pm.069$}          \\
            CFG ($w=1.25$) & N                                        & 	0.032{\scriptsize$\pm.004$}             & 2.598{\scriptsize$\pm.024$} & 9.809{\scriptsize$\pm.078$}          \\
            \textsc{OT-Flow} & Y                                        & 0.032{\scriptsize$\pm.004$}             & 0.056{\scriptsize$\pm.009$} & 0.763{\scriptsize$\pm.014$}          \\
      \bottomrule
    \end{tabular}
\end{table}
\paragraph{Bijectivity Violation.} In this experiment, we use three strategies to stress-test bijectivity violations: (i) stochastic \textit{abduction} by solving an SDE instead of an ODE; (ii) stochastic \textit{prediction} by solving an SDE; (iii) classifier-free guidance (CFG)~\parencite{ho2022classifier}. One cycle was used for measuring composition and reversibility. As reported in Table~\ref{tab:ellipse_abla2}, bijectivity violations affect reversibility and composition substantially; this is likely due to the ground-truth mechanism being bijective.
\begin{table}[!h]
    \centering
    \caption{Ablation of Markovianity violation. The \textsc{Naive} OT coupling baseline corresponds to Batch-OT flow matching~\parencite{pooladian2023multisample,tong2024improving}.}
    \vspace{2pt}
    \label{tab:ellipse_abla4}
    \begin{tabular}{lcccc}
      \toprule
      & \multicolumn{4}{c}{\textbf{Counterfactual Ellipse}} \\[3pt]
    OT Coupling
    & {Markovian}                                                     & \textsc{Composition} $\downarrow$                         & \textsc{Reversibility} $\downarrow$                                  & $\mu_{\text{APE}}$ (\%) $\downarrow$                     \\
        \midrule
           \textsc{Naive} & N                                        & 0.063{\scriptsize$\pm.005$}             & 0.346{\scriptsize$\pm.012$} & 47.01{\scriptsize$\pm.016$}          \\
            Ours & Y                                          & 
           	0.032{\scriptsize$\pm.004$}             & 0.056{\scriptsize$\pm.009$} & 0.763{\scriptsize$\pm.015$}          \\
      \bottomrule
    \end{tabular}
\end{table}
\paragraph{Markovianity Violation.} In this experiment, we demonstrate the importance of using the correct model specification that reflects our causal assumptions. As explained in the main text, the naive Batch-OT coupling violates the Markovianity assumption, so it should not be used for counterfactual inference as if the assumption were to hold.
As reported in Table~\ref{tab:ellipse_abla4}, we observe that composition and reversibility are reasonable for the \textsc{Naive} OT coupling version, as the map remains bijective. However, the counterfactual error is substantially higher for the \textsc{Naive} version compared to our proposed Markovian OT coupling, since the Markovianity violation leads to inconsistent counterfactuals.
\section{MIMIC Chest X-ray: Extra Results}
\label{app:extra_results2}
%
\subsection{Dataset Details}
We reproduce the dataset setup used by our baselines, i.e.~\textcite{ribeiro2023high} and~\textcite{xia2024mitigating}. The chest X-ray images were resized to $192\times192$ resolution. From the associated metadata, we focused on four key attributes: \textsc{Sex} ($S$), \textsc{Race} ($R$), \textsc{Age} ($A$) and \textsc{Disease} ($D$). The assumed causal graph follows prior work~\parencite{ribeiro2023high} in that $A \to D$ and $\mathbf{PA} = \{S, R, A, D\}$ are the causal parents of the chest X-ray image $X$. We also focus our scope to only pleural effusion for the disease, to keep comparisons fair. As a result, the final dataset includes only individuals who were either diagnoses with pleural effusion (diseased) or reported as having no findings (i.e. healthy). We then divided the dataset into 62,336 subjects for training, 9,968 for validation and 30,535 for testing, again following the exact same protocol as both~\textcite{ribeiro2023high} and~\textcite{xia2024mitigating}.

\subsection{Architecture and Experimental Setup}
The causal mechanisms for all variables $\mathbf{PA} = \{S, R, A, D\}$ and $X$ were learned from observed data. For $\mathbf{PA}$, standard discrete-time flow-based modelling was used following~\textcite{pawlowski2020deep,ribeiro2023high}. For $X$, we use a scaled-up version of our most successful model analysed in the counterfactual ellipse experiments, namely $\textsc{OT-Flow}$, which includes our specialised family of batch TO couplings to satisfy the Markovianity requirement (cf. Section~\ref{sec:cf_maps_prescription}). With that said, we recognise that using large batch sizes is best for accurate batch OT approximations in high dimensions~\parencite{klein2025fitting}. Given current resource constraints, we encourage future work to explore more accurate and efficient OT approximations at scale, possibly along the lines of~\textcite{mousavi2025flow}.

All models are trained in continuous time using our Markovian Batch-OT coupling. Since the \textsc{Age} ($A$) attribute is continuous, we use an age binning strategy to ensure we sample from the conditional distribution $P^{\mathfrak{C}}_{X\mid\mathbf{PA}}$ as explained in Section~\ref{sec:cf_maps_prescription}. The effect is that each sample in each batch has the same parents, except for age, which may deviate slightly depending on the bin size. Table~\ref{apptab:chest2} reports an ablation study on the age bin size; we find negligible performance differences across bin sizes.

To parameterise the (conditional) vector field for all our models and ablation study variants, we used the same streamlined version of the UNet proposed by~\textcite{dhariwal2021diffusion}. To condition the model on the parent variables, we simply learn a separate embedding vector for each parent and sum them all together. This joint parent embedding is then combined with the time embedding to condition each block in the UNet, following~\textcite{dhariwal2021diffusion}. The hyperparameters are given in Table~\ref{apptab:unet}. The model has just 22M trainable parameters. For training, we used the AdamW optimiser with a learning rate of $10^{-4}$, weight decay of $10^{-4}$, $\beta_1 = 0.9$, $\beta_1 = 0.999$, $\epsilon = 10^{-8}$, and a batch size of 64. We use a linear learning rate warmup of 2000 steps. No extensive hyperparameter sweep was necessary to obtain sufficiently good performance. We trained our models for a maximum of 300 epochs. Since our evaluation is focused on the counterfactual soundness axioms, which are quite costly to evaluate, we instead performed model selection using the validation set loss achieved by an exponential moving average (EMA) of the model parameters (EMA rate of 0.9999). Further improvements in performance are expected from using standard sample quality metrics for model selection. We conducted three runs with different random seeds. All our models were trained on L40 GPUs, with the full model fitting entirely on a single GPU.
\subsection{Additional Comparisons and Ablation Studies}
Tables~\ref{apptab:chest} and~\ref{tab:my_label} report additional comparative results against the baselines~\parencite{ribeiro2023high,xia2024mitigating}. We again find that our method is superior for all three measured counterfactual soundness axioms, often by large margins, without requiring any costly counterfactual fine-tuning or classifier(-free) guidance strategies. We also conduct ablation studies on our proposed modifications, and the results are reported in Tables~\ref{tab:chest_ot_ablation}, \ref{tab:chest_ot_ablation2} and \ref{apptab:chest2}. In summary, we observe significant improvements from using our OT coupling compared to the $\textsc{Naive}$ non-Markovian approach. As outlined in the main text, we find different performance trade-offs between \textsc{OT-Flow} and \textsc{Flow} for different interventions, suggesting either Markovianity violation or a subpar OT approximation (due to small batch size of 64). We expect further improvements from increasing model capacity and batch size significantly~\parencite{klein2025fitting}, to better approximate the global OT map~\parencite{pooladian2023multisample}.
\begin{table}[!ht]
    \centering
    \footnotesize
    \caption{UNet hyperparameters.}
    \label{apptab:unet}
    \vspace{5pt}
    \begin{tabular}{lc}
        \toprule
        \multicolumn{2}{c}{\textbf{MIMIC Chest X-ray}}
        \\[4pt]
        \midrule
        image shape           & (1, 192, 192)
        \\
        model channels        & 32
        \\
        channel mult          & [1, 2, 4, 6, 8]
        \\
        residual blocks       & 2
        \\
        attention resolutions & [-1]
        \\
        heads                 & 1
        \\
        head channels         & 64
        \\
        dropout               & 0.1
        \\
        \bottomrule
    \end{tabular}
\end{table}
\begin{table}[!ht]
    \centering
    \footnotesize
    \caption{Measuring counterfactual \textit{effectiveness} on MIMIC Chest X-ray. $|\Delta_{\text{AUC}}|$ denotes the absolute difference in ROCAUC of inferred counterfactuals relative to the observed data baseline. For each variable, our results (blue shade) appear on the right, and baseline results are on the left. Unlike the baselines, which rely on a costly fine-tuning stage with classifiers/regressors for each variable, our approach does not require any form of classifier or classifier-free guidance to perform well.}
    \vspace{5pt}
    \begin{tabular}{lcc|cc|cc|cc}
        \toprule
                              & \multicolumn{8}{c}{\textbf{MIMIC Chest X-ray} ($192{\times}192$)}
        \\[4pt]
                              & \multicolumn{2}{c}{\textsc{Sex} $(S)$}                                 & \multicolumn{2}{c}{\textsc{Race} $(R)$}                       & \multicolumn{2}{c}{\textsc{Age} $(A)$}                      & \multicolumn{2}{c}{\textsc{Disease} $(D)$}
        \\[2pt]
        \textsc{Baseline}     & \multicolumn{2}{c}{AUC (\%) $\uparrow$}                                & \multicolumn{2}{c}{AUC (\%) $\uparrow$}                       & \multicolumn{2}{c}{MAE (yr) $\downarrow$}                   & \multicolumn{2}{c}{AUC (\%) $\uparrow$}
        \\
        \midrule
        Observed data         & \multicolumn{2}{c|}{99.63}                                             & \multicolumn{2}{c|}{95.34}
                              & \multicolumn{2}{c|}{6.197}
                              & \multicolumn{2}{c}{94.41}
        \\
        \midrule
                              & \multicolumn{8}{c}{(w/o CF fine-tuning~\textcite{ribeiro2023high})}
        \\[4pt]
        \textsc{Intervention} & \multicolumn{2}{c}{$|\Delta_{\text{AUC}}|$ (\%) $\downarrow$}          & \multicolumn{2}{c}{$|\Delta_{\text{AUC}}|$ (\%) $\downarrow$} & \multicolumn{2}{c}{$\Delta_{\text{MAE}}$ (yr) $\downarrow$} & \multicolumn{2}{c}{$|\Delta_{\text{AUC}}|$ (\%) $\downarrow$}
        \\
        \midrule
        $\mathit{do}(S=s)$    & 7.430                                                                  & \textbf{0.173}{\tiny$\pm$.02}\cellcolor{darkblue!10}          & 20.84                                                       & \textbf{0.583}{\tiny$\pm$.15}\cellcolor{darkblue!10}          & 0.486 & \textbf{0.333}{\tiny$\pm$.06}\cellcolor{darkblue!10}
                              & 1.310                                                                  & \textbf{0.023}{\tiny$\pm$.05}\cellcolor{darkblue!10}
        \\
        $\mathit{do}(R=r)$
                              & \textbf{0.130}                                                         & 0.180{\tiny$\pm$.01}\cellcolor{darkblue!10}
                              & 36.94                                                                  & \textbf{0.050}{\tiny$\pm$.07}\cellcolor{darkblue!10}
                              & \textbf{0.229}                                                         & 0.394{\tiny$\pm$.10}\cellcolor{darkblue!10}
                              & 7.810                                                                  & \textbf{0.310}{\tiny$\pm$.19}\cellcolor{darkblue!10}
        \\
        $\mathit{do}(A=a)$
                              & \textbf{0.130}                                                         & 0.187{\tiny$\pm$.03}\cellcolor{darkblue!10}
                              & 20.44                                                                  & \textbf{1.197}{\tiny$\pm$.15}\cellcolor{darkblue!10}
                              & 3.872                                                                  & \textbf{0.836}{\tiny$\pm$.08}\cellcolor{darkblue!10}
                              & 6.110                                                                  & \textbf{0.347}{\tiny$\pm$.09}\cellcolor{darkblue!10}
        \\
        $\mathit{do}(D=d)$
                              & \textbf{0.030}                                                         & 0.067{\tiny$\pm$.00}\cellcolor{darkblue!10}
                              & 20.14                                                                  & \textbf{0.627}{\tiny$\pm$.18}\cellcolor{darkblue!10}
                              & 0.560                                                                  & \textbf{0.435}{\tiny$\pm$.05}\cellcolor{darkblue!10}
                              & 22.01                                                                  & \textbf{2.280}{\tiny$\pm$.37}\cellcolor{darkblue!10}
        \\
        $\mathit{do}(\texttt{rand})$
                              & 1.330                                                                  & \textbf{0.150}{\tiny$\pm$.02}\cellcolor{darkblue!10}
                              & 24.14                                                                  & \textbf{0.730}{\tiny$\pm$.22}\cellcolor{darkblue!10}
                              & 1.166                                                                  & \textbf{0.510}{\tiny$\pm$.05}\cellcolor{darkblue!10}
                              & 7.010                                                                  & \textbf{0.640}{\tiny$\pm$.17}\cellcolor{darkblue!10}
        \\
        \midrule
                              & \multicolumn{8}{c}{(Soft CF fine-tuning~\textcite{xia2024mitigating})}
        \\[4pt]
        \textsc{Intervention} & \multicolumn{2}{c}{$|\Delta_{\text{AUC}}|$ (\%) $\downarrow$}          & \multicolumn{2}{c}{$|\Delta_{\text{AUC}}|$ (\%) $\downarrow$} & \multicolumn{2}{c}{$\Delta_{\text{MAE}}$ (yr) $\downarrow$} & \multicolumn{2}{c}{$|\Delta_{\text{AUC}}|$ (\%) $\downarrow$}
        \\
        \midrule
        $\mathit{do}(S=s)$    & \textbf{0.070}                                                         & 0.173{\tiny$\pm$.02}\cellcolor{darkblue!10}                   & 2.040                                                       & \textbf{0.583}{\tiny$\pm$.15}\cellcolor{darkblue!10}          & -     & {0.333}{\tiny$\pm$.06}\cellcolor{darkblue!10}
                              & 0.090                                                                  & \textbf{0.023}{\tiny$\pm$.05}\cellcolor{darkblue!10}
        \\
        $\mathit{do}(R=r)$
                              & \textbf{0.130}                                                         & 0.180{\tiny$\pm$.01}\cellcolor{darkblue!10}
                              & 3.360                                                                  & \textbf{0.050}{\tiny$\pm$.07}\cellcolor{darkblue!10}
                              & -                                                                      & 0.394{\tiny$\pm$.10}\cellcolor{darkblue!10}
                              & \textbf{0.110}                                                         & {0.310}{\tiny$\pm$.19}\cellcolor{darkblue!10}
        \\
        $\mathit{do}(D=d)$
                              & \textbf{0.030}                                                         & 0.067{\tiny$\pm$.00}\cellcolor{darkblue!10}
                              & 1.540                                                                  & \textbf{0.627}{\tiny$\pm$.18}\cellcolor{darkblue!10}
                              & -                                                                      & {0.435}{\tiny$\pm$.05}\cellcolor{darkblue!10}
                              & 3.690                                                                  & \textbf{2.280}{\tiny$\pm$.37}\cellcolor{darkblue!10}
        \\
        \bottomrule
    \end{tabular}
    \label{apptab:chest}
\end{table}
\begin{table}[!ht]
    \centering
    \footnotesize
    \caption{Measuring counterfactual \textit{reversibility} and \textit{composition} properties~\parencite{monteiro2023measuring}. The baseline results are reproduced exactly according to~\textcite{ribeiro2023high}. We use an Euler ODE solver or dropi5 (1e-5 tol). A single reverse cycle was used for computing \textit{reversibility}. Three randomly seeded subsets of 1000 test samples were used, and the results were averaged. Similar initial performance was observed for \textsc{Flow}, thus improving OT further is expected to boost performance.
    }
    \label{tab:my_label}
    \vspace{5pt}
    \begin{tabular}{lccc}
        \toprule
                                                 & \multicolumn{3}{c}{\textbf{MIMIC Chest X-ray} ($192{\times}192$)}
        \\[4pt]
        \textsc{Method}                          & NFE                                                               & \textsc{Composition} MAE (px) $\downarrow$ & \textsc{Reversibility} MAE (px) $\downarrow$
        \\
        \midrule
        HVAE~\parencite{ribeiro2023high}         & N/A                                                               & 3.09543{ $\pm$ 0.0536}                     & 2.89816{ $\pm$ 0.4153}
        \\
        \midrule
        \multirow{3}{*}{\textsc{OT-Flow} (Ours)} & 50                                                                & 2.70379{ $\pm$ 0.1151}                     & 20.6228{ $\pm$ 0.6492}
        \\
                                                 & 250                                                               & {0.55289}{ $\pm$ 0.0280}                   & {1.22693}{ $\pm$ 0.1630}
        \\
                                                 & dopri5                                                            & \textbf{0.18352}{ $\pm$ 0.0276}            & \textbf{0.49485}{ $\pm$ 0.1549}
        \\
        \bottomrule
    \end{tabular}
\end{table}
\begin{table}[!ht]
    \centering
    \footnotesize
    \caption{Ablation study of our conditional OT coupling on counterfactual \textit{effectiveness}. The baseline is the standard OT coupling (\textsc{Naive}) described in Section~\ref{sec:cf_maps_prescription}, which violates the Markovian requirement. The results show our approach boosts counterfactual effectiveness, especially for disease and race interventions. The most notable improvements are highlighted using shaded blue areas.}
    \label{tab:chest_ot_ablation}
    \vspace{5pt}
    \begin{tabular}{lcccccc}
        \toprule
                                             & \multicolumn{5}{c}{\textbf{MIMIC Chest X-ray} ($192{\times}192$)}
        \\[4pt]
                                             &                                                                   & \textsc{Sex} $(S)$     & \textsc{Race} $(R)$                             & \textsc{Age} $(A)$    & \textsc{Disease} $(D)$ \\[2pt]
        \textsc{Intervention}                & OT Coupling                                                       & AUC (\%) $\uparrow$    & AUC (\%) $\uparrow$                             & MAE (yr) $\downarrow$ & AUC (\%) $\uparrow$    \\
        \midrule
        N/A (Observed data)                  & N/A                                                               & 99.63                  & 95.34
                                             & 6.197
                                             & 94.41
        \\
        \midrule
        \multirow{2}{*}{$\mathit{do}(S= s)$}
                                             & \textsc{Naive}                                                    & 99.42
                                             & 93.85
                                             & 6.602
                                             & 94.47
        \\
                                             & Ours                                                              & {99.45}{\tiny$\pm$.02} & 94.76{\tiny$\pm$.15}
                                             & {6.529}{\tiny$\pm$.06}
                                             & 94.39{\tiny$\pm$.05}
        \\
        \midrule
        \multirow{2}{*}{$\mathit{do}(R= r)$} & \textsc{Naive}                                                    & {99.56}                & 88.03
        \cellcolor{darkblue!10}
                                             & {6.482}
                                             & {94.24}
        \\
                                             & Ours                                                              & 99.45{\tiny$\pm$.06}   & {95.29}{\tiny$\pm$.73}
        \cellcolor{darkblue!10}
                                             & 6.590{\tiny$\pm$.10}
                                             & 94.10{\tiny$\pm$.19}
        \\
        \midrule
        \multirow{2}{*}{$\mathit{do}(A= a)$}
                                             & \textsc{Naive}                                                    & {99.53}                & 93.95
                                             & 7.723
                                             & 93.88
        \\
                                             & Ours                                                              & 99.44{\tiny$\pm$.03}   & {94.14}{\tiny$\pm$.16}
                                             & {7.032}{\tiny$\pm$.07}
                                             & {94.76}{\tiny$\pm$.09}
        \\
        \midrule
        \multirow{2}{*}{$\mathit{do}(D= d)$}
                                             & \textsc{Naive}                                                    & {99.58}                & 93.66
                                             & 6.630
                                             & 88.35 \cellcolor{darkblue!10}
        \\
                                             & Ours                                                              & 99.53{\tiny$\pm$.01}   & {94.71}{\tiny$\pm$.19}
                                             & 6.632{\tiny$\pm$.05}
                                             & {92.13}{\tiny$\pm$.37} \cellcolor{darkblue!10}
        \\
        \midrule
        \multirow{2}{*}{$\mathit{do}(\texttt{rand})$}
                                             & \textsc{Naive}                                                    & {99.55}                & 92.75\cellcolor{darkblue!10}
                                             & 6.844
                                             & 92.62\cellcolor{darkblue!10}
        \\
                                             & Ours                                                              & 99.48{\tiny$\pm$.02}   & {94.61}{\tiny$\pm$.23}  \cellcolor{darkblue!10}
                                             & {6.706}{\tiny$\pm$.04}
                                             & {93.77}{\tiny$\pm$.17} \cellcolor{darkblue!10}
        \\
        \bottomrule
    \end{tabular}
\end{table}
\begin{table}[!ht]
    \centering
    \footnotesize
    \caption{Ablation study of Batch-OT on counterfactual \textit{effectiveness}. We observe different performance trade-offs for different models; e.g., the \textsc{OT-Flow} performs better for race interventions while performing worse than \textsc{Flow} for disease, suggesting possible non-Markovian interactions. We note that a batch size of 64 was used for training due to our current GPU memory constraints. Larger performance differences are expected with larger batch sizes~\parencite{klein2025fitting}, or by adopting more accurate and scalable OT approximations~\parencite{mousavi2025flow}.
    }
    \label{tab:chest_ot_ablation2}
    \vspace{5pt}
    \begin{tabular}{lcccccc}
        \toprule
                                             & \multicolumn{5}{c}{\textbf{MIMIC Chest X-ray} ($192{\times}192$)}
        \\[4pt]
                                             &                                                                   & \textsc{Sex} $(S)$   & \textsc{Race} $(R)$                         & \textsc{Age} $(A)$    & \textsc{Disease} $(D)$ \\[2pt]
        \textsc{Intervention}                & OT                                                                & AUC (\%) $\uparrow$  & AUC (\%) $\uparrow$                         & MAE (yr) $\downarrow$ & AUC (\%) $\uparrow$    \\
        \midrule
        N/A (Observed data)                  & N/A                                                               & 99.63                & 95.34
                                             & 6.197
                                             & 94.41
        \\
        \midrule
        \multirow{2}{*}{$\mathit{do}(S= s)$}
                                             & N                                                                 & 99.58
                                             & 94.64
                                             & 6.495
                                             & 94.72
        \\
                                             & Y                                                                 & 99.45{\tiny$\pm$.02} & 94.76{\tiny$\pm$.15}
                                             & 6.529{\tiny$\pm$.06}
                                             & 94.39{\tiny$\pm$.05}
        \\
        \midrule
        \multirow{2}{*}{$\mathit{do}(R= r)$} & N                                                                 & 99.56                & 93.09\cellcolor{darkblue!10}
                                             & 6.460
                                             & 94.59
        \\
                                             & Y                                                                 & 99.45{\tiny$\pm$.06} & 95.29{\tiny$\pm$.73}\cellcolor{darkblue!10}
                                             & 6.590{\tiny$\pm$.10}
                                             & 94.10{\tiny$\pm$.19}
        \\
        \midrule
        \multirow{2}{*}{$\mathit{do}(A= a)$} & N                                                                 & 99.54                & 94.40
                                             & 6.855
                                             & 95.23
        \\
                                             & Y                                                                 & 99.44{\tiny$\pm$.03} & 94.14{\tiny$\pm$.16}
                                             & 7.032{\tiny$\pm$.07}
                                             & 94.76{\tiny$\pm$.09}
        \\
        \midrule
        \multirow{2}{*}{$\mathit{do}(D= d)$} & N                                                                 & 99.56                & 94.26
                                             & 6.589
                                             & 92.95
        \\
                                             & Y                                                                 & 99.53{\tiny$\pm$.01} & 94.71{\tiny$\pm$.19}
                                             & 6.632{\tiny$\pm$.05}
                                             & 92.13{\tiny$\pm$.37}
        \\
        \midrule
        \multirow{2}{*}{$\mathit{do}(\texttt{rand})$}
                                             & N                                                                 & 99.56                & 94.17
                                             & 6.596
                                             & 94.48 \cellcolor{darkblue!10}
        \\
                                             & Y                                                                 & 99.48{\tiny$\pm$.02} & 94.61{\tiny$\pm$.23}
                                             & 6.706{\tiny$\pm$.04}
                                             & 93.77{\tiny$\pm$.17} \cellcolor{darkblue!10}
        \\
        \bottomrule
    \end{tabular}
\end{table}
\begin{table}[!t]
    \centering
    \footnotesize
    \caption{Ablation study of \textsc{Age} ($A$) bin widths on counterfactual \textit{effectiveness}. We observe minimal differences in performance across different bin sizes. This is encouraging as it demonstrates our OT coupling strategy can still work well for continuous parent variables.}
    \label{apptab:chest2}
    \vspace{5pt}
    \begin{tabular}{lcccccc}
        \toprule
                                             & \multicolumn{5}{c}{\textbf{MIMIC Chest X-ray} ($192{\times}192$)}
        \\[4pt]
                                             &                                                                   & \textsc{Sex} $(S)$   & \textsc{Race} $(R)$  & \textsc{Age} $(A)$    & \textsc{Disease} $(D)$ \\[2pt]
        \textsc{Intervention}                & Bin                                                               & AUC (\%) $\uparrow$  & AUC (\%) $\uparrow$  & MAE (yr) $\downarrow$ & AUC (\%) $\uparrow$    \\
        \midrule
        N/A (Observed data)                  & N/A                                                               & 99.63                & 95.34
                                             & 6.197
                                             & 94.41
        \\
        \midrule
        \multirow{3}{*}{$\mathit{do}(S= s)$}
                                             & 2                                                                 & 99.48{\tiny$\pm$.10}
                                             & 94.49{\tiny$\pm$.34}
                                             & 6.580{\tiny$\pm$.02}
                                             & 94.37{\tiny$\pm$.21}
        \\
                                             & 3                                                                 & 99.45{\tiny$\pm$.02} & 94.76{\tiny$\pm$.15}
                                             & 6.529{\tiny$\pm$.06}
                                             & 94.39{\tiny$\pm$.05}
        \\
                                             & 5                                                                 & 99.41{\tiny$\pm$.02} & 94.85{\tiny$\pm$.16}
                                             & 6.600{\tiny$\pm$.04}
                                             & 94.44{\tiny$\pm$.05}
        \\
        \midrule
        \multirow{3}{*}{$\mathit{do}(R= r)$} & 2                                                                 & 99.50{\tiny$\pm$.05} & 94.49{\tiny$\pm$1.1}
                                             & 6.605{\tiny$\pm$.05}
                                             & 94.07{\tiny$\pm$.18}
        \\
                                             & 3                                                                 & 99.45{\tiny$\pm$.06} & 95.29{\tiny$\pm$.73}
                                             & 6.590{\tiny$\pm$.10}
                                             & 94.10{\tiny$\pm$.19}
        \\
                                             & 5                                                                 & 99.43{\tiny$\pm$.02} & 94.97{\tiny$\pm$.33}
                                             & 6.625{\tiny$\pm$.02}
                                             & 94.03{\tiny$\pm$.09}
        \\
        \midrule
        \multirow{3}{*}{$\mathit{do}(A= a)$} & 2                                                                 & 99.50{\tiny$\pm$.03} & 94.06{\tiny$\pm$.09}
                                             & 7.137{\tiny$\pm$.06}
                                             & 94.67{\tiny$\pm$.35}
        \\
                                             & 3                                                                 & 99.44{\tiny$\pm$.03} & 94.14{\tiny$\pm$.16}
                                             & 7.032{\tiny$\pm$.07}
                                             & 94.76{\tiny$\pm$.09}
        \\
                                             & 5                                                                 & 99.43{\tiny$\pm$.06} & 93.89{\tiny$\pm$.14}
                                             & 7.159{\tiny$\pm$.05}
                                             & 94.62{\tiny$\pm$.02}
        \\
        \midrule
        \multirow{3}{*}{$\mathit{do}(D= d)$} & 2                                                                 & 99.56{\tiny$\pm$.02} & 94.44{\tiny$\pm$.11}
                                             & 6.675{\tiny$\pm$.89}
                                             & 91.72{\tiny$\pm$.54}
        \\
                                             & 3                                                                 & 99.53{\tiny$\pm$.01} & 94.71{\tiny$\pm$.19}
                                             & 6.632{\tiny$\pm$.05}
                                             & 92.13{\tiny$\pm$.37}
        \\
                                             & 5                                                                 & 99.52{\tiny$\pm$.02} & 94.70{\tiny$\pm$.23}
                                             & 6.676{\tiny$\pm$.03}
                                             & 91.66{\tiny$\pm$.02}
        \\
        \midrule
        \multirow{3}{*}{$\mathit{do}(\texttt{rand})$}
                                             & 2                                                                 & 99.49{\tiny$\pm$.03} & 94.35{\tiny$\pm$.48}
                                             & 6.755{\tiny$\pm$.03}
                                             & 93.77{\tiny$\pm$.24}
        \\
                                             & 3                                                                 & 99.48{\tiny$\pm$.02} & 94.61{\tiny$\pm$.23}
                                             & 6.706{\tiny$\pm$.04}
                                             & 93.77{\tiny$\pm$.17}
        \\
                                             & 5                                                                 & 99.46{\tiny$\pm$.03} & 94.53{\tiny$\pm$.19}
                                             & 6.760{\tiny$\pm$.03}
                                             & 93.66{\tiny$\pm$.15}
        \\
        \bottomrule
    \end{tabular}
\end{table}
\clearpage
\subsection{Qualitative Results}
\begin{figure}[!ht]
    \centering
    \includegraphics[width=\textwidth]{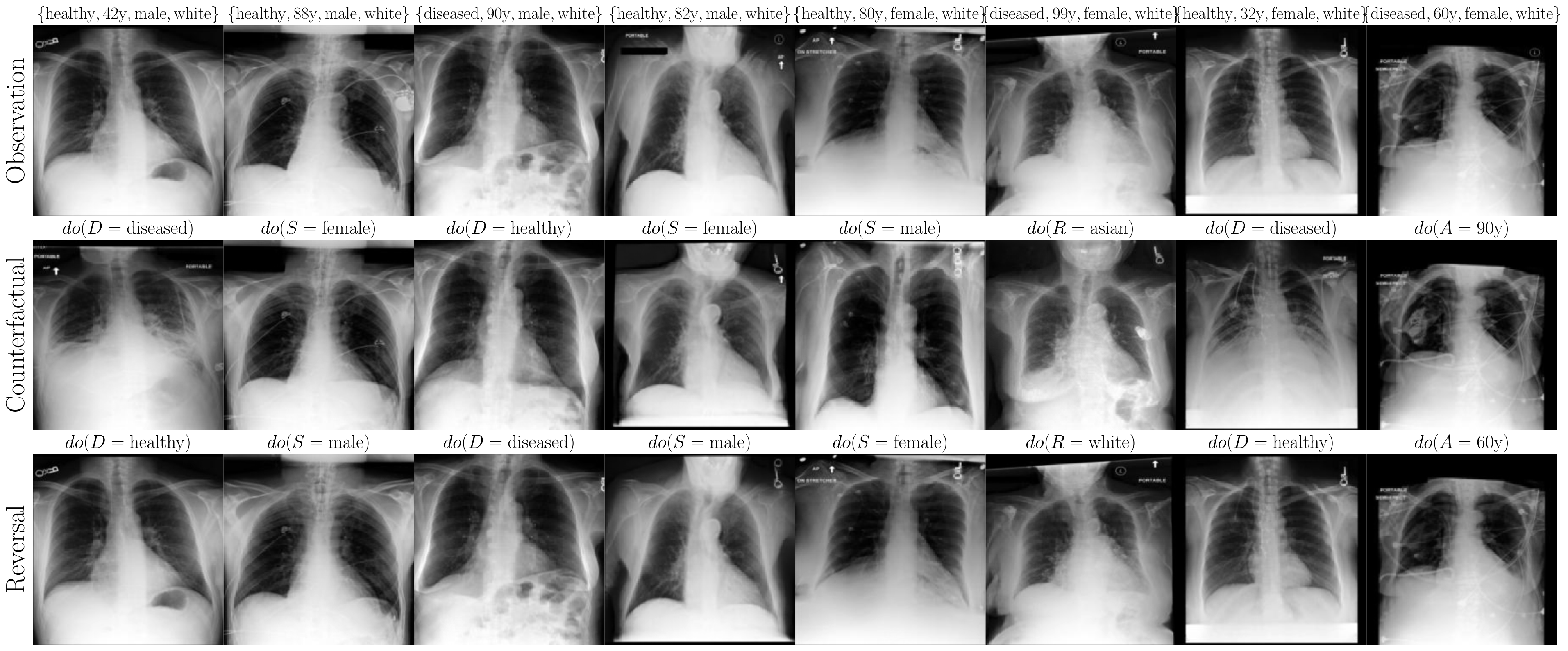}
    \\[10pt]
    \includegraphics[width=\textwidth]{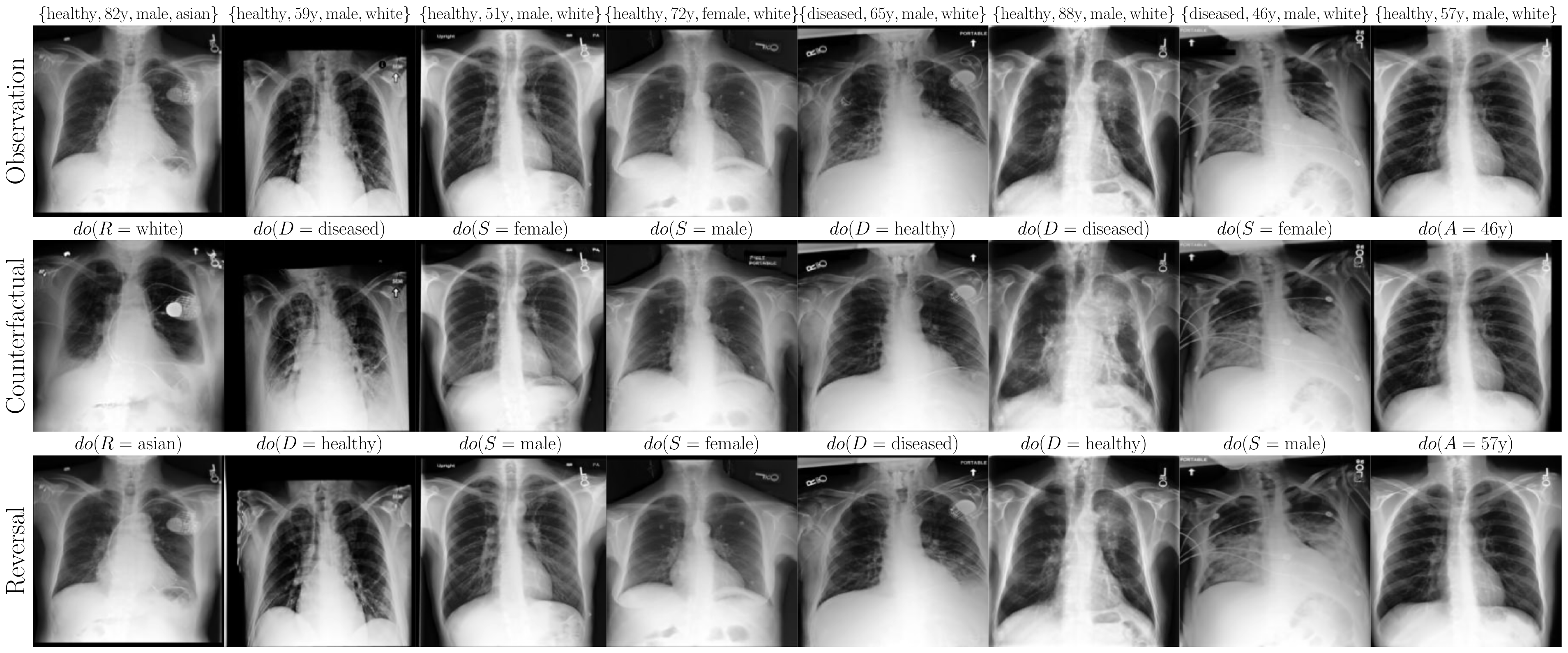}
    \\[10pt]
    \includegraphics[width=\textwidth]{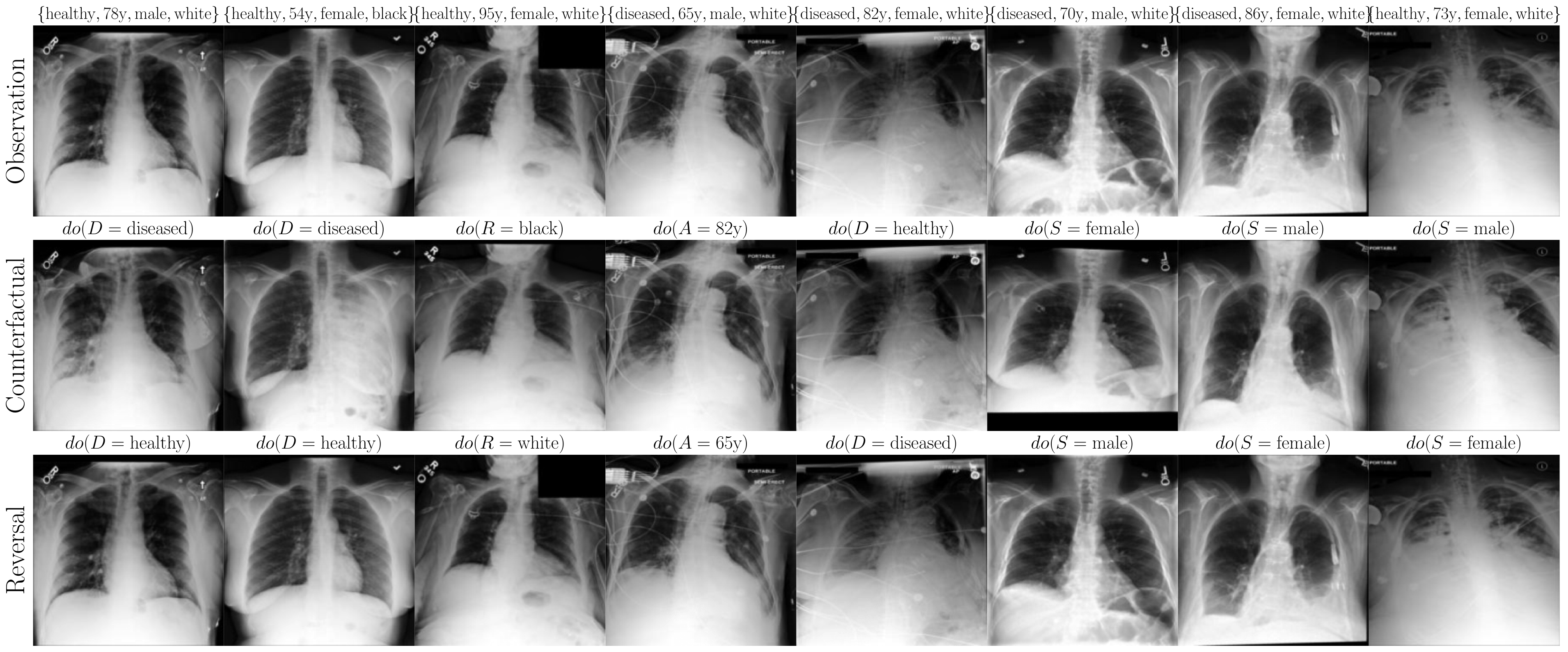}
    \caption{Qualitative counterfactual inference and \textit{reversibility} results (500 ODE Euler solver steps) using our Markovian $\textsc{OT-Flow}$ model. Despite the small model size, we observe faithful, reasonably identity-preserving interventions, as well as impressive counterfactual reversibility. Importantly, no costly counterfactual fine-tuning~\parencite{ribeiro2023high}, or classifier(-free) strategies were required.}
    \label{appfig:mimicqual1}
\end{figure}
\begin{figure}[!ht]
    \centering
    \includegraphics[width=\textwidth]{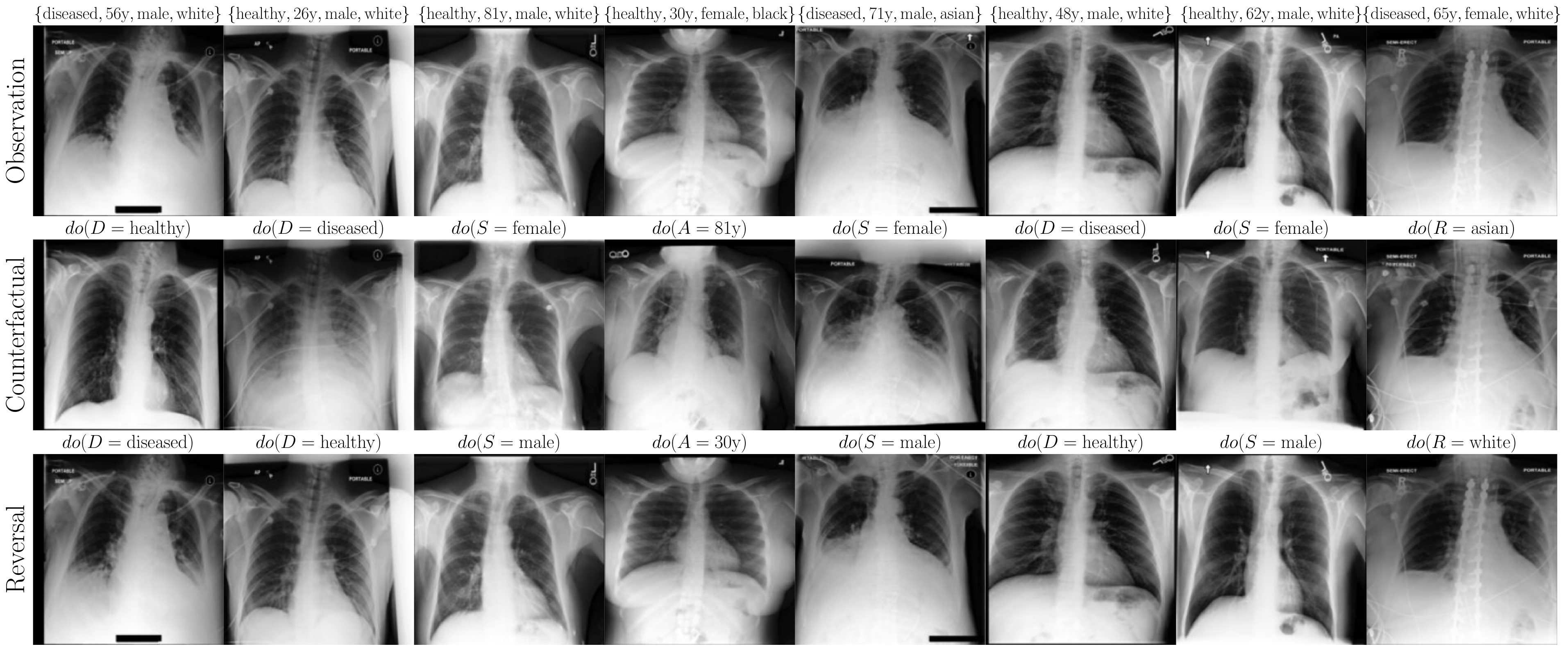}
    \\[10pt]
    \includegraphics[width=\textwidth]{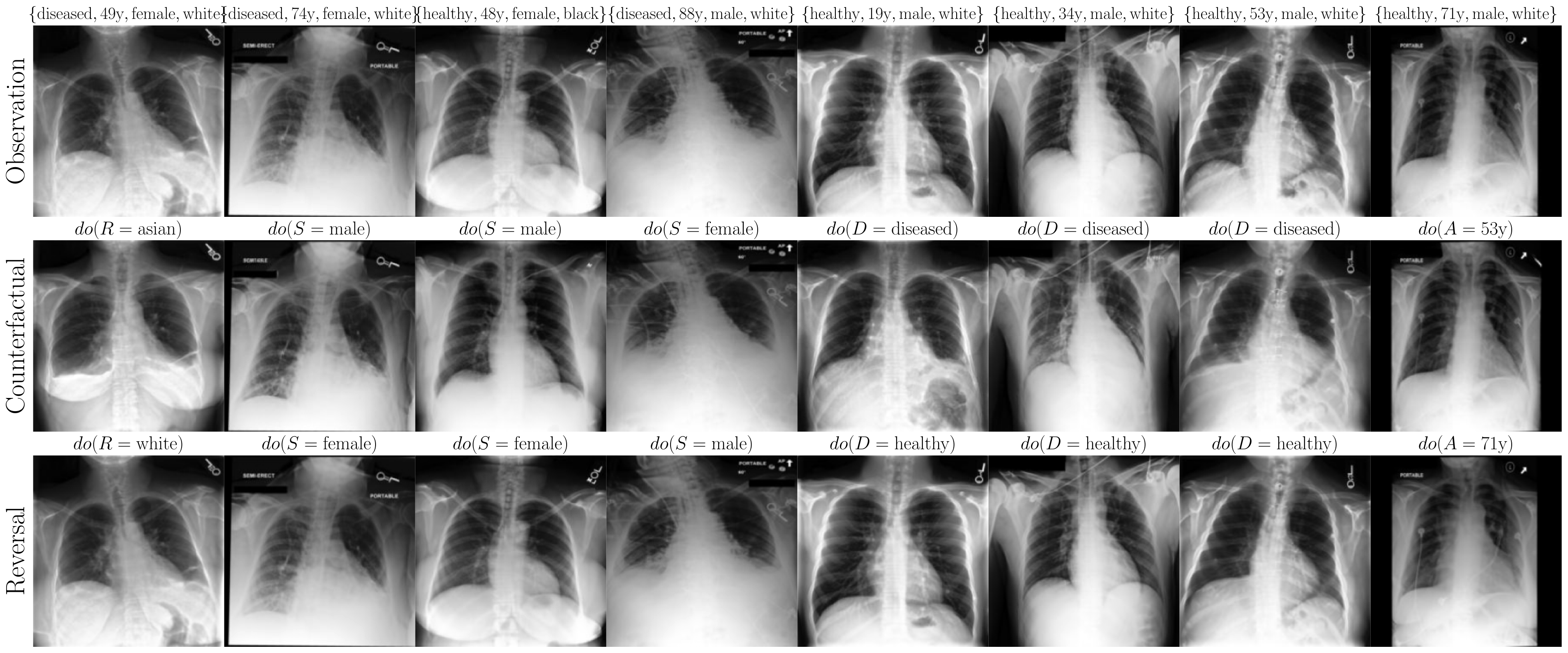}
    \\[10pt]
    \includegraphics[width=\textwidth]{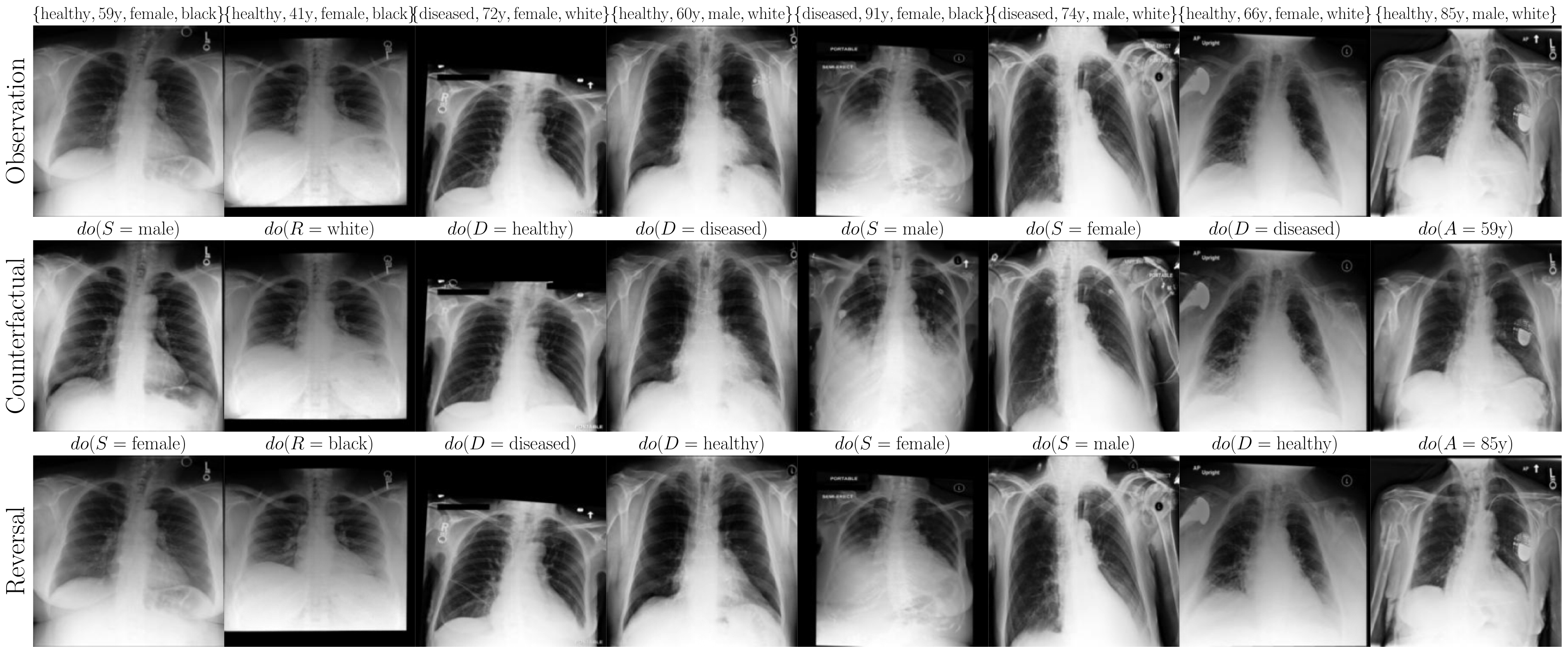}
    \caption{Qualitative counterfactual inference and \textit{reversibility} results (500 ODE Euler solver steps) using our Markovian $\textsc{OT-Flow}$ model. Despite the small model size, we observe faithful, reasonably identity-preserving interventions, as well as impressive counterfactual reversibility. Importantly, no costly counterfactual fine-tuning~\parencite{ribeiro2023high}, or classifier(-free) strategies were required.}
    \label{appfig:mimicqual4}
\end{figure}
\begin{figure}[!ht]
    \centering
    \includegraphics[width=\textwidth]{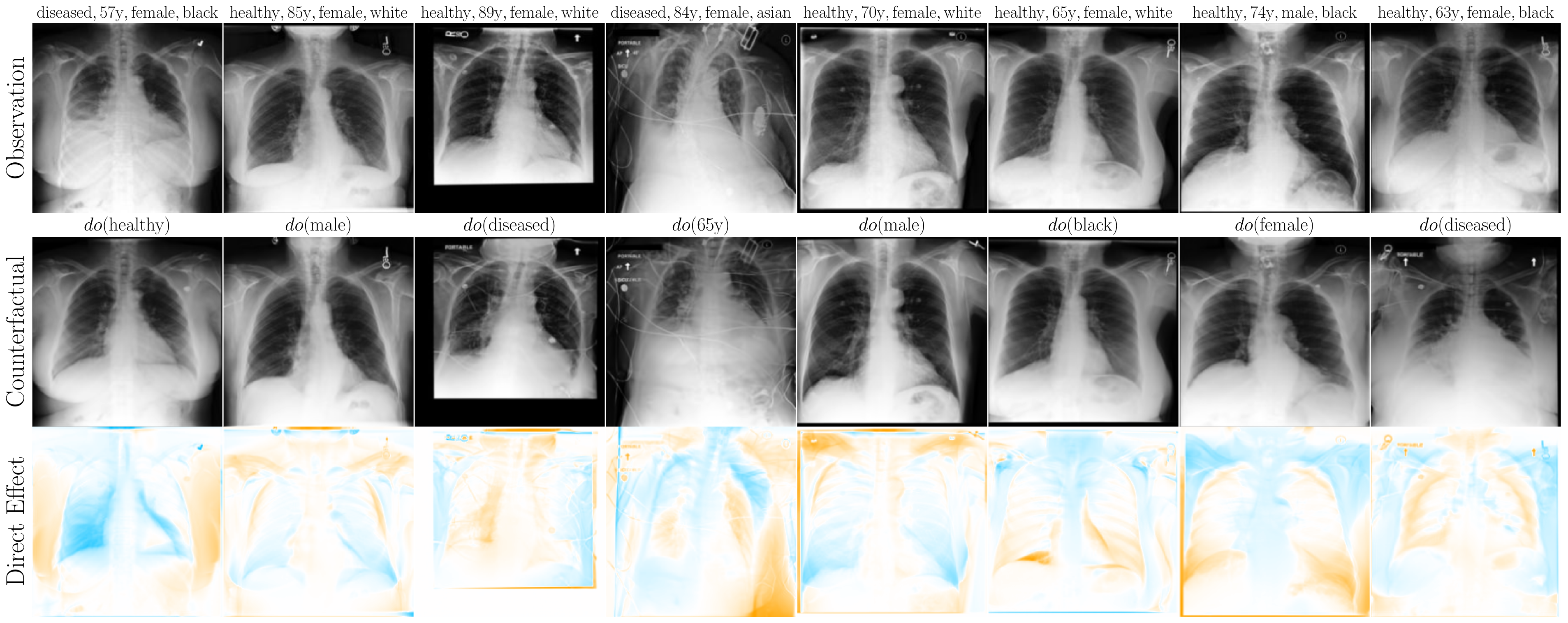}
    \\[10pt]
    \includegraphics[width=\textwidth]{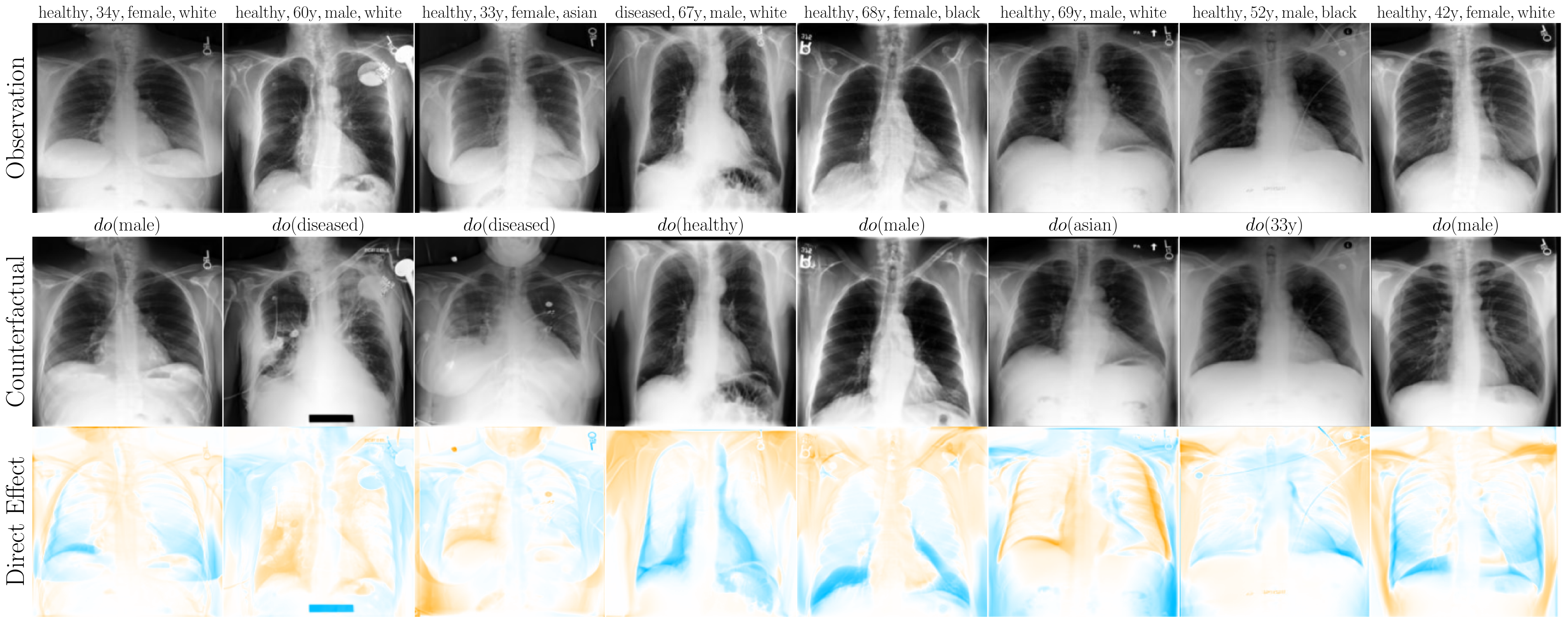}
    \\[10pt]
    \includegraphics[width=\textwidth]{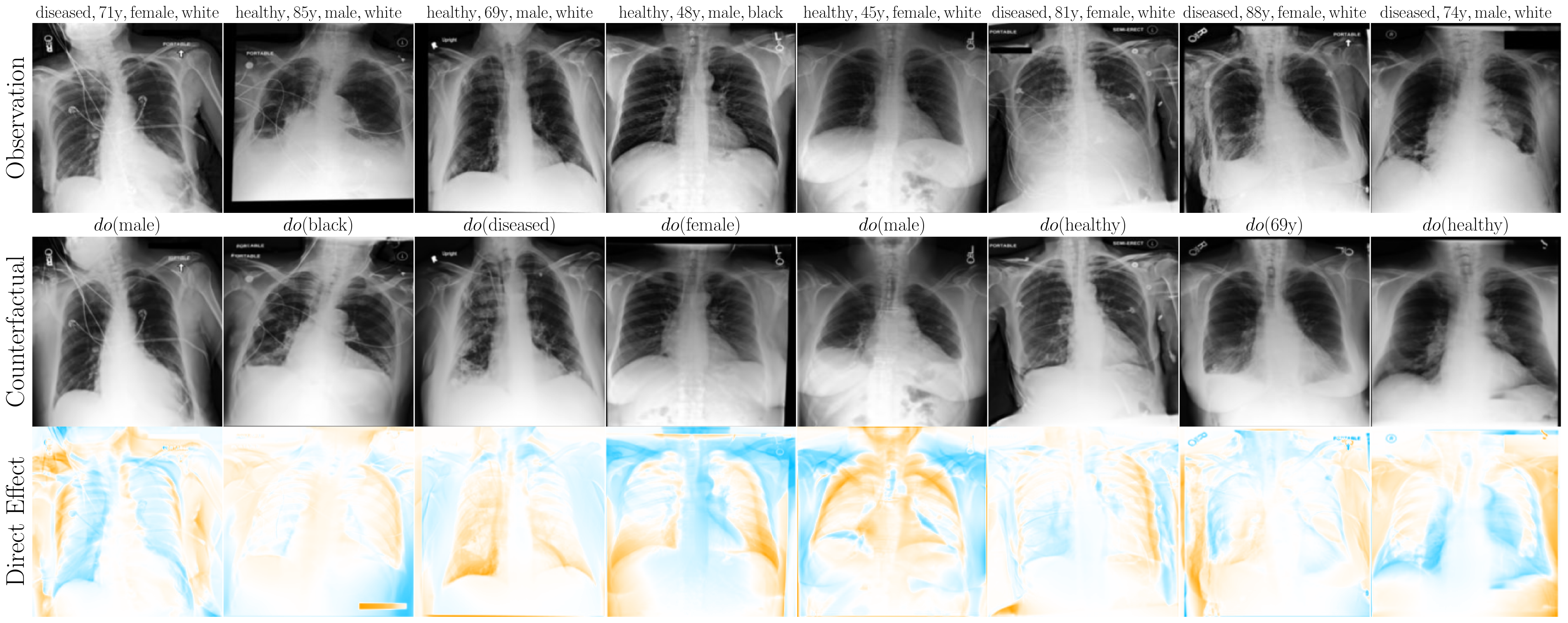}
    \caption{Extra qualitative counterfactual inference results with direct effect maps (500 ODE Euler solver steps). We observe fairly good identity-preservation overall, but counterfactual \textit{effectiveness} appears to be somewhat prioritised by the model. We anticipate that identity preservation can be further improved using inference time guidance strategies without having to retrain the model.
    }
    \label{appfig:mimicqual}
\end{figure}
\begin{figure}[!ht]
    \centering
    \includegraphics[width=\textwidth]{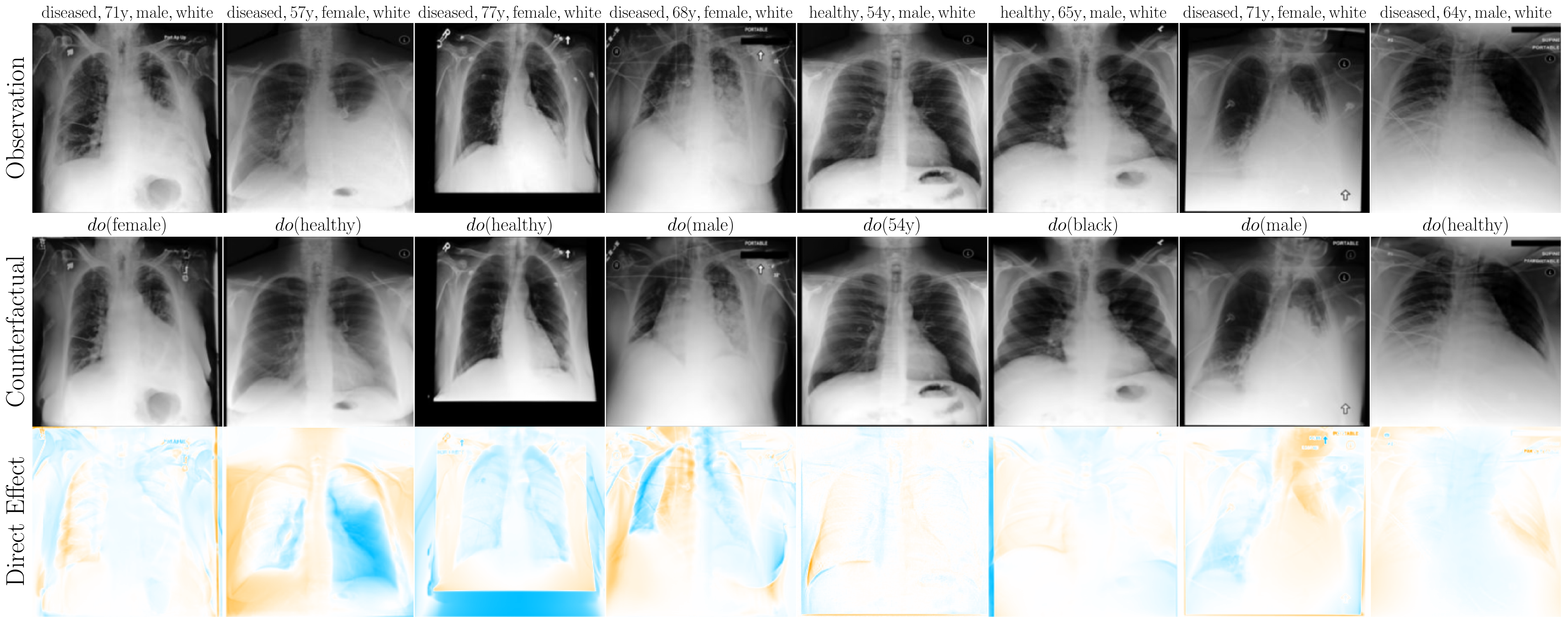}
    \\[10pt]
    \includegraphics[width=\textwidth]{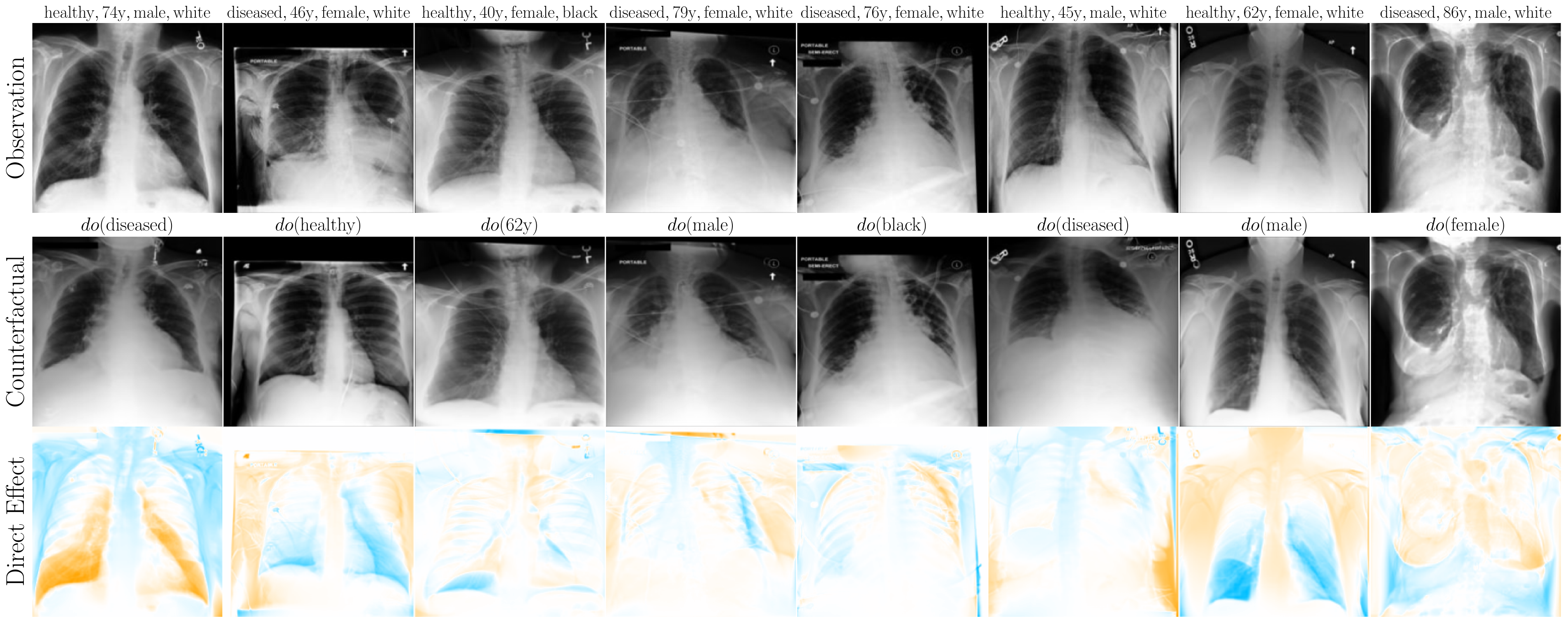}
    \\[10pt]
    \includegraphics[width=\textwidth]{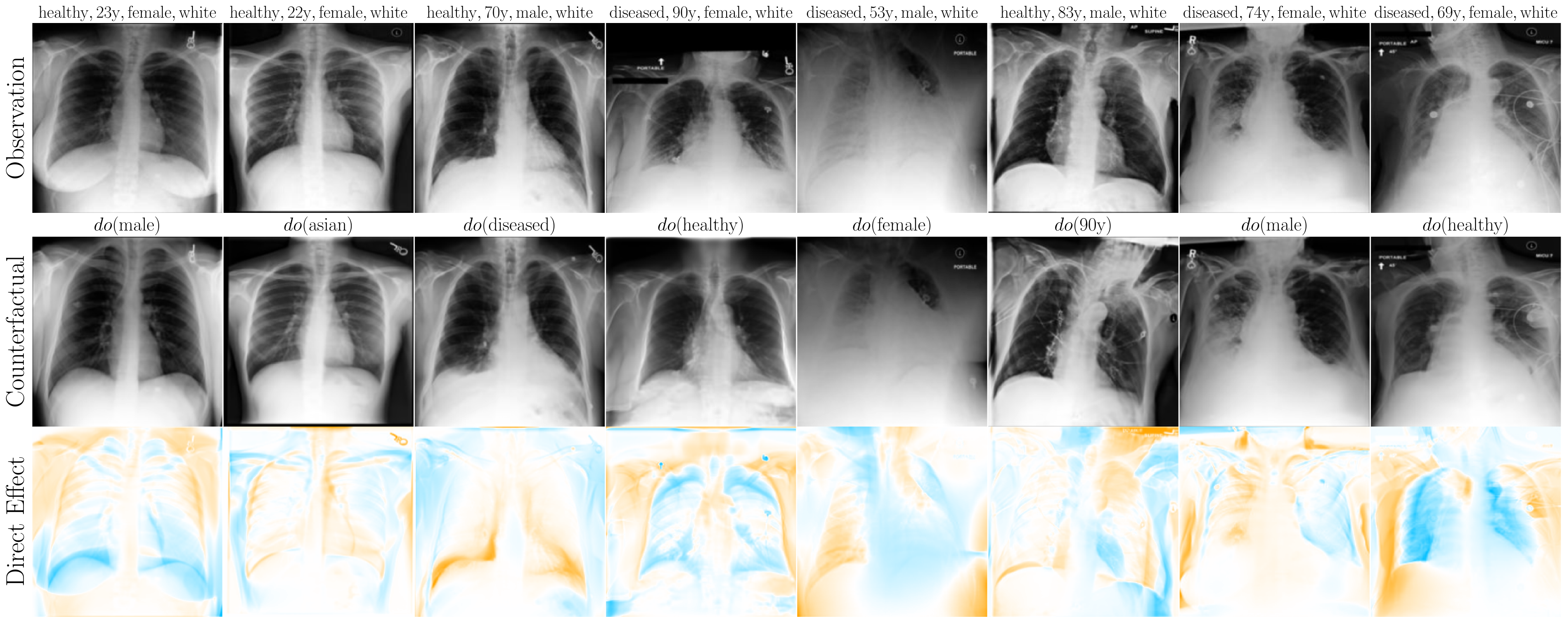}
    \caption{Qualitative counterfactual inference results with only 50 ODE Euler solver steps. We observe decent results considering the small number of ODE solving steps used (classical diffusion models would use, e.g. 1000). However, we find that identity preservation is more challenging in this regime, and spurious associations (e.g. background, artefacts~\parencite{perez-garcia_bond-taylor_radedit}) tend to be more prevalent. We expect inference-time classifier(-free) guidance to further improve the results.}
    \label{appfig:mimicqual2}
\end{figure}


\end{document}